\newcommand{\Norm}[1]{\left\|#1\right\|}
\def \S {\mathbf{S}}
\def \X {\mathcal{X}}
\def \O {\mathcal{O}}
\def \R {\mathbb{R}}
\def \w {\mathbf{w}}
\def \v {\mathbf{v}}
\def \x {\mathbf{x}}
\def \x {\mathbf{x}}
\def \1 {\mathbf{1}}
\def \I {\mathbb{I}}
\def \B {\mathcalB}
\newcommand{\inner}[2]{\left\langle #1,\, #2 \right\rangle}
\def \E {\mathbb{E}}
\def \x {\mathbf{x}}
\def \D {\mathcal{D}}
\def \w {\mathbf{w}}
\def \R {\mathbb{R}}
\def \S {\mathcal{S}}
\def \v {\mathbf{v}}
\def \I {\mathbb{I}}
\def \B {\mathcal{B}}
\def \X {\mathcal{X}}
\def\EX{{\mathbb{E}}}
\def\bw{\mathbf{w}}
\newtheorem{thm}{Theorem}
\newtheorem{lemma}{Lemma}
\newtheorem{cor}{Corollary}
\newtheorem{definition}[cor]{Definition}
\newtheorem{ass}{Assumption}
\icmltitlerunning{Provable Multi-instance Deep AUC Maximization with Stochastic Pooling}
\begin{document}

\twocolumn[
\icmltitle{Provable Multi-instance Deep AUC Maximization with Stochastic Pooling}



\icmlsetsymbol{equal}{*}

\begin{icmlauthorlist}
\icmlauthor{Dixian Zhu}{iowa}
\icmlauthor{Bokun Wang}{tamu}
\icmlauthor{Zhi Chen}{iowa-ee}
\icmlauthor{Yaxing Wang}{peking}
\icmlauthor{Milan Sonka}{iowa-ee}
\icmlauthor{Xiaodong Wu}{iowa-ee}
\icmlauthor{Tianbao Yang}{tamu}
\end{icmlauthorlist}

\icmlaffiliation{iowa}{Department of Computer Science, University of Iowa, IA, USA}
\icmlaffiliation{tamu}{Department of Computer Science and Engineering, Texas A\&M University, College Station, TX, USA}
\icmlaffiliation{iowa-ee}{Department of Electrical and Computer Engineering, University of Iowa, IA, USA}
\icmlaffiliation{peking}{Beijing Institute of Ophthalmology, Beijing Tongren Hospital, Capital Medical University, Beijing, China}

\icmlcorrespondingauthor{Dixian Zhu, Tianbao Yang}{dixian-zhu@uiowa.edu, tianbao-yang@tamu.edu}

\icmlkeywords{Machine Learning, ICML}

\vskip 0.3in
]



\printAffiliationsAndNotice{} 

\begin{abstract}
 This paper considers a novel application of deep AUC maximization (DAM) for multi-instance learning (MIL), in which a single class label is assigned to a bag of instances (e.g., multiple 2D slices of a CT scan for a patient). We address a neglected yet non-negligible computational challenge of MIL in the context of DAM, i.e., bag size is too large to be loaded into {GPU} memory for backpropagation, which is required by the standard pooling methods of MIL.  To tackle this challenge, we propose variance-reduced stochastic pooling methods in the spirit of stochastic optimization by formulating the loss function over the pooled prediction as a multi-level compositional function. By synthesizing  techniques from stochastic compositional optimization and non-convex min-max optimization, we propose a unified and provable muli-instance DAM (MIDAM) algorithm with stochastic smoothed-max pooling or stochastic attention-based pooling, which only samples a few instances for each bag to compute a stochastic gradient estimator and to update the model parameter. We establish a similar convergence rate of the proposed MIDAM algorithm as the state-of-the-art DAM algorithms. Our extensive experiments on conventional MIL datasets and medical datasets demonstrate the superiority  of our MIDAM algorithm. {The method is open-sourced at~\url{https://libauc.org/}}.
\end{abstract}
\setlength{\textfloatsep}{2pt}
\setlength\abovedisplayskip{2pt}
\setlength\belowdisplayskip{2pt}
\section{Introduction}
Deep AUC maximization (DAM) has recently achieved great success for many AI applications due to its capability of handling imbalanced data~\cite{10.1145/3554729}. For example, it earned first place at Stanford CheXpert competition~\cite{irvin2019chexpert}, and state-of-the-art performance on other datasets~\cite{yuan2021large,wang2021advanced}. However,  a novel application of DAM for multi-instance learning (MIL) has not been studied in the literature.  

MIL refers to a setting where multiple instances are observed for an object of interest and only one label is given to describe that object. Many real-life applications can be formulated as MIL.  For example, the medical imaging data for diagnosing a patient usually consists of a series of 2D high-resolution images (e.g., CT scan), and only  a single label (containing a tumor or not) is assigned to the patient~\cite{7812612}.   MIL has a long history in machine learning and various methods have been proposed for traditional learning with tabular data~\cite{Babenko2008MultipleIL,DBLP:journals/corr/CarbonneauCGG16} and deep learning (DL) with unstructured data~\cite{Oquab15,8099499,pmlr-v80-ilse18a}.    The fundamental theorem of symmetric functions~\cite{NIPS2017_f22e4747,DBLP:journals/corr/QiSMG16}, inspires a general three-step approach for classifying a bag of instances: (i) a transformation of individual instances, (ii) a pooling of transformed instances using a symmetric (permutation-invariant) function, (iii) a transformation of pooled representation. 
A key in the implementation of the three steps is the symmetric function that takes the transformations of all instances as input and produces an output, which is also known as the {\bf pooling operation}. In the literature, various pooling strategies have been explored, e.g., max pooling, average pooling, and smoothed-max (i.e., log-exp-sum) pooling of predictions~\cite{lirias1654230},  attention-based pooling of feature representations~\cite{pmlr-v80-ilse18a}. 

However, to the best of our knowledge, none of the existing works have tackled the computational challenge of MIL in the context of DL when a bag is large due to the existence of multiple instances in the bag.  The limitation of computing resources (e.g., the memory size of GPU) might prevent loading all instances of a bag  at once, creating a severe computational bottleneck for training. For example, an MRI scan of the brain may produce up to hundreds of 2D slices of high resolution~\citep{calabrese2022university}. It is hard to process all slices of a patient at each iteration for DL. Even if the size of an image can be reduced to fit into the memory, the convergence performance will be compromised due to a small batch size (i.e., few patients can be processed due to many slices per patient). A naive approach to deal with this challenge is to use mini-batch stochastic pooling, i.e., only sampling a few instances from a bag for computing the pooled prediction and conducting the update. However, this naive approach does not ensure optimization of the objective that is defined using the pooling of all instances for each bag due to the error of mini-batch stochastic pooling.  


We tackle this challenge of multi-instance DAM in a spirit of stochastic optimization by (i) formulating  the pooled prediction as a compositional function whose inner functions  are expected functions over instances of that bag, and (ii) proposing efficient and provable stochastic algorithms for solving the {\it non-convex min-max optimization with a multi-level compositional objective function}. A key feature of the proposed algorithms is replacing the deterministic pooling over all instances of a bag by a {\it variance-reduced stochastic pooling (VRSP)}, whose computation only requires sampling a few instances from the bag. To ensure the optimization of the original objective, the VRSP is constructed following the principle of stochastic compositional optimization such that the variance of stochastic pooling estimators is reduced in the long term. In particular, the inner functions of the pooled prediction  are tracked and estimated by  moving average estimators separately for each bag. Based on VRSP, stochastic gradient estimators are computed for updating the model parameter, which can be efficiently implemented by backpropagation. 

Our contributions are summarized in the following: 
\begin{itemize}[noitemsep,topsep=0pt,parsep=0pt,partopsep=0pt]
\item We propose variance-reduced stochastic pooling estimators for both smoothed-max pooling and attention-based pooling. Building on these stochastic pooling estimators, we develop unified efficient algorithms of multi-instance DAM (MIDAM) based on a min-max objective for the two pooling operations. 
\item We develop novel convergence analysis of the proposed MIDAM algorithms by (i) proving the averaged error of variance-reduced stochastic pooling estimators over all iterations will converge to zero, and (ii) establishing a convergence rate showing our algorithms can successfully find an $\epsilon$-stationary solution of the min-max objective of DAM.  
\item We conduct extensive experiments of proposed MIDAM algorithms on conventional MIL benchmark datasets and emerging medical imaging datasets with high-resolution medical images,  demonstrating the better performance of our algorithms.
\end{itemize}

\section{Related Works}\vspace*{-0.05in}
In this section, we introduce previous works on AUC maximization, multi-instance learning, and medical image classification, and then discuss how they are related to our work.

{\bf Deep AUC maximization (DAM).}  Maximizing the area under the receiver operating characteristic curve (AUC), as an effective method for dealing with imbalanced datasets, has been vigorously studied for the last two decades~\cite{10.1145/3554729}.  Earlier studies focus on learning traditional models, e.g., SVM, decision tree~\citep{cortes2003auc,joachims2005support,ferri2002learning}. 
Inspired by the Wilcoxon-Man-Whitney statistic, a variety of pairwise losses and optimization algorithms have been studied for AUC optimization~\citep{gao2013one,zhao2011online,kotlowski2011bipartite,gao2015consistency,calders2007efficient,charoenphakdee2019symmetric}. 
Inspired by the min-max objective corresponding to the pairwise square loss function~\cite{ying2016stochastic}, stochastic algorithms have been developed for DAM~\citep{liu2019stochastic,yuan2021large}. In this work, we propose efficient and scalable methods for DAM under the multi-instance learning (MIL) scenario with real big-data applications.

{\bf Multi-instance learning.} Multi-instance learning (MIL) has been extensively studied and adopted for real applications since decades ago~\cite{ramon2000multi,andrews2002support,Oquab15,kraus2016classifying}. Usually, a simple MIL pooling strategy, that is, max-pooling over a data bag has been widely utilized. This idea has been incorporated with support vector machine (SVM) and neural networks~\cite{andrews2002support, Oquab15,wang2018multi}. Other pooling strategies have also been proposed, e.g., mean, smoothed-max (aka. log-sum-exponential), generalized mean, noisy-or, noisy-and~\cite{wang2018multi,ramon2000multi,keeler1990integrated,kraus2016classifying}. 
Recently, attention-based pooling was proposed for deep MIL~\cite{pmlr-v80-ilse18a}. 
It is worth noting that almost all the pooling strategies (except max-pooling) require loading all the data from a bag to do the computation, specifically backpropagation. However, there is still no existing method that considers mitigating the computational issue when the data size is too large even for a single data bag. 

{\bf Medical image classification.} In MRI/CT scans,  multiple slices of images are acquired at different locations  of the patient?s body, which not only improves the diagnostic capabilities but also lowers doses of radiation. Hence, a patient can be represented by a series of 2D slices. A traditional approach is to concatenate these 2D slices into a 3D image and then learn a 3D convolutional neural network (CNN)~\cite{s20185097}. However, this approach suffers from several drawbacks. 
First, it demands more computational and memory resources as processing high-resolution 3D images is more costly than processing 2D images. As a consequence, the mini-batch size  for back-propagation in training is compromised or the resolution is reduced, which can harm the learning capability. Third, it is more difficult to interpret the prediction of a DL model based on 3D images as radiologists still use 2D slices to make diagnostic decision~\citep{10.1117/1.JMI.7.5.051203}. To avoid these issues, we will investigate MIL and make it practical for medical image classification.

\vspace*{-0.05in}
\section{Preliminaries}\vspace*{-0.05in}
{\bf Notations.} Let $\X_i=\{\x_i^1, \ldots, \x_i^{n_i}\}$ denote a bag of data instances (e.g., 2D image slices of an MRI/CT scan). Let $\D=\{(\X_i, y_i), i=1,\ldots, n\}$ denote the set of labeled data, where  $y_i\in\{0, 1\}$ denotes the label associated with the bag $i$. Let $\D_+\subset\D$ only contain $D_+$ positive bags with $y_i=1$ and $\D_-\subset\D$ only contain $D_-$ negative bags with $y_i=0$. Without loss  of generality,  let $\w\in\R^{d}$ denote all weights to be learned, which includes the weights of the feature encoder network, the weights of the instance-level classifier, and the parameters in the attention-based pooling.   Let $e(\w_e; \x)\in\R^{d_o}$ be the instance-level representation encoded by a neural network $\w_e$, $\phi(\w; \x)\in [0,1]$ be the instance-level prediction score (after some activation function), and $h(\w; \X_i)\in [0,1]$ be the pooled prediction score of the bag $i$ over all its instances. Besides, $\sigma(\cdot)$ denotes the sigmoid activation.

{\bf Multi-instance Learning (MIL).} We work under the standard MIL assumption that (i) an instance can be associated with a label and (ii) a bag is labeled positive if at least one of its instances has a positive label, and negative if all of its instances have negative labels~\cite{DIETTERICH199731}. The assumption  implies that a MIL model must be permutation-invariant for the prediction function $h(\X)$. To achieve permutation invariant property, fundamental theorems of symmetric functions have been developed~\cite{NIPS2017_f22e4747,DBLP:journals/corr/QiSMG16}. In particular, \citet{NIPS2017_f22e4747} show that a scoring function for a set of instances $\X$, $h(\X)\in\R$, is a symmetric function  if and only if it can be decomposed as $h(\X) = g(\sum_{\x\in\X}\psi(\x))$, where $g$ and $\psi$ are suitable transformations. \citet{DBLP:journals/corr/QiSMG16} prove that for any $\epsilon>0$, a Hausdorff continuous symmetric function $h(\X)\in\R$ can be arbitrarily approximated by a function in the form $g(\max_{\x\in\X}\psi(\x))$, where max is the element-wise vector maximum operator and $\psi$ and $g$ are continuous functions. These theories provide support for several widely used pooling operators used for MIL.  

{\bf Max and smoothed-max pooling of predictions.}  The simplest approach is to take the maximum of predictions of all instances in the bag, i.e., $h(\w; \X) = \max_{\x\in\X}\phi(\w; \x)$. However, the max operation is non-smooth, which usually causes difficulty in optimization. In practice, a smoothed-max (aka. log-sum-exp) pooling operator is used instead: 
\begin{align}\label{eqn:softpool}
h(\w; \X) = \tau \log\left(\frac{1}{|\X|}\sum_{\x\in\X}\exp(\phi(\w; \x)/\tau)\right),
\end{align} where $\tau>0$ is a hyperparameter and $\phi(\w;\x)$ is the prediction score for instance $\x$.

{\bf Mean pooling of predictions.}  The mean pooling operator just takes the average of predictions of individual instances, i.e.,  $h(\w; \X) = \frac{1}{|\X|}\sum_{\x\in\X}\phi(\w; \x)$. Indeed, smoothed-max pooling interpolates between the max pooling (with $\tau=0$) and the mean pooling (with $\tau=\infty$). 

{\bf Attention-based Pooling.} Attention-based pooling was recently introduced for deep MIL~\cite{pmlr-v80-ilse18a}, which aggregates the feature representations using attention, i.e., 
\begin{align}\label{eqn:attpoolf}
E(\w; \X) = \sum_{\x\in\X}\frac{\exp(g(\w; \x))}{\sum_{\x'\in\X}\exp(g(\w; \x'))}e(\w_e; \x)
\end{align}
where 
$g(\w;\x)$ is a parametric function, e.g., $g(\w; \x)=\w_a^{\top}\text{tanh}(V e(\w_e; \x))$, where $V\in\R^{m\times d_o}$ and $\w_a\in\R^m$. Based on the aggregated feature representation, the bag level prediction can be computed by 
\begin{align}\label{eqn:attpool}
h(\w; \X) & = \sigma(\w_c^{\top}E(\w; \X))\\
&= \sigma\left(\sum_{\x\in\X}\frac{\exp(g(\w; \x))\delta(\w;\x)}{\sum_{\x'\in\X}\exp(g(\w; \x'))}\right), \nonumber 
\end{align}
where $\delta(\w;\x) = \w_c^{\top}e(\w_e; \x)$. 
In this paper, we will focus on smoothed-max pooling and attention-based pooling due to their generality and the challenge of handling them. 


\noindent{\bf Deep AUC Maximization (DAM).} AUC score can be interpreted as the probability of a positive sample   ranking higher than a negative sample~\cite{Hanley}, i.e., $\text{AUC}(h) = \EX_{\X, \X'}\bigl[\I(h(\w; \X) - h(\w; \X')\ge 0) \big | y=1, y'=0\bigr]$. 
In practice, one often replaces the indicator function in the above definition of AUC by a {\em convex surrogate loss} $\ell: \R \to \R^+$ which satisfies $\mathbb I(h(\w; \X') - h(\w;  \X)>  0) \le \ell(h(\w;  \X') - h(\w; \X))$~\cite{10.1145/1102351.1102399,herschtal2004optimising,Xinhua,pmlr-v28-kar13,Wang,DBLP:conf/icml/ZhaoHJY11,ying2016stochastic,fastAUC18,natole2018stochastic}.  
Hence,  empirical AUC maximization  can be formulated as
$\min_{\bw\in \R^d}  \hat\EX\bigl[\ell(h(\w; \X') - h(\w; \X))|y=1, y'=0 \bigr]$, where $\hat\EX$ is the empirical average over data in the training set $\D$. 

Since optimizing the pairwise formulation is not suitable in some learning scenarios (e.g., online learning, federated learning)~\cite{ying2016stochastic,guo2020communication}, recent works of DAM have followed the line of min-max optimization~\cite{yuan2021large,liu2019stochastic}. Denote $c$ as a margin parameter and $\hat \EX_{i\in\D}$ as the empirical average over $i\in\D$. The objective is: 

	\begin{align}\label{eq:MI_DAM}
	&\min_{\w\in\R^d,(a,b)\in\R^2}\max_{\alpha\in\Omega}F\left(\w,a,b,\alpha\right):=\\\nonumber
 &\underbrace{\hat\EX_{i\in\D_+}\left[(h(\w; \X_i) - a)^2 \right]}_{F_1(\w, a)} + \underbrace{\hat\EX_{i\in\D_-}\left[(h(\w; \X_i) - b)^2 \right]}_{F_2(\w, b)} \\\nonumber
	& + \underbrace{\alpha (c+ \hat\EX_{i\in\D_-}h(\w; \X_i) - \hat\EX_{i\in\D_+}h(\w; \X_i)) - \frac{\alpha^2}{2}}_{F_3(\w, \alpha)},
	\end{align}
	where the first term is  the variance of prediction scores of positive data, the second term is  the variance of prediction scores of negative data.  The maximization over $\alpha\in\Omega$ yields a term that aims to push the mean score of positive data to be far away from the mean score of negative data. When $\Omega=\R$, the above min-max objective was shown to be equivalent to the pairwise square loss formulation~\cite{ying2016stochastic}, and when $\Omega=\R^+$, the above objective is the min-max margin objective proposed in~\cite{yuan2021large}. It is notable that we use conditional expectation given positive or negative labels instead of joint expectation over $(\X_i, y_i)$ as in~\cite{ying2016stochastic,yuan2021large,liu2019stochastic}. The reason is that we consider the batch learning setting and it was found in~\cite{zhubenchmark} sampling positive and negative data separately at each iteration is helpful for improving the performance.  

\vspace*{-0.05in}
\section{Multi-instance DAM} 
\vspace*{-0.05in}
Although efficient stochastic algorithms have been developed for DAM, a unique challenge exists in multi-instance DAM due to the computing of the pooled prediction $h(\w; \X)$. For example, in smoothed-max pooling  computing $h(\w; \X_i) = \tau \log(\frac{1}{|\X_i|}\sum_{\x\in\X_i}\exp(\phi(\w; \x)/\tau))$ requires processing all instances in the bag $\X_i$ to calculate their prediction scores $\phi(\w; \x), \forall\x\in\X_i$. Hence, one may need to load all instances of a bag into the GPU memory for forward propagation and backpropagation. This is prohibited if the size of each bag (i.e., the total sizes of all instances in each bag) is large. 

A naive approach to address this challenge is to replace the pooling over all instances with mini-batch pooling over randomly sampled instances of a bag. The mini-batch smoothed-max pooling can be computed as $h(\w; \B_i) = \tau \log(\frac{1}{|\B_i|}\sum_{\x\in\B_i}\exp(\phi(\w; \x)/\tau))$, where $\B_i\subset\X_i$ only contains a few sampled instances from the bag of all instances. However, this approach does not work since $h(\w; \B_i)$ is not an unbiased estimator, i.e., $\EX_{\B_i}h(\w; \B_i) \neq  h(\w; \X_i)$. As a result, the mini-batch pooled prediction would incur a large estimation error that depends on the number of sampled instances, i.e., $\EX_{\B_i}[(h(\w; \B_i) - h(\w; \X_i))^2]\leq O(\frac{1}{|\B_i|})$, which would lead to non-negligible optimization error~\cite{hu2020biased}. 
 
We propose a solid approach to deal with this challenge. Below, we first describe the high-level idea. Then, we present more details of variance-reduced stochastic  pooling estimators and the corresponding stochastic gradient estimators of the min-max objective. Finally, we present a unified algorithm for using both stochastic pooling methods.  

We regard the pooled prediction as two-level compositional functions $h(\w;\X_i)=f_2(f_1(\w; \X_i))$, where $f_2$ is a simple function that will be exhibited shortly for the two pooling operations,  and $f_1(\w; \X_i)=\EX_{\x\sim \X_i}[f_1(\w; \x)]$ involves average over the set of instances $\x\in\X_i$.  As a result, we cast the terms of objective into three-level compositional functions $f(f_2(f_1(\w)))$, where $f$ is a stochastic function. In particular, the first term in the min-max objective can be cast as  $\frac{1}{|\D_+|}\sum_{i\in\D_+}f(f_2(f_1(\w; \X_i)), a)$, where $f(\cdot, a)= (\cdot - a)^2$. The second term can be cast as $\frac{1}{|\D_-|}\sum_{i\in\D_-}f(f_2(f_1(\w; \X_i)), b)$.  As a result, the three terms of the objective can be written as
  	\begin{align*}
	F_1(\w, a)& =\frac{1}{|\D_+|}\sum_{y_i=1}f(f_2(f_1(\w; \X_i)), a)\nonumber\\
 F_2(\w, b)  & =\frac{1}{|\D_-|}\sum_{y_i=0}f(f_2(f_1(\w; \X_i)), b) \\\nonumber
	F_3(\w, \alpha)&= \alpha \bigg(c+ \frac{1}{|\D_+|}\sum_{y_i=1}f_2(f_1(\w; \X_i)) \\
 &\hspace*{0.5in}- \frac{1}{|\D_-|}\sum_{y_i=0}f_2(f_1(\w; \X_i)\bigg)- \frac{\alpha^2}{2} \nonumber.
	\end{align*}
To optimize the above objective, we need to compute a stochastic gradient estimator. 
Let us consider the gradient of the first term in terms of $\w$, i.e., 
\begin{align}\label{eqn:F1g}
&\nabla_\w F_1(\w, a) = \frac{1}{|\D_+|}\cdot\\
&\sum_{y_i=1} \nabla f_1(\w; \X_i)\nabla f_2(f_1(\w; \X_i))\nabla_1 f(f_2(f_1(\w; \X_i)), a),\nonumber
\end{align}
where $\nabla_1$ denotes the partial gradient in terms of the first argument. The key challenge lies in computing the innermost function $f_1(\w; \X_i)$ and its gradient $\nabla f_1(\w; \X_i)$. Due to that the functional value $f_1(\w; \X_i)$ is inside non-linear functions $f_2, f$, one needs to compute an estimator of $f_1(\w; \X_i)$ to ensure the convergence for solving the min-max problem. To this end, we will follow stochastic compositional optimization techniques to track and estimate $f_1(\w; \X_i)$ for each bag $\X_i$ separately such that their variance is reduced in the long term~\cite{wang2022finite}. 
\vspace*{-0.05in}
\subsection{Variance-reduced Stochastic Pooling (VRSP) Estimators and Stochastic Gradient Estimators}\vspace*{-0.05in}
 We write the smoothed-max pooling in~(\ref{eqn:softpool}) as  $h(\w; \X_i) = f_2(f_1(\w; \X_i))$, where $f_1, f_2$ are defined as:
 
\begin{align*}
&f_1(\w; \X_i)=\frac{1}{|\X_i|}\sum_{\x_i^j \in \X_i} \exp(\phi(\w; \x_i^j)/\tau),\\
&f_2(s_i) =\tau\log(s_i).
\end{align*}
We express the attention-based pooling in \eqref{eqn:attpool} as $h(\w;\X_i) = f_2(f_1(\w;\X_i))$, where $f_1$, $f_2$ are defined as: 
\begin{align*}
& f_1 (\w;\X_i) = \begin{bmatrix}
\frac{1}{|\X_i|}\sum_{\x_i^j \in \X_i} \exp(g(\w;\x_i^j)) \w_c^\top e(\w_e;\x_i^j) \\
\frac{1}{|\X_i|}\sum_{\x_i^j \in \X_i} \exp(g(\w;\x_i^j))
\end{bmatrix},\\
&f_2(s_i) = \sigma\left(\frac{s_{i1}}{s_{i2}}\right).
\end{align*}
One difference between the two pooling operators is that the inner function $f_1$ for attention-based pooling is a vector-valued function with two components.  For both pooling operators, the costs lie at the calculation of $f_1(\w; \X_i)$.  To estimate $f_1(\w; \X_i)$, we maintain a dynamic estimator denoted by $s_i$.  At the $t$-th iteration, we sample some positive bags $\S_+^t\subset\D_+$ and some negative bags $\S_-^t\subset\D_-$. For those sampled bags $i\in \S_+^t\cup \S_-^t$,  we update $s_i^t$ by:
\begin{align}\label{eqn:ui} 
s_i^t = (1-\gamma_0) s_i^{t-1} + \gamma_0 f_1(\w^t;\B_i^t), i\in\S_+^t\cup \S_-^t,
\end{align}
where $\B_i^t\subset\X_i$ refers to a mini-batch of instances sampled from $\X_i$ and $\gamma_0\in[0,1]$ is a hyperparameter.  For smoothed-max pooling, $s_i^t$ is computed by 
\begin{align}\label{eqn:sm}
s_i^t = (1-\gamma_0) s_i^{t-1} + \frac{\gamma_0 }{|\B^t_i|}\sum_{\x_i^j \in \B^t_i} \exp(\phi(\w^t; \x_i^j)/\tau), 
\end{align}
and for attention-based pooling, $s_i^t$ is computed by  
\begin{align}\label{eqn:ab}
s^t_i &=  (1-\gamma_0)s^{t-1}_i  + \gamma_0\cdot\\
 &\begin{bmatrix}
	\frac{1}{|\B^t_i|}\sum_{\x_i^j \in \B^t_i} \exp(g(\w^t;\x_i^j)) \delta(\w^t;\x_i^j) \\
    \frac{1}{|\B^t_i|}\sum_{\x_i^j \in \B^t_i} \exp(g(\w^t;\x_i^j))
\end{bmatrix}. \nonumber
	\end{align}
With $s_i^t$, we refer to $f_2(s_i^t)$ as the variance-reduced stochastic pooling (VRSP) estimator.  We will prove in the next section that the moving average estimators $s_i^t$ will ensure the averaged estimation error $\frac{1}{T}\sum_{t=0}^{T-1}\|s_i^t - f_1(\w^t; \X_i)\|^2$ for all bags across all iterations will converge to zero as $T\rightarrow\infty$ by properly updating the model parameter and setting the hyper-parameters. As a result, the following lemma will guarantee that the stochastic pooling estimator $f_2(s_i^t)$ will have a diminishing error in the long term. 
\vspace*{-0.05in}\begin{lemma}
If $f_2$ is continuously differentiable on a compact domain and there exists $c>0$ such that $f_2$ is $c$-Lipschitz continuous on that domain, then 
$(f_2(s_i^t) - f_2(f_1(\w^t;\X_i))^2 \leq c^2\|s_i^t - f_1(\w^t; \X_i)\|^2$ for $s_i^t, f_1(\w^t;\X_i) \in \mathrm{dom} f_2$.
\end{lemma}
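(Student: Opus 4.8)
The claim is essentially the squared form of the Lipschitz estimate, so the plan is to invoke the hypotheses directly rather than to construct any new machinery. First I would fix the iteration $t$ and the bag $i$ and abbreviate $u = s_i^t$ and $v = f_1(\w^t;\X_i)$, both of which lie in $\mathrm{dom}\, f_2$ by assumption; this membership is exactly what licenses evaluating $f_2$ at each point. Since $f_2$ is assumed $c$-Lipschitz continuous on the compact domain, I would immediately write $|f_2(u) - f_2(v)| \le c\|u - v\|$ and then square both sides. Because both sides are nonnegative, squaring preserves the inequality and yields $(f_2(u) - f_2(v))^2 \le c^2\|u - v\|^2$, which is the desired bound after substituting back $u = s_i^t$ and $v = f_1(\w^t;\X_i)$.

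For completeness I would also indicate where the constant $c$ originates, in case one prefers to start only from the continuous-differentiability hypothesis. On a compact domain a continuously differentiable $f_2$ has a continuous gradient, so $\sup \|\nabla f_2\| =: c < \infty$ is finite and attained. Assuming the domain is convex, or simply restricting attention to the segment joining $u$ and $v$ (which the moving-average update in~\eqref{eqn:ui} keeps inside the domain since it is a convex combination of in-domain iterates), the fundamental theorem of calculus gives $f_2(u) - f_2(v) = \int_0^1 \nabla f_2(v + \theta(u - v))^\top (u - v)\, d\theta$. Applying Cauchy--Schwarz together with the gradient bound reproduces the $c$-Lipschitz estimate, after which the squaring step closes the argument.

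The only point requiring any care, and indeed the reason the continuous-differentiability hypothesis is stated at all, is guaranteeing that $f_2$ admits a finite Lipschitz constant on the relevant region and that both $s_i^t$ and $f_1(\w^t;\X_i)$ stay inside that region. Once the Lipschitz property is in hand the inequality is immediate, so I do not anticipate a genuine obstacle: the $c$-Lipschitz bound is assumed outright in the statement, and the mildly delicate part is only the derivation of that bound from differentiability, which is handled by the convexity/segment remark above.
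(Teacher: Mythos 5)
Your proof is correct and is exactly the argument the paper intends: the lemma follows immediately from the assumed $c$-Lipschitz continuity of $f_2$ by squaring both (nonnegative) sides of $|f_2(s_i^t) - f_2(f_1(\w^t;\X_i))| \le c\|s_i^t - f_1(\w^t;\X_i)\|$, which is why the paper states it without a separate proof. Your additional remark deriving the Lipschitz constant from continuous differentiability on a compact (convex) domain is a harmless elaboration of a hypothesis the lemma already assumes outright.
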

\vspace*{-0.1in}
Building on the VRSP estimators, a stochastic gradient estimator of the objective can be easily computed. In particular, The gradient of $f(f_2(f_1(\w; \X_i)))$ in terms of $\w^t$ can be estimated by $\nabla f_1(\w^t;\B_i^t)\nabla f_2(s_i^{t-1})\nabla_1 f(f_2(s_i^{t-1}), a^t)$, and a stochastic gradient estimator of $f_2(f_1(\w^t; \X_i))$ can be computed by $\nabla f_1(\w^t;\B_i^t)\nabla f_2(s_i^{t-1})$.  As a result,  the stochastic gradient estimators in terms of $\w, a, b, \alpha$ of the three terms $F_1(\w, a)$, $F_2(\w, b)$ and $F_3(\w, \alpha)$ of the objective are computed as following, respectively: \begin{align*}
&G^t_{1,\w} = \hat\E_{i\in\S_+^t}\nabla f_1(\w^t; \B_{i}^t) \nabla f_2(s^{t-1}_i)\nabla_1 f( f_2(s^{t-1}_i), a^t), \\
&G^t_{2,\w} = \hat\E_{i\in\S_-^t}\nabla f_1(\w^t; \B_{i}^t) \nabla f_2(s^{t-1}_i)\nabla_1 f( f_2(s^{t-1}_i), b^t),\\
&G^t_{3,\w} = \alpha^t \cdot\left(\hat\E_{i\in\S_-^t}\nabla f_1(\w^t; \B_{i}^t) \nabla f_2(s^{t-1}_i)\right.\\
 & \hspace*{1.0in} \left.- \hat\E_{i\in\S_+^t}\nabla f_1(\w^t; \B_{i}^t) \nabla f_2(s^{t-1}_i)\right),\\
&G^t_{1,a}  = \hat\E_{i\in\S_+^t} \nabla_2 f( f_2(s^{t-1}_i)  , a^t),\\
 &G^t_{2, b} =\hat\E_{i\in\S_-^t} \nabla_2 f( f_2(s^{t-1}_i)  , b^t)\\
    &G^t_{3,\alpha}   = c+ \hat\E_{i\in\S_-^t}f_2(s^{t-1}_i) - \hat\E_{i\in\S_+^t}f_2(s^{t-1}_i),
\end{align*}
where $\nabla f_1(\w^t; \B_{i}^t)$ denotes the transposed Jacobian matrix of $f_1$ in terms of $\w$.   By plugging the explicit expression of (partial) gradients of $f_2$, $f$, we can compute these gradient estimators by backpropagation. 
With these stochastic gradient estimators, we will update the model parameter following the momentum update or the Adam update, which is presented in next subsection. 
\begin{algorithm}[t]
	\caption{The Unified MIDAM Algorithm}\label{alg:1}
	\begin{algorithmic}[1]
	\STATE Initialize $\w^0$,$s^0$,$\v^0$, $\eta, \eta'$, $\beta_1$, $\gamma_0$ 
		\FOR{$t=1,\ldots,T$}
		\STATE Sample a batch of positive bags $\S_+^t\subset\D_+$ and a batch of negative bags $\S_-^t\subset\D_-$
		\FOR{each $i \in \S^t=\S_+^t\cup \S_-^t$} 
        \STATE Sample a mini-batch of instances $\mathcal B^t_i\subset\X_i$ and update
	$s^t_i =  (1-\gamma_0)s^{t-1}_i  + \gamma_0 f_1(\w^t; \B_{i}^t)$
		\ENDFOR 
		\STATE Update stochastic gradient estimator of $(\w, a, b)$: 
		\begin{align*}
		   \v_1^t& =\beta_1\v_1^{t-1} + (1-\beta_1)(G^t_{1,\w} + G^t_{2,\w} + G^t_{3,\w})\\
		    \v_2^t &=\beta_1\v_2^{t-1} + (1-\beta_1)G^t_{1,a}\\
         \v_3^t &=\beta_1\v_3^{t-1} + (1-\beta_1)G^t_{2,b}
		\end{align*}
		\STATE Update $(\w^{t+1}, a^{t+1}, b^{t+1}) = (\w^t, a^t, b^t) - \eta \v^t$ (or the Adam-style update)
		\STATE Update 
		   $\alpha^{t+1}  = \Pi_{\Omega}[\alpha^t +  \eta' (G^t_{3,\alpha} - \alpha^t)]$ 
		\ENDFOR
	\end{algorithmic}
\end{algorithm}
\subsection{The Unified Algorithm}\vspace*{-0.05in}
Finally, we present the unified algorithm of MIDAM for using the two stochastic pooling estimators shown in Algorithm~\ref{alg:1}. The algorithm design is inspired by momentum-based methods for non-convex-strongly-concave min-max optimization~\cite{guo2021stochastic}. With stochastic gradient estimators in terms of the primal variable $(\w^t, a^t, b^t)$, we compute a moving average of their gradient estimators denoted by $\v^{t+1}$ in Step 7. Then we update the primal variable following the negative direction of $\v^{t+1}=(\v_1^{t+1}, \v_2^{t+1}, \v_3^{t+1})$, which is equivalent to a momentum update. The step size $\eta$ can be also replaced by the adaptive step size of Adam.  For updating the dual variable $\alpha$, the algorithm simply uses the stochastic gradient ascent update followed by a projection onto a feasible domain. 

{\bf Computational Costs:} Before ending this section, we discuss the per-iteration computational costs of the proposed MIDAM algorithm. The sampled instances include $\B^t=\bigcup_{i\in\S^t}\{\B_i^t\}$, where $\S^t=\S^t_+\cup\S^t_-$ denotes the sampled bags, and $\B_i^t$ denotes the sampled instances for the sampled bag $\X_i$. For updating the estimators $s_i^{t+1}, i\in\S^t$, we need to conduct the forward propagations on these sampled instances for computing  their prediction scores $\phi(\w^t; \x_i^j)$ (in smoothed-max pooling and attention-based pooling) and for computing their attentional factor $\exp(\phi_a(\w^t; \x_i^j))$.  For computing the gradient estimators, the main cost lies at the backpropagation for computing $\nabla f_1(\w^t; \B^t_i)$ of $i\in\S^t$, which are required in computing $G^t_{1,\w}, G^t_{2,\w}, G^t_{3,\w}$. Hence, with $S_+=|\S^t_+|$ and $S_-=|\S^t_-|$ and $B=|\B_i^t|$, the total costs of forward propogations and backpropogations are $O((S_++S_-)Bd)$, where $(S_++S_-)B$ is the number of instances of each mini-batch. Hence this cost is independent of the total size of each bag $N_i=|\X_i|$.

\vspace*{-0.05in}
\section{Convergence Analysis}\vspace*{-0.05in}

{\bf Approach of Analysis.} We first would like to point out the considered non-convex min-max multi-level compositional optimization problem is a new problem that has not been studied in the literature. To the best of our knowledge, the two related works are~\cite{yuan2022compositional,pmlr-v162-gao22c}. However, these two works only involve one inner functions to be estimated. In contrast, our problem involves many inner functions $f_1(\w; \X_i)$ to be estimated, while only a few of them are sampled for estimating their stochastic values. To tackle this challenge, we borrow a technique from~\cite{wang2022finite} which was developed for a minimization problem with two-level compositional functions and multiple inner functions. We leverage their error bound analysis for two-level stochastic pooling estimators and combine with that of momentum-based methods for min-max optimization~\cite{guo2021stochastic} to derive our final convergence.  

Since the objective $F(\w,a,b,\alpha)$ in \eqref{eq:MI_DAM} is 1-strongly concave w.r.t. $\alpha$, $\max_{\alpha \in \Omega} F(\w,a,b,\alpha)$ has unique solution and $\nabla \Phi(\w,a,b)$ is Lipschitz continuous if $\nabla F$ is Lipschitz continuous. Following \cite{lin2019gradient,rafique2018non}, we define $\Phi(\w,a,b)\coloneqq \max_{\alpha \in \Omega} F(\w,a,b,\alpha)$ and use $\Norm{\nabla \Phi(\w,a,b)}_2$ as an optimality measure.
\begin{definition}
$(\w,a,b)$ is called an $\epsilon$-stationary point ($\epsilon\geq 0$) of
a differentiable function $\Phi$ if $\Norm{\nabla \Phi(\w,a,b)}_2 \leq \epsilon$. 
\end{definition}\vspace*{-0.05in}
Our theory is established based on the following assumption.
\begin{ass}\label{asm:major}
({\bf Smoothed-max Pooling}) We assume that $\phi(\w;\x)$ is bounded, Lipschitz continuous, and has Lipschitz continuous gradient, i.e. there exist $B_\phi, C_\phi, L_\phi\geq 0$ such that $\Norm{\phi(\w;\x)}_2 \leq B_\phi$, $\Norm{\nabla \phi(\w;\x)}_2 \leq C_\phi$, $\Norm{\nabla^2 \phi(\w;\x)}_2 \leq L_\phi$ for each $\x$.

({\bf Attention-based Pooling}) We assume that $g(\w;\x)$ is bounded, Lipschitz continuous, and has Lipschitz continuous gradient and $\delta(\w;\x)$ is bounded, Lipschitz continuous, and has Lipschitz continuous gradient, i.e., there exist $B_g, C_g, L_g, B_\delta, C_\delta, L_\delta\geq 0$ such that $\Norm{g(\w;\x)}_2 \leq B_g$, $\Norm{\nabla g(\w;\x)}_2 \leq C_g$, $\Norm{\nabla^2 g(\w;\x)}_2 \leq L_g$, $\Norm{\delta(\w;\x)}_2 \leq B_g$, $\Norm{\nabla \delta(\w;\x)}_2 \leq C_g$, $\Norm{\nabla^2 \delta(\w;\x)}_2 \leq L_g$.
\end{ass}
\vspace*{-0.05in}We provide some examples in which the assumption above holds: First, objective \eqref{eq:MI_DAM} with smoothed-max pooling,  $\phi(\w;\x) =\sigma(\w^\top e(\w_e;\x))$, and pre-trained, fixed $\w_e$; Second, objective \eqref{eq:MI_DAM} with bounded weight norms (e.g. $\Norm{\w_e}, \Norm{\w_a}$, $\Norm{V}$) during the training process. Some prior works indicate that the weight norm may be bounded when  weight decay regularization is used~\citep{haochen2022theoretical}. 
\vspace*{-0.1in}\begin{thm}\label{thm:main}
Algorithm~\ref{alg:1} with stepsizes $\beta_1 = \O(\epsilon^2)$, $\gamma_0 = \O(\epsilon^2)$, $\eta = \O\left(\min\left\{\frac{S_+}{D_+},\frac{S_-}{D_-}\right\}\epsilon^2\right)$, $\eta' = \O(\epsilon^2)$ can find an $\epsilon$-stationary point in $T = \O\left(\max\left\{\frac{D_+}{S_+},\frac{D_-}{S_-}\right\}\frac{\epsilon^{-4}}{B}\right)$ iterations, where $S_+=|\S^t_+|$ and $S_-=|\S^t_-|$ and $B=|\B_i^t|$.  Besides, the average estimation error $\frac{1}{T}\sum_{t=0}^{T-1}\EX\left[\Norm{s_i^t - f_1(\w^t;\X_i)}^2\right] \leq \O(\epsilon^2),\forall i$.
\end{thm}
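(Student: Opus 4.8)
The plan is to bound the optimality measure $\frac{1}{T}\sum_{t=0}^{T-1}\E\Norm{\nabla\Phi(\w^t,a^t,b^t)}^2$ by designing a single Lyapunov potential that simultaneously tracks four coupled quantities: the primal objective $\Phi(\w^t,a^t,b^t)$, the momentum gradient-tracking error $\Norm{\v^t-\nabla_{(\w,a,b)}F(\w^t,a^t,b^t,\alpha^t)}^2$, the dual suboptimality $\Norm{\alpha^t-\alpha^*(\w^t,a^t,b^t)}^2$ where $\alpha^*$ is the unique inner maximizer (well-defined by the $1$-strong concavity of $F$ in $\alpha$), and the aggregated inner-estimation error $\sum_i\Norm{s_i^t-f_1(\w^t;\X_i)}^2$. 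I would show this weighted potential decreases in expectation per iteration up to residual terms that vanish as the stepsizes shrink, then telescope and optimize over $\beta_1,\gamma_0,\eta,\eta'$ to read off both claims of the theorem.

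First I would establish the preliminaries. Under Assumption~\ref{asm:major} I would verify, for both pooling choices, that $f_1$ is bounded with bounded and Lipschitz Jacobian and that $f_2$ is smooth and $c$-Lipschitz on the relevant compact domain; for attention-based pooling this requires checking that the argument of $f_2$ stays away from its singularity, e.g. the second coordinate $s_{i2}$ is bounded below since $\exp(g)\ge\exp(-B_g)>0$. These facts yield that the composite pooled prediction $h=f_2\circ f_1$, and hence $F$ and $\Phi$, have Lipschitz-continuous gradients with explicit constants, which legitimizes a descent-type inequality for $\Phi$. The preceding lemma is the bridge that converts the estimator error $\Norm{s_i^t-f_1(\w^t;\X_i)}$ into pooling-value error, which I would then propagate to bound the bias of the plug-in gradient estimators $G^t_{\cdot}$ by $\O(\Norm{s_i^{t-1}-f_1(\w^t;\X_i)})$ plus a mini-batch sampling variance of order $\O(1/B)$.

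The crux is bounding the aggregated inner-estimation error. Because only $S_+$ (resp. $S_-$) out of $D_+$ (resp. $D_-$) bags are refreshed each iteration, each $s_i$ behaves as a block-coordinate moving average touched with probability $p_i\approx S/D$. Following \cite{wang2022finite} I would derive, for each bag, a one-step recursion of the form $\E\Norm{s_i^{t}-f_1(\w^t;\X_i)}^2 \le (1-p_i\gamma_0)\,\E\Norm{s_i^{t-1}-f_1(\w^{t-1};\X_i)}^2 + \O(p_i\gamma_0/B) + \O(\Norm{f_1(\w^t;\X_i)-f_1(\w^{t-1};\X_i)}^2/(p_i\gamma_0))$, where the last (drift) term is controlled by the parameter movement $\eta^2\E\Norm{\v^t}^2$ through Lipschitzness of $f_1$. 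The contraction factor $p_i\gamma_0$ together with the drift-over-$p_i\gamma_0$ term is exactly what forces $\eta=\O(\min\{S_+/D_+,S_-/D_-\}\epsilon^2)$ and $\gamma_0=\O(\epsilon^2)$ and introduces the $\max\{D_+/S_+,D_-/S_-\}$ factor; summing over all bags and telescoping over $t$ then yields $\frac1T\sum_t\E\Norm{s_i^t-f_1(\w^t;\X_i)}^2\le\O(\epsilon^2)$, which is the second claim directly.

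Finally I would assemble the pieces. For the dual variable, $1$-strong concavity of $F$ in $\alpha$ gives a contraction $\E\Norm{\alpha^{t+1}-\alpha^*(\w^{t+1},a^{t+1},b^{t+1})}^2\le(1-c'\eta')\,\E\Norm{\alpha^t-\alpha^*(\w^t,a^t,b^t)}^2 + \O(\eta'^2) + \text{(primal drift)}$ for some $c'>0$; for the primal, a momentum-based (STORM-type) argument as in \cite{guo2021stochastic} controls $\Norm{\v^t-\nabla F}^2$, with the inner-estimation residuals entering only as bias. Combining the descent inequality for $\Phi$ with these three auxiliary recursions inside the weighted potential, choosing $\beta_1,\gamma_0,\eta',\eta$ at the stated scales, and telescoping yields $\frac1T\sum_t\E\Norm{\nabla\Phi(\w^t,a^t,b^t)}^2\le\O(\epsilon^2)$, hence $T=\O(\max\{D_+/S_+,D_-/S_-\}\epsilon^{-4}/B)$. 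I expect the main obstacle to be the bidirectional coupling together with the sporadic refreshing of the estimators: the inner-estimation error feeds the gradient bias and thus the primal progress, while the primal movement $\eta\v^t$ feeds the drift term in each $s_i$ recursion, and the block-coordinate (only-when-sampled) updating of the $s_i$ — which standard compositional analyses with a single inner function never face — is what makes balancing the contraction $p_i\gamma_0$ against the drift delicate and dictates the precise stepsize dependence on $S/D$.
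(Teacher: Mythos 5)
Your proposal follows essentially the same route as the paper's proof: a descent inequality for $\Phi$ combined with three summed recursions---for the momentum tracking error $\Norm{\v^t-\nabla\Phi(W^t)}^2$ (\`a la \citet{guo2021stochastic}), for the dual error $\Norm{\alpha^t-\alpha^*(W^t)}^2$ via $1$-strong concavity, and for the block-coordinate moving-average errors $\Norm{s_i^t-f_1(\w^t;\X_i)}^2$ imported from \citet{wang2022finite}---with exactly the coupling you describe (estimator error feeds gradient bias; primal motion $\eta\v^t$ feeds the drift term divided by $p_i\gamma_0$, which dictates $\eta=\O(\min\{S_+/D_+,S_-/D_-\}\epsilon^2)$). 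The only slip is in your one-step recursion for $s_i$: the mini-batch variance enters through the $\gamma_0$-weighted fresh sample and therefore contributes $\O(p_i\gamma_0^2/B)$ per iteration, not $\O(p_i\gamma_0/B)$; as written, dividing by the contraction factor $p_i\gamma_0$ when telescoping would leave an $\O(1/B)$ floor on the average estimation error rather than the needed $\O(\gamma_0/B)=\O(\epsilon^2/B)$, so the exponent on $\gamma_0$ must be corrected for both claims of the theorem to go through.
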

\vspace*{-0.1in}
{\bf Remark:} This theorem states that more bags and larger bag sizes lead to faster convergence of our algorithm, at the cost of more computational resources. The order of complexity $O(1/\epsilon^4)$ is the same as that of non-convex min-max optimization in~\cite{guo2021stochastic}. 
Proofs are in appendix. 
\section{Experiments}\vspace*{-0.05in}
In this section, we present some experimental results. We choose datasets from three categories, namely traditional tabular datasets, histopathological image datasets, and MRI/CT datasets. Statistics of these datasets are described in Tabel~\ref{tab:data-stats}. Details of these datasets will be presented later. 
\begin{table}[t]
  \centering
  \caption{Data statistics for the benchmark datasets}
    \resizebox{\columnwidth}{!}{%
    \begin{tabular}{llrrrr}
    \bottomrule
    \multicolumn{1}{l}{Data Format} & Dataset & \multicolumn{1}{l}{$D_+$} & \multicolumn{1}{l}{$D_-$} & \multicolumn{1}{l}{\makecell{average\\ bag size}} & \multicolumn{1}{l}{\#features} \\
    \hline
     & MUSK1 & 47    & 45    & 5.17  & 166 \\
          & MUSK2 & 39    & 63    & 64.69 & 166 \\
        Tabular  & Elephant & 100   & 100   & 6.1   & 230 \\
          & Fox   & 100   & 100   & 6.6   & 230 \\
          & Tiger & 100   & 100   & 6.96  & 230 \\
          
          \hline
          
    Histopathological & Breast Cancer & 26    & 32    & 672   & 32x32x3 \\
         Image  & Colon Ade. & 100  & 1000  & 256   & 32x32x3 \\
          \hline
    \multirow{3}[0]{*}{MRI/CT Scans} & PDGM  & 403   & 55    & 155   & 240x240x1 \\
          & OCT & 747   & 1935  & 31    & 256x256x1 \\
          \bottomrule
    \end{tabular}%
    }
  \label{tab:data-stats}%
\end{table}%

{\bf Baselines.} We mainly compare with two categories of approaches for MIL with different pooling operators. The first category is optimizing the CE loss by Adam optimizer with mean, smoothed-max (smx), max, attention-based (att) poolings, denoted by CE (XX), where XX is the name of a pooling.  The second category is optimizing the min-max margin AUC loss~\cite{yuan2021large} with the same set of poolings, denoted as DAM (XX). We note that for large-resolution medical image datasets, deterministic pooling is unrealistic due to limits of GPU memory. For example, the CE (att) method could consume about 22 Giga-Bytes GPU memory for PDGM dataset even with a single bag of data. Medical researchers also have raised concern for the GPU constraint of large size histopathologica images~\cite{tizhoosh2018artificial}. Hence, we implement the naive mini-batch based stochastic poolings for baselines, which are denoted by CE (MB-XX) and DAM (MB-XX) with XX being the name of a pooling. For medical image datasets, we also compare with two traditional baselines that treat multiple instances as 3D data in their given order and learn a 3D network by optimizing the CE loss  and the min-max margin AUC loss, which are denoted as CE (3D) and DAM (3D). Our methods are denoted as MIDAM (smx) and MIDAM (att) for using two stochastic pooling operations, respectively.  We fix the margin parameter as $0.1$ for DAM and MIDAM. For attention-based pooling, we use the one defined in~(\ref{eqn:attpoolf}) with an attentional factor $\exp(\w_a^{\top}\text{tanh}(Ve(\w_e; \x)))$ according to~\cite{pmlr-v80-ilse18a}.

\subsection{Results on Tabular Benchmarks}\vspace*{-0.05in}
Five benchmark datasets, namely, MUSK1, MUSK2, Fox, Tiger, Elephant~\cite{dietterich1997solving,andrews2002support}, are commonly used for evaluating MIL methods. For the MUSK1 and MUSK2 datasets, they contain drug molecules that will (or not) bind strongly to a target protein. Each molecule (a bag) may adopt a wide range of shapes or conformations (instances). A positive molecule has at least one shape that can bind well (although it is not known which one) and a negative molecule does not have any shapes that can make the molecule bind well~\cite{dietterich1997solving}. For Fox, Tiger, and Elephant datasets, each object contains features extracted from an image. Each positive bag is a bag that contains the animal of interest~\cite{andrews2002support}. 

We adopt a simple 2-layer feed-forward neural network (FFNN) as the backbone model, whose neuron number equals data dimension. We apply tanh as the activation function for the middle layer and sigmoid as a normalization function for prediction score for computing AUC loss function. We uniformly randomly split the data with 0.9/0.1 train/test ratio and run 5-fold-cross-validation experiments with 3 different random seeds (totally 15 different trials). The initial learning rate is tuned in \{1e-1,1e-2,1e-3\}, and is decreased by 10 fold at the end of the 50-th epoch and 75-th epoch over the 100-epoch-training period. 
For all experiments in this work, the weight decay is fixed as $1e-4$, and we fix $\eta'=1, (1-\beta_1)=0.9$ in our proposed algorithm decreasing by 2 fold at the same time with learning rate. We report the testing AUC based on a model with the largest validation AUC value. For each iteration, we sample 8 positive bags and 8 negative bags ($S_+=S_-=8$), and for each bag sample at most 4 instances for our  methods but use all instances for baselines, given that the dataset is small and bag size is not identical across all bags. The mean and standard deviation of testing AUC are presented in Table~\ref{Tab:tabular}~\footnote{The code is available at~\url{https://github.com/DixianZhu/MIDAM}}.

From the results, we observe that MIDAM (att) or MIDAM (smx) method achieves the best performance on these classical tabular benchmark datasets. This might sound surprising given that the DAM baselines use all instances for each bag for computing the pooling. To understand this, we plot the training and testing convergence curves (shown in Figure~\ref{fig:generalization} in Appendix~\ref{sec:mf} due to limit of space). We find that the better testing peformance of our MIDAM methods is probably due to that the stochastic sampling over instances  prevents overfitting (since training performance is worse) and hence improves the generalization (testing performance is better).  In addition, 
DAM is better than CE except for DAM (att). 

\begin{table}[t]
\caption{The testing AUC on benchmark datasets.}\label{Tab:tabular}
  \centering
     \resizebox{\columnwidth}{!}{\begin{tabular}{lllllll}
    \bottomrule
    Methods & MUSK1 & MUSK2 & Fox   & Tiger & Elephant \\
    \hline
    CE (mean) & 0.803(0.14) & 0.805(0.113) & 0.701(0.116) & 0.822(0.093) & 0.877(0.065) \\
    DAM (mean) & 0.832(0.147) & 0.818(0.079) & 0.647(0.111) & 0.842(0.085) & 0.897(0.053) \\
    CE (max) & 0.678(0.121) & 0.84(0.106) & 0.657(0.147) & 0.855(0.094) & 0.885(0.044) \\
    DAM (max) & 0.739(0.126) & 0.859(0.09) & 0.595(0.159) & 0.858(0.06) & 0.902(0.073) \\
    CE (smx) &  0.769(0.121) & 0.851(0.111)  &  0.668(0.117) & 0.865(0.078)  &  0.902(0.068) \\
    DAM (smx) & 0.806(0.118)  & 0.854(0.108)  &  0.66(0.138) &  0.867(0.07) & 0.902(0.052)  \\
    CE (att) & 0.808(0.112) & 0.76(0.122) & 0.705(0.13) & 0.834(0.09) & 0.883(0.092) \\
    DAM (att) & 0.768(0.139) & 0.757(0.154) & 0.69(0.123) & 0.848(0.067) & 0.872(0.074) \\
    \hline 
    MIDAM (smx) & \textbf{0.834(0.12)} & \textbf{0.905(0.068)} & 0.622(0.188) & 0.861(0.071) & 0.873(0.104) \\
    MIDAM (att) & 0.826(0.107) & 0.843(0.107) & \textbf{0.733(0.097)} & \textbf{0.867(0.066)} & \textbf{0.906(0.069)} \\
    \bottomrule
    \end{tabular}}%
\caption{The testing AUC on medical image datasets.}\label{Tab:meddata}
  \centering   
     \resizebox{\columnwidth}{!}{\begin{tabular}{lllll}
     \hline
          & \multicolumn{2}{c}{Histopathological Image} & \multicolumn{2}{c}{MRI/OCT 3D-Image} \\
          \hline
    Methods & Breast Cancer& Colon Ade.& PDGM& OCT\\
    \hline
    CE (3D) & 0.925(0.061)        &   0.724(0.165)    & 0.582(0.118) &  0.789(0.032)       \\
    DAM (3D) & 0.725(0.2)      &   0.846(0.075)    & 0.545(0.122) &    0.807(0.027)   \\
    \hline 
    CE (MB-mean) & 0.85(0.242)       &   0.883(0.042)    & 0.616(0.023) &  0.799(0.019)     \\
    DAM (MB-mean) & 0.875(0.137)      &   0.877(0.017)    & 0.635(0.113) &   0.839(0.029)     \\
    CE (MB-max) & 0.325(0.232)       &   0.856(0.032)    & 0.462(0.108) & 0.793(0.047)      \\
    DAM (MB-max) & 0.475(0.215)      &  0.825(0.044)    & 0.624(0.112) &  0.841(0.01)      \\
    CE (MB-smx) & 0.575(0.127)  & 0.863(0.031)  &  0.491(0.111) &  0.826(0.018)  \\
    DAM (MB-smx) & 0.725(0.184)  &  0.905(0.01) &  0.659(0.058) &  0.829(0.008)   \\
    CE (MB-att) & 0.9(0.146)      &    0.9(0.042)   & 0.564(0.072)&   0.823(0.017)     \\
    DAM (MB-att) & 0.875(0.112)      &  0.882(0.029)     & 0.624(0.112) &   0.842(0.013)    \\
    \hline 
    MIDAM-smx & 0.875(0.137)      &  \textbf{0.91(0.02)}   & \textbf{0.669(0.032)} &  \textbf{0.848(0.01)}       \\
    MIDAM-att & \textbf{0.95(0.1)}     &    0.893(0.08)   & 0.635(0.052) &   0.843(0.012)     \\
    \bottomrule
    \end{tabular}}%
\end{table}%

\setlength{\subfigtopskip}{-0.15cm}\begin{figure*}[h]
\vspace{-0.1 in}
    \centering
    \subfigure[att, Breast Cancer]{\includegraphics[scale=0.16]{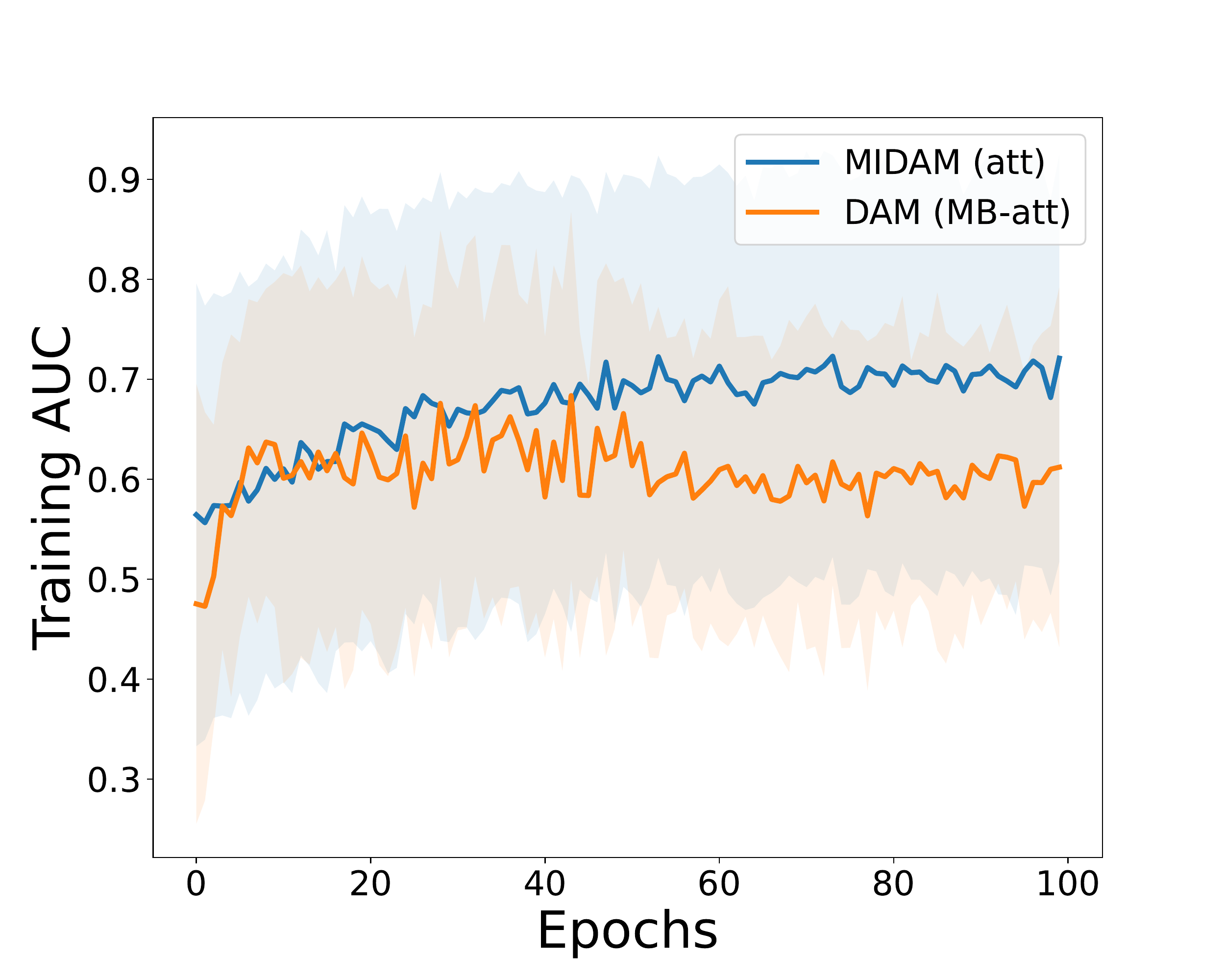}}
    \subfigure[att, Breast Cancer]{\includegraphics[scale=0.16]{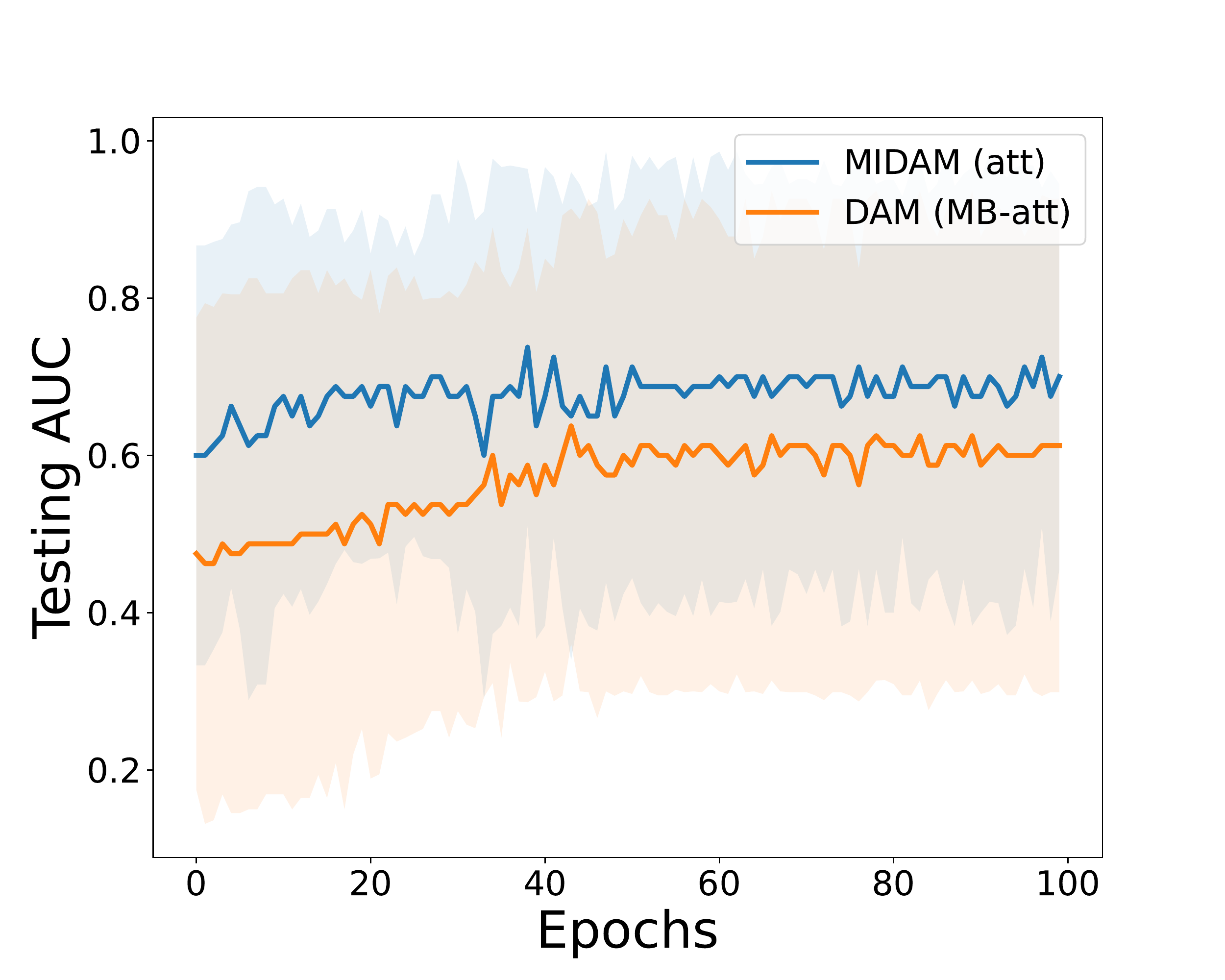}}
    \subfigure[MIDAM (smx), MUSK2]{\includegraphics[scale=0.16]{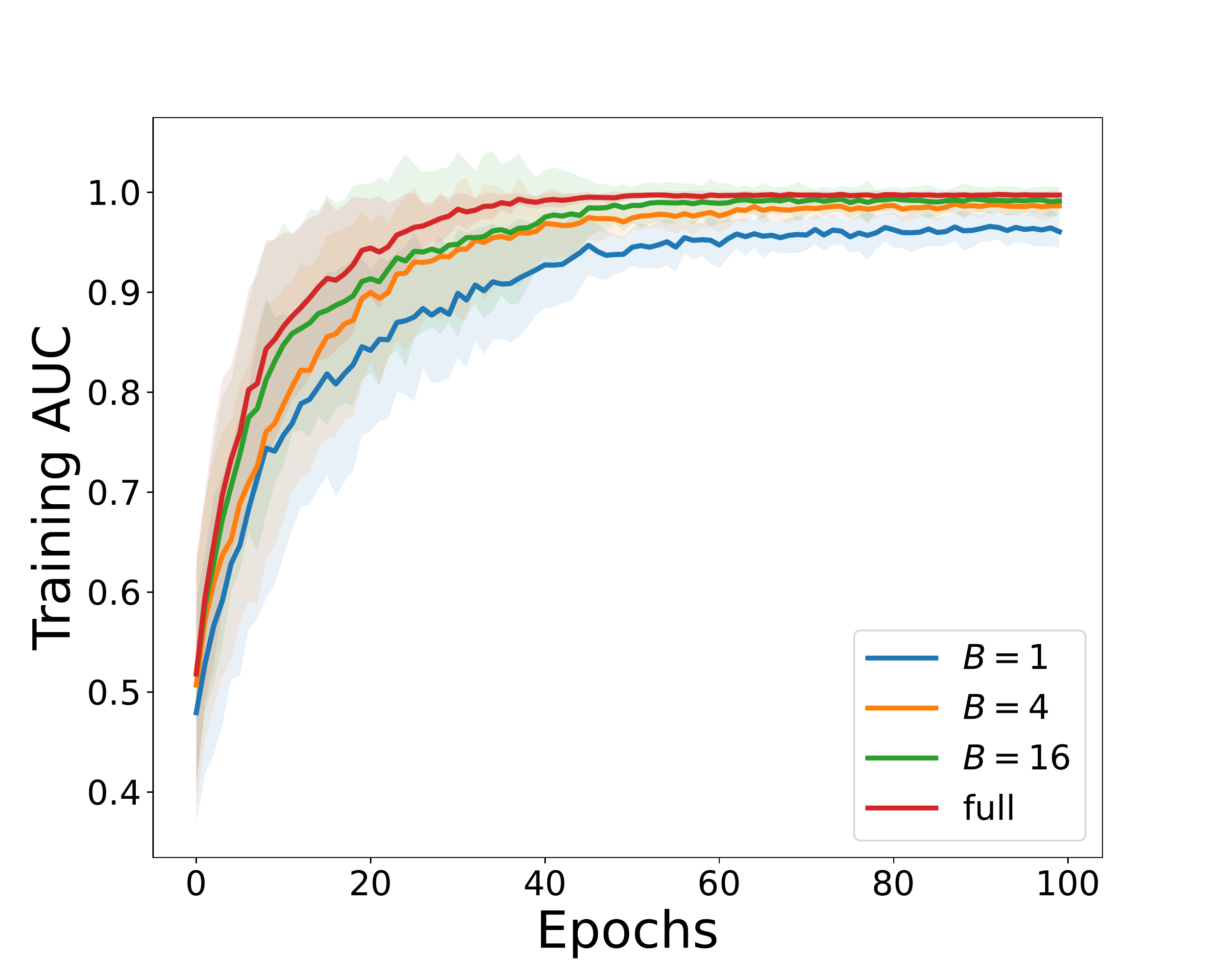}}
    \subfigure[MIDAM (att), MUSK2]{\includegraphics[scale=0.16]{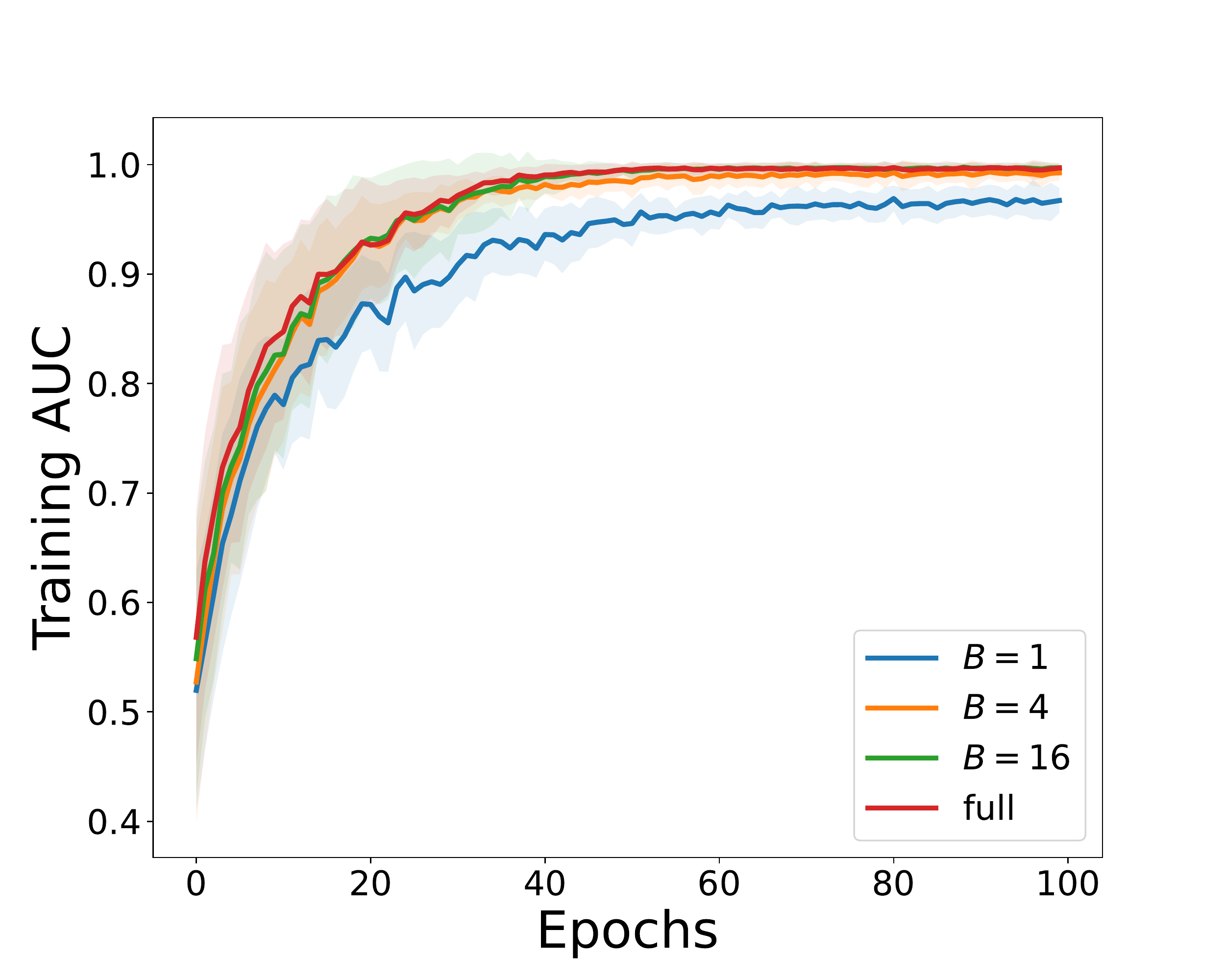}}
\vskip -0.15in
\subfigure[MIDAM (att),  Colon Ade.]{\includegraphics[scale=0.16]{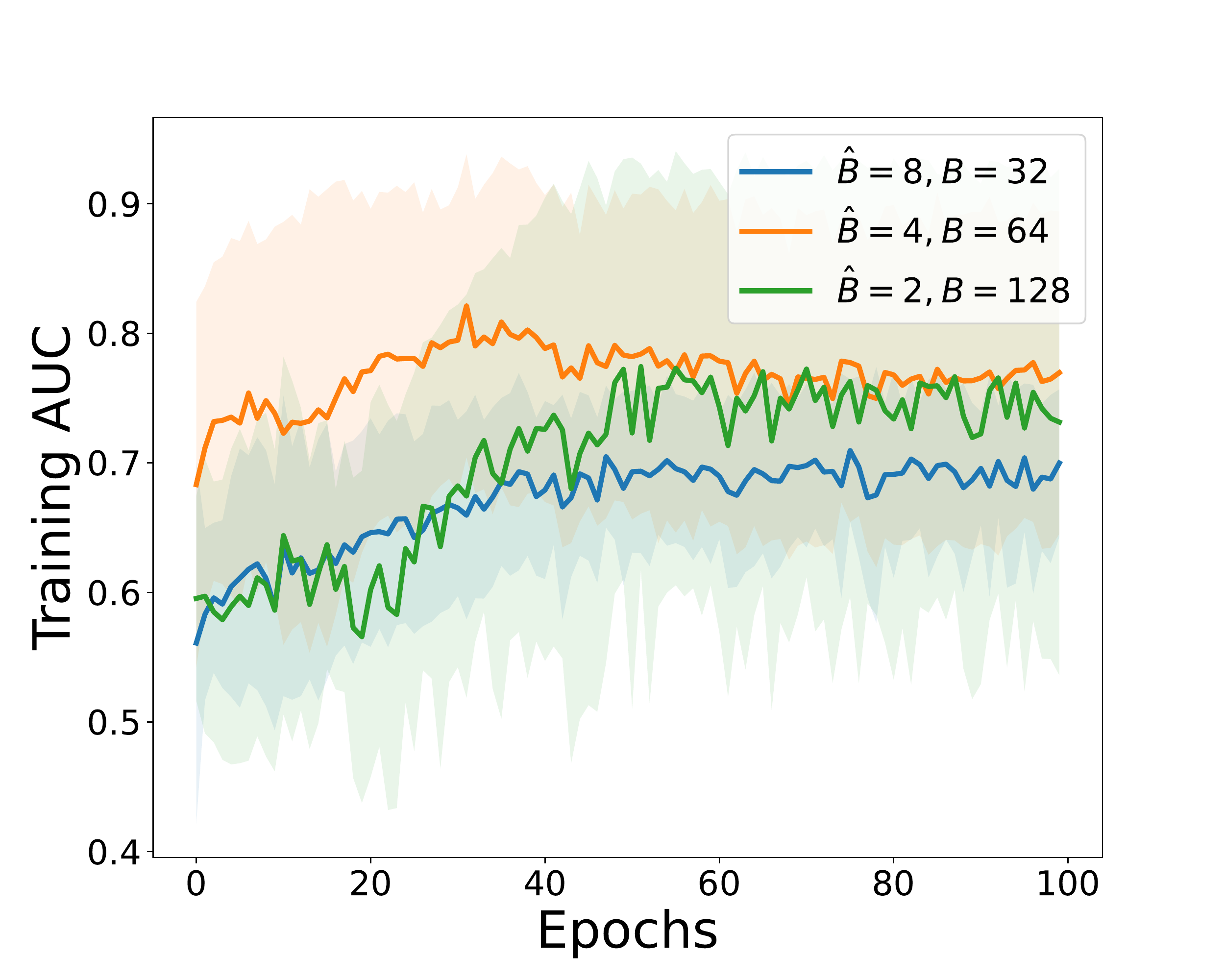}}
    \subfigure[MIDAM (att), Colon Ade.]{\includegraphics[scale=0.16]{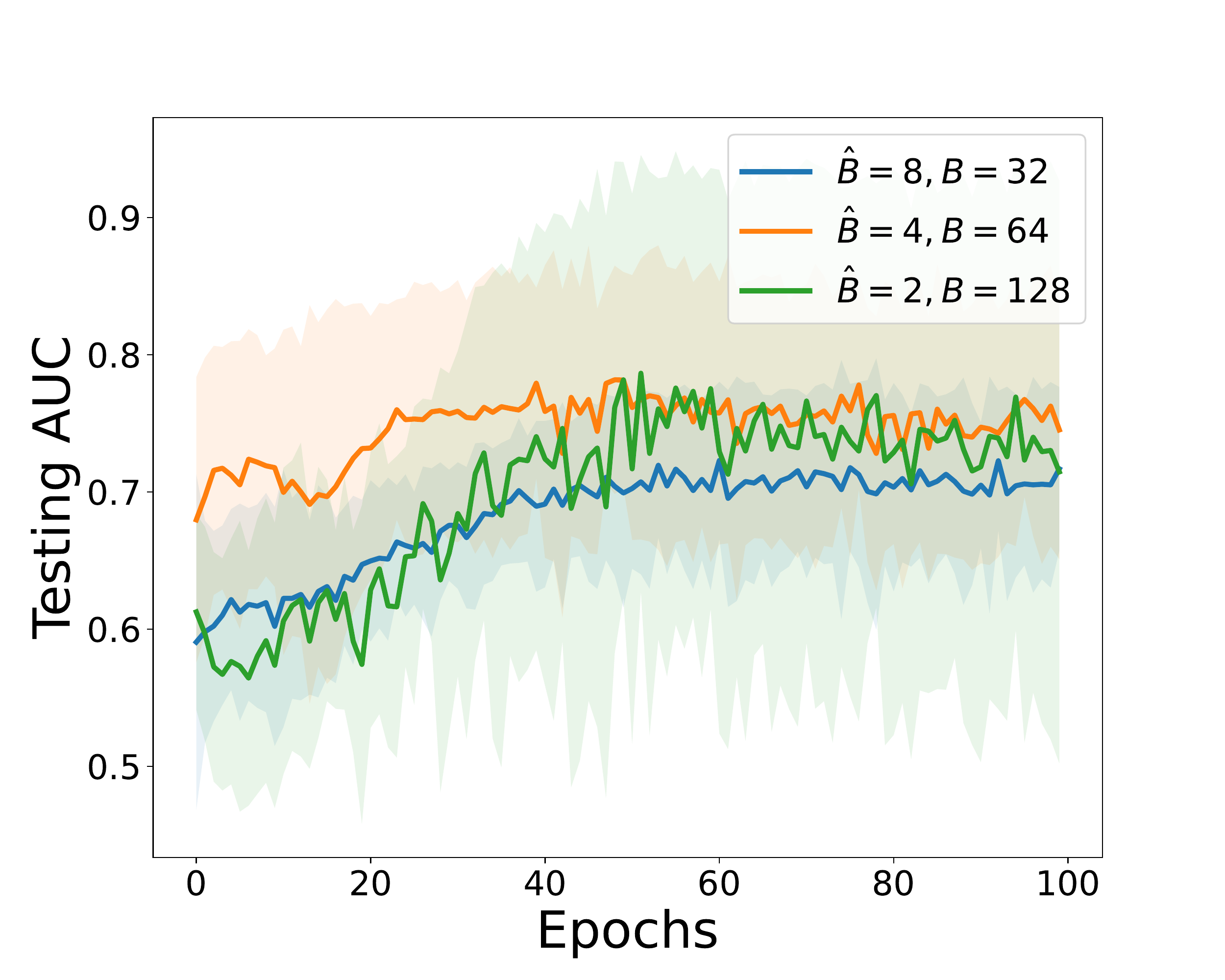}}
    \subfigure[MIDAM (smx), Colon Ade.]{\includegraphics[scale=0.16]{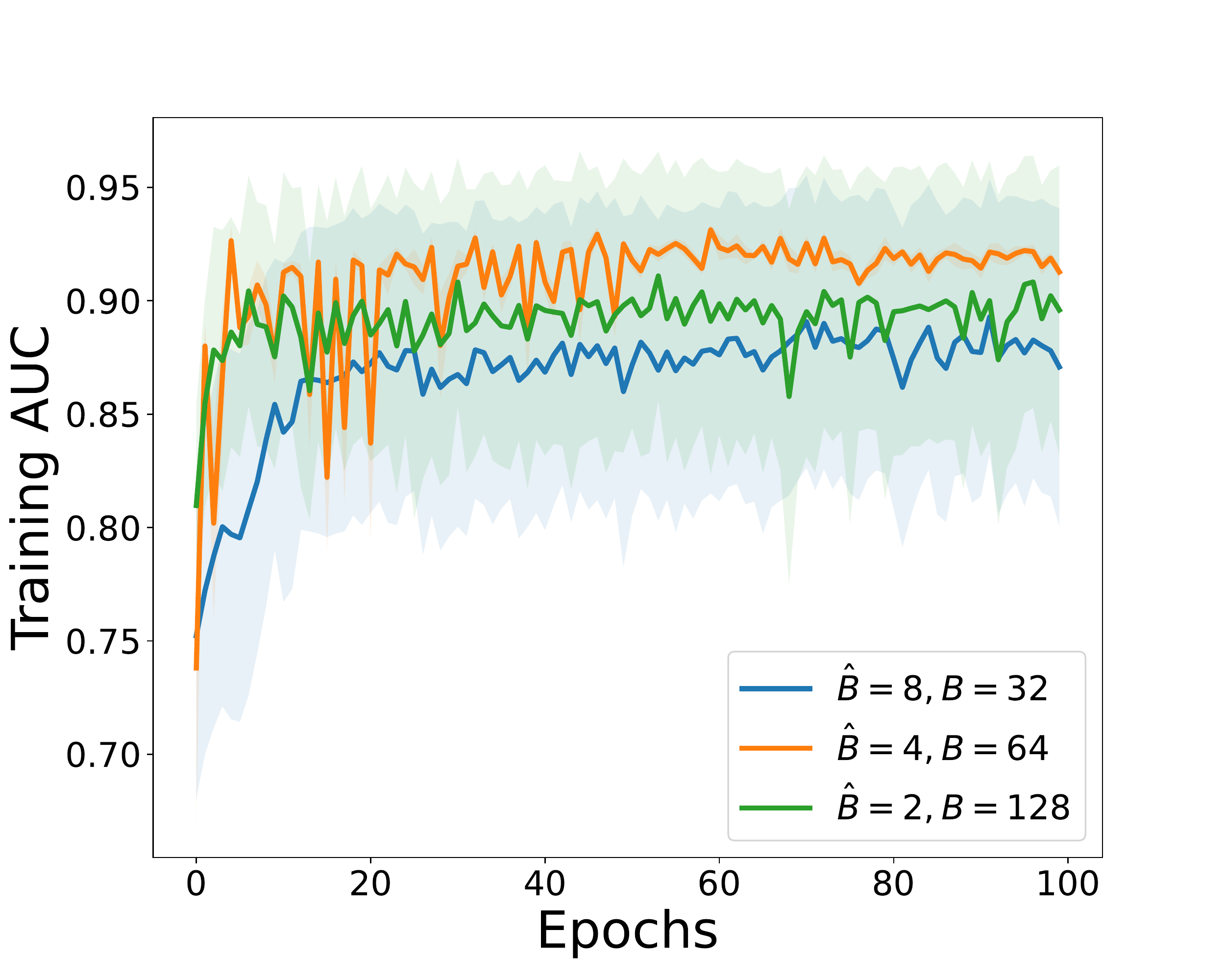}}
    \subfigure[MIDAM (smx), Colon Ade.]{\includegraphics[scale=0.16]{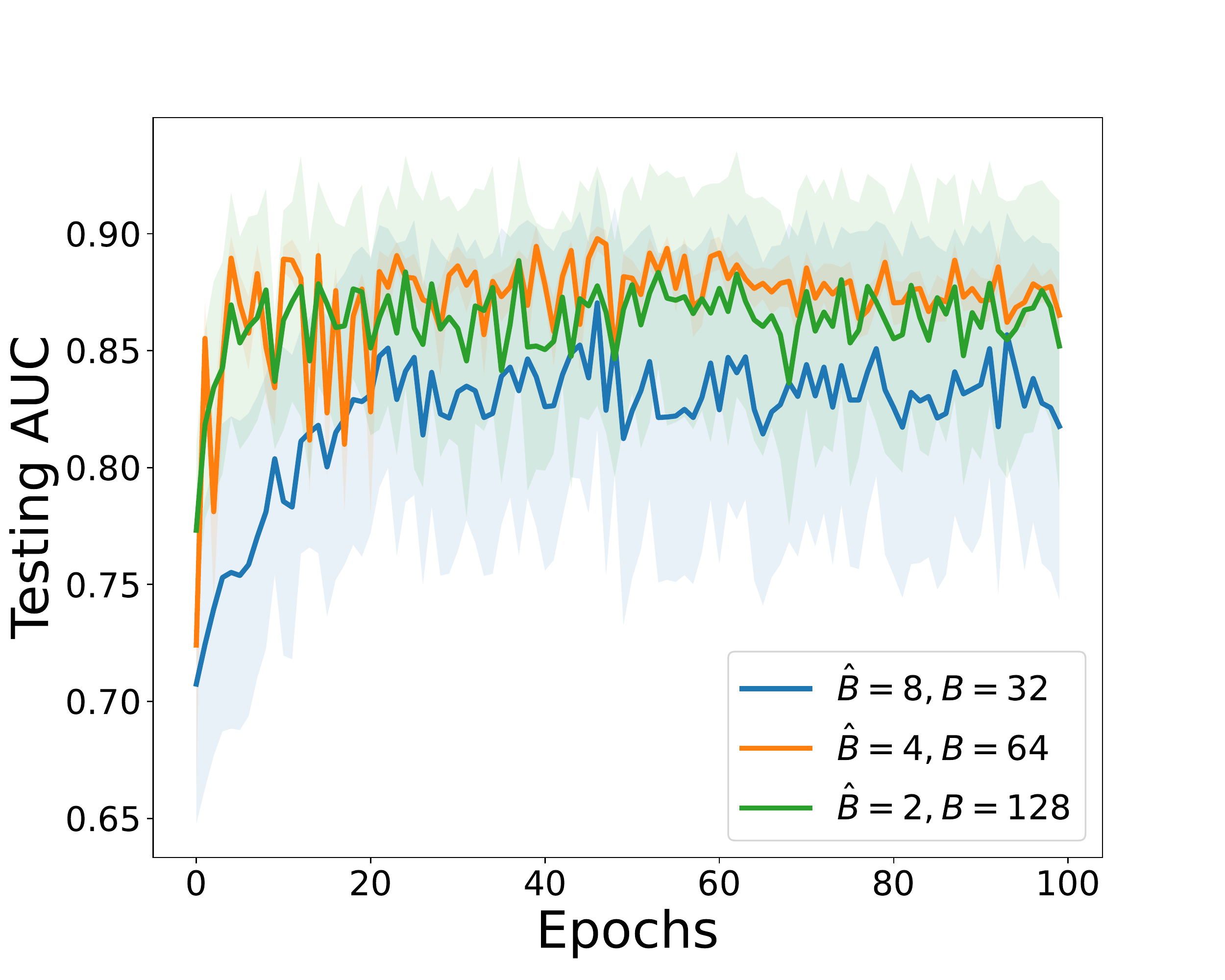}}
        \caption{(a, b): Convergence of training AUC for MIDAM (att) and DAM (MB-att) on Breast Cancer data with margin $c=0.1$ and learning rate tuned in \{1e-1,1e-2,1e-3\}; (c, d): Convergence of training AUC with different  instance-batch size by fixing bag-batch size $S_+=S_-=8$ on MUSK2 data; (e,f,g,h): Convergence of training and testing AUC with different bag-batch sizes $S_+=S_-=\frac{\hat B}{2}$ and  instance-batch sizes ($B$) on Colon Ade. data.} \label{fig:abs}
\end{figure*}


\begin{figure*}[htpb]
    \centering
   \subfigure[Positive image]{\includegraphics[scale=0.35]{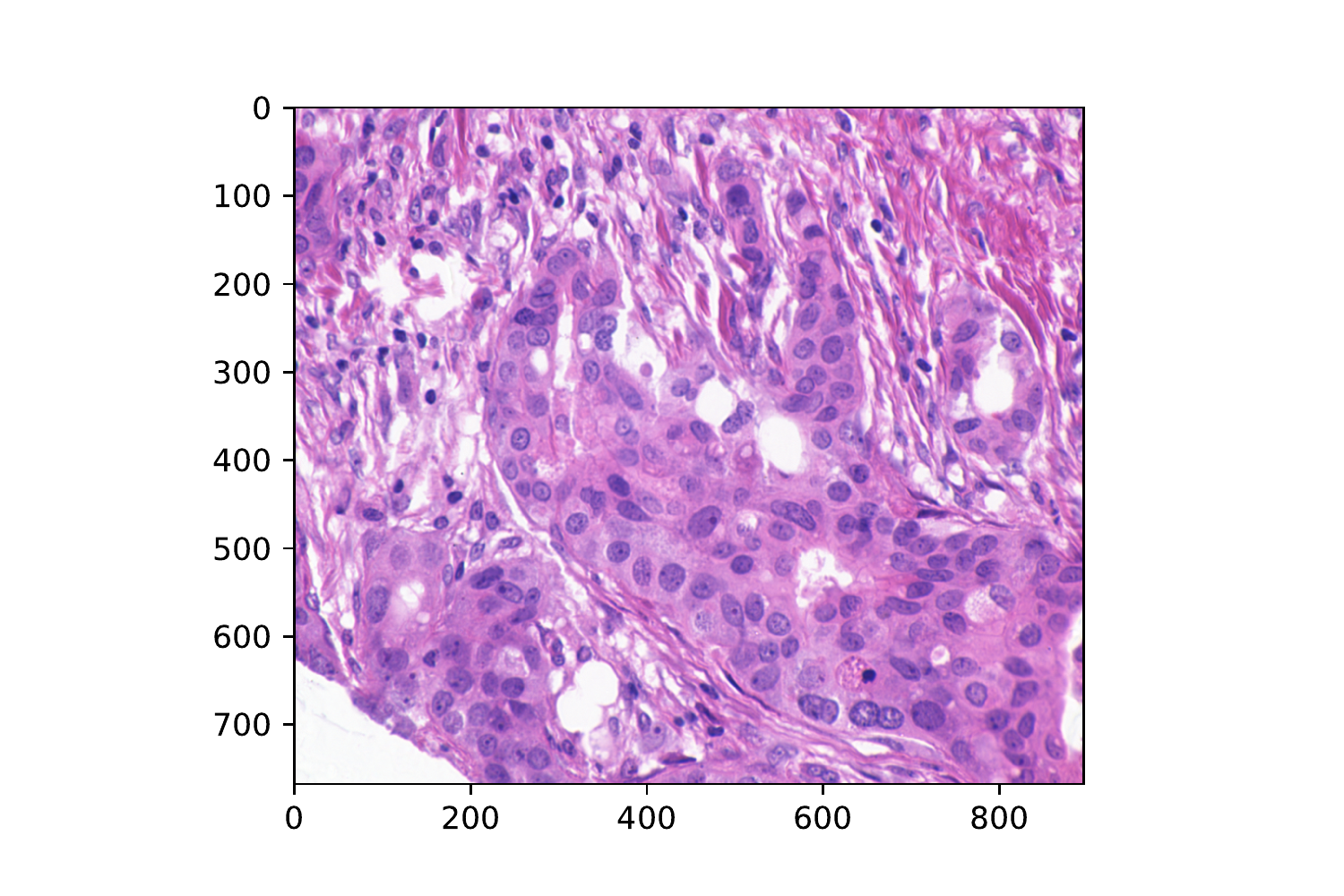}}
    \subfigure[Prediction scores]{\includegraphics[scale=0.35]{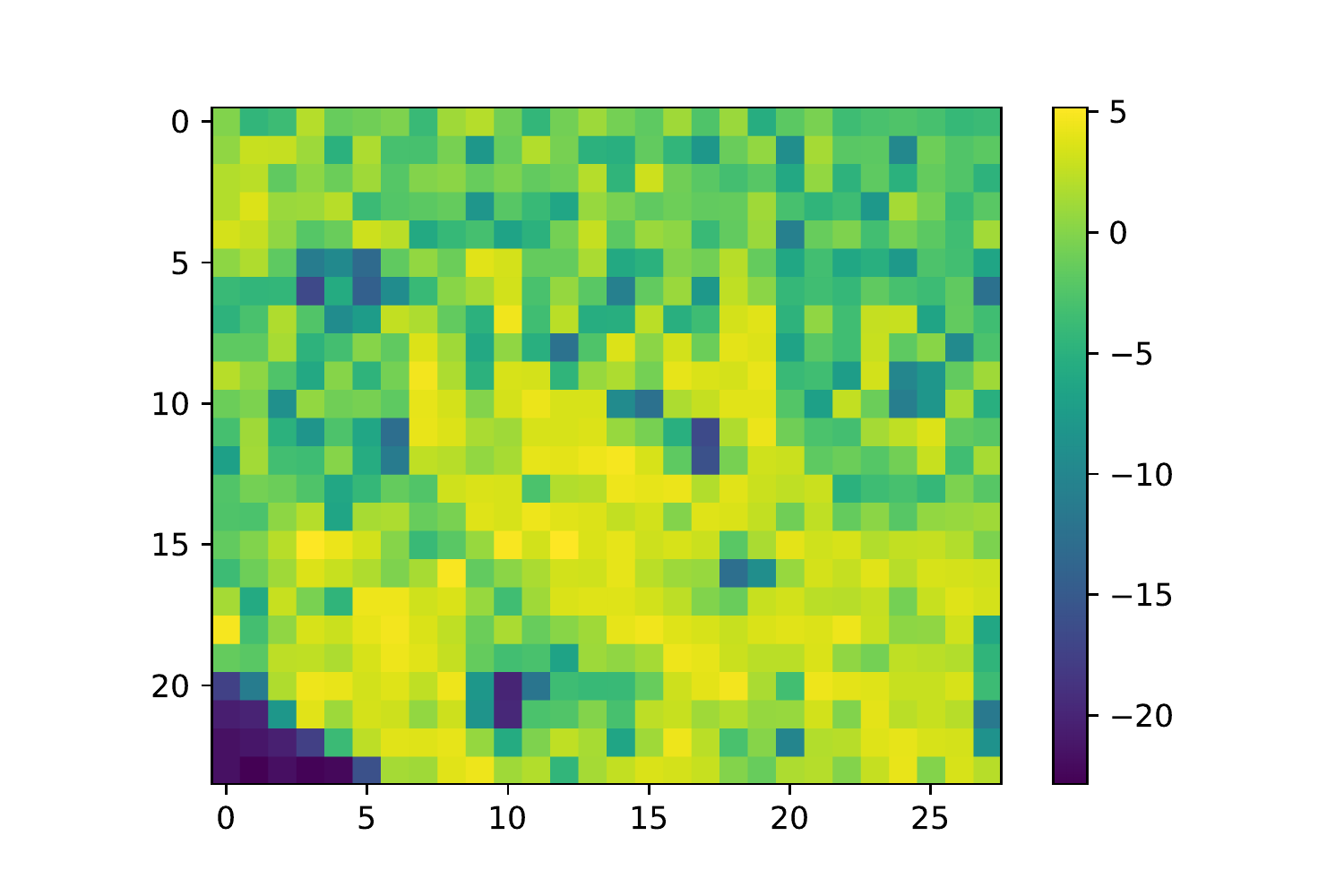}}
    \subfigure[Attention weights]{\includegraphics[scale=0.35]{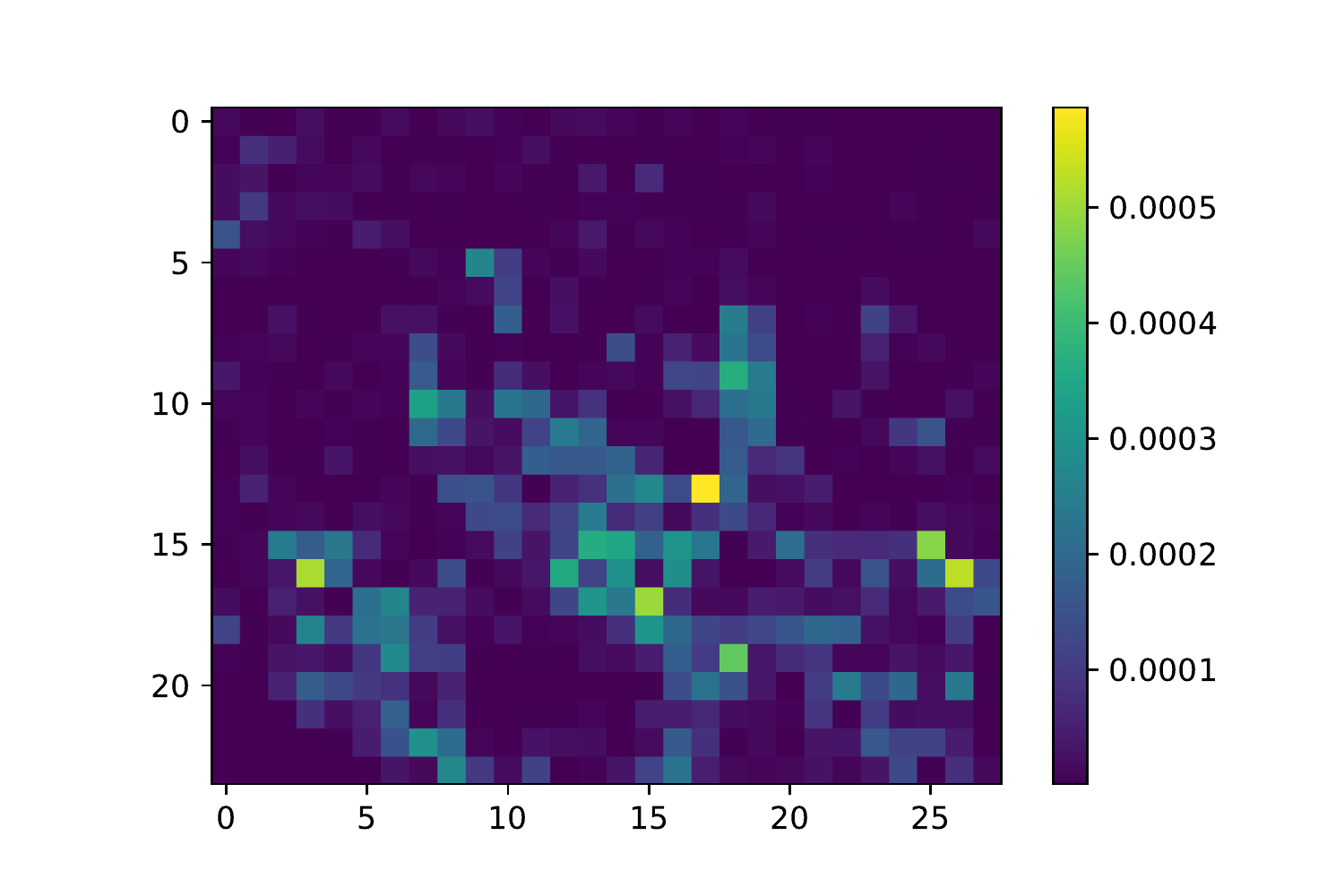}}

        \caption{Demonstration for positive example for Breast Cancer dataset. Left: original image. Middle: prediction scores for each patch. Right: attention weights for each patch.} \label{fig:attention-elaborate-positive}
\end{figure*}

\subsection{Experiments on Medical Image datasets}
We choose two histopathological image datasets, namely Breast Cancer and Colon Adenocarcinoma~\cite{gelasca2008evaluation,borkowski2019lung}. These have been used in previous deep MIL works~\cite{pmlr-v80-ilse18a} for evaluation. Histopathological images are microscopic images of the tissue for disease examination, which are prevalent for cancer diagnosis~\cite{borkowski2019comparing}.  Since  histopathological images  have a high resolution, it is difficulty to  process the whole image. Hence MIL approaches are appealing that treat each image as a bag of local small batches. For Breast Cancer, there are 58 weakly labeled $896\times768$ hematoxylin and eosin (H\&E) stained whole-slide images. An image is labeled malignant if it contains breast cancer cells, otherwise it is benign (examples shown in Figure~\ref{fig:bc-examples}). We divide every image into $32\times32$ patches. This results in 672 patches per bag. For Colon Adenocarcinoma dataset\footnote{\url{https://www.kaggle.com/datasets/biplobdey/lung-and-colon-cancer}}, there are originally 5000 (H\&E) images for benign colon tissue and 5000 for Colon Adenocarcinoma. We uniformly randomly sample  1000 benign images and 100 Adenocarcinoma images to form the new Colon Ade. dataset for our study. We divide every $512\times512$ image into $32\times32$ patches and get 256 patches for each image. We also use two real-world MRI/OCT image datasets. The first data set is from the University of California San Francisco Preoperative Diffuse Glioma MRI (UCSF-PDGM)~\cite{calabrese2022university}, short as PDGM in this work. The problem is to predict patients with grade II or grade IV diffuse gliomas. The second dataset contains multiple OCT images for a large number of patients~\cite{xie2022arterial}. The goal is to predict hypertension from OCT images, which is useful for physicians to understand the relationship between eye-diseases and Hypertension. Exemplar images of two datasets are shown in Figure~\ref{fig:bc-examples} in the Appendix~\ref{sec:mf}.

For all the medical images, we adopt ResNet20 as the backbone model. For AUC loss function, we apply sigmoid as normalization for the output. The weight decay is fixed as 1e-4. For all the experiments, we run 100 epochs for each trial and decrease learning rate by 10 fold at the end of the 50-th epoch
and 75-th epoch. For the Breast Cancer dataset, we generate data train/test (0.9/0.1) splitting 2 times with different random seeds and conduct five-fold cross validation (10 trials). For the other datasets, we do single random train/test (0.9/0.1) splitting and conduct five-fold cross validation (5 trials). The margin parameter is tuned in \{0.1,0.5,1.0\} for AUC loss function. The initial learning rate is tuned in \{1e-1,1e-2,1e-3\} for histopathological image datasets, and is fixed as 1e-2 for PDGM, 1e-1 for OCT.

The results are shown in Table~\ref{Tab:meddata}. From these results, we make the following observations. (1) Our MIDAM still performs the best. Although CE (3D) on Breast Cancer, and CE (MB-att) on the two histopathological image datasets are competitive, most of the CE based approaches are less competitive with DAM based approaches.  (2) By comparing MIDAM (att) with DAM (MB-att) and MIDAM (smx) with DAM (MB-smx), our methods perform consistently better. This confirms the importance of using variance-reduced stochastic pooling operations instead of the naive mini-batch based stochastic poolings. This can be also verified by comparing their training/testing convergence in Figure~\ref{fig:abs}(a,b) and Figure~\ref{fig:faster-cvg} in the Appendix. (3) For Breast Cancer data, our method MIDAM (att) performs better than all baseline methods.  It is notable that CE (3D) and CE (MB-att) are competitive approaches but still have worse performance than MIDAM (att). On Colon Ade. dataset, our method MIDAM (smx) performs the best and CE (MB-att) is still competitive. Finally, we see that there is no clear winner between MIDAM (att) and MIDAM (smx). (4) In general, MIL pooling based methods can achieve better performance than the traditional baseline using 3D data input. 
Hence, our MIDAM algorithms are a good fit for 3D medical images. 

\vspace*{-0.05in}
\subsection{Ablation Studies}\vspace*{-0.05in}
First, we conduct an experiment to study the influence for different instance-batch sizes ($B$) on four tabular datasets. The results on MUSK2 are shown in Figure~\ref{fig:abs} (c,d) with more plotted in Figure~\ref{fig:favor-larger-instance-batch-size}, which demonstrate our methods converge faster with a larger $B$ with fixed bag-batch size $S_+$ and $S_-$. In addition, we observe that with $B=4$  MIDAM converges to almost same level as using all instances in 100 epochs, even for the MUSK2 dataset with average bag size as 64.69.  Second, we show an ablation study on the two  histopathological image datasets that fixes the total budget for bag-batch-size$\times$instance-batch-size. Exemplar results are plotted in Figure~\ref{fig:abs} (e,f,g,h)  with more results plotted  in Figure~\ref{fig:ablation-budget}. We can see that due to sampling of instances per-bag, we have more flexibility to choose the bag-batch size $S_+=S_-=\hat B/2$ and instance-batch size $B$ to have faster training, e.g., with $\hat B=4, B=64$ MIDAM converges the fastest, which demonstrates the superiority of our design. {Third, we demonstrate the effectiveness of stochastic attention pooling based MIDAM on a Breast Cancer example by attention weights and prediction scores for each instance (image patch). The results are presented in Figure~\ref{fig:attention-elaborate-positive}, where we can observe the lesion parts for the histopathology tissue have larger prediction scores and attention weights (the brighter patches). More demonstration on a negative examples are included in Figure~\ref{fig:attention-elaborate} in Appendix~\ref{sec:mf}, where the attention module focus on a blank patch to generate low overall prediction score.}    

\section{Conclusions}
We have proposed efficient algorithms for multi-instance deep AUC maximization. Our algorithms are based on variance-reduced stochastic poolings in a spirit of compositional optimization to enjoy a provable convergence. We have demonstrated the effectiveness and superiority of our algorithms on benchmark datasets and real-world high-resolution medical image datasets.

\section{Acknowledgement}
{This work is partially supported by NSF Career Award 2246753, NSF Grant 2246757 and NSF Grant 2246756.}

\bibliography{reference}
\bibliographystyle{icml2023}

\appendix
\onecolumn

\section{Technical Lemmas}

\begin{lemma}
    Based on Assumption~\ref{asm:major}, we have that $h(\w;\X)$ is bounded, Lipschitz continuous, and has Lipschitz continuous gradient, i.e., there exists $B_h, C_h, L_h\geq 0$ such that $|h(\w;\X)|\leq B_h$, $\Norm{\nabla h(\w;\X)}_2\leq C_h$, and $\nabla^2 h(\w;\X) \preceq L_h I$.
\end{lemma}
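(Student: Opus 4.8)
The plan is to exploit the compositional structure $h(\w;\X) = f_2(f_1(\w;\X))$ already introduced for both pooling operators, and to prove a generic composition fact: if the inner map $f_1$ is bounded, Lipschitz, and has Lipschitz gradient, and the outer map $f_2$ enjoys the same three properties on a compact set containing the range of $f_1$, then the composite $h$ inherits all three. Granting this, the derivative bounds for $h$ follow from the chain rule, namely $\nabla h = \nabla f_1 \, \nabla f_2(f_1)$ for the gradient (with $\nabla f_1$ the transposed Jacobian as in the paper's convention) and $\nabla^2 h = (\nabla f_1)(\nabla^2 f_2(f_1))(\nabla f_1)^\top + \sum_j [\nabla f_2(f_1)]_j \nabla^2 [f_1]_j$ for the Hessian, so that $C_h$ is controlled by a product of the first-order constants and $L_h$ by a sum of products of first- and second-order constants. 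The remaining work is therefore to verify, separately for each pooling, that $f_1$ and $f_2$ have the three properties with explicit constants derived from Assumption~\ref{asm:major}.

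For smoothed-max pooling, where $f_1(\w;\X) = \frac{1}{|\X|}\sum_{\x} \exp(\phi(\w;\x)/\tau)$ and $f_2(s) = \tau\log(s)$, I would first note that $|\phi|\le B_\phi$ forces $f_1$ to lie in the compact interval $[\exp(-B_\phi/\tau), \exp(B_\phi/\tau)]$, which is bounded away from zero. Differentiating under the finite average and using $\exp(\phi/\tau)\le \exp(B_\phi/\tau)$ together with $\Norm{\nabla\phi}\le C_\phi$ and $\Norm{\nabla^2\phi}\le L_\phi$ gives explicit bounds on $\Norm{\nabla f_1}$ and $\Norm{\nabla^2 f_1}$; the latter picks up an extra $\nabla\phi\nabla\phi^\top$ term from differentiating the exponential twice. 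On the compact interval above, $f_2(s)=\tau\log s$ has $f_2'=\tau/s$ and $f_2''=-\tau/s^2$ both bounded, since $s$ is bounded away from zero. Feeding these into the composition fact yields $B_h, C_h, L_h$.

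For attention-based pooling the same template applies but the verification of $f_2$ is more delicate because $f_2(s)=\sigma(s_1/s_2)$ takes a two-dimensional argument and involves a quotient. The crucial preliminary step is a uniform lower bound on the denominator: since $|g|\le B_g$ we have $s_2 = \frac{1}{|\X|}\sum_{\x}\exp(g(\w;\x)) \ge \exp(-B_g) > 0$, and likewise $s_2\le \exp(B_g)$, while the numerator obeys $|s_1|\le \exp(B_g) B_g$ using $|\delta|\le B_g$. This confines $f_1(\w;\X)$ to a compact set on which the denominator never vanishes, so the ratio $s_1/s_2$ and all its first and second derivatives are well-defined and bounded via the quotient rule; composing with the sigmoid, whose value lies in $[0,1]$ and whose first and second derivatives are globally bounded, gives the boundedness, Lipschitzness, and gradient-Lipschitzness of $f_2$. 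The bounds on $f_1$ itself and its first two derivatives come from the bounds on $g$ and $\delta$ exactly as in the smoothed-max case, now carried out componentwise for the two-dimensional output.

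The main obstacle is the attention case: one must ensure the constants remain finite, which hinges entirely on the lower bound $s_2\ge\exp(-B_g)$ guaranteeing that no division by a quantity approaching zero ever occurs, and on tracking the quotient-rule and sigmoid chain-rule terms so that the second-derivative bound $L_h$ collects all the cross terms correctly. Once the denominator is bounded away from zero, the remaining estimates are routine but tedious products of the constants supplied by Assumption~\ref{asm:major}.
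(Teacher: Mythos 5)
Your proof is correct, but it is organized differently from the paper's. The paper works directly on the assembled pooling function $h(\w;\X)$: for smoothed-max it exploits the fact that $\nabla h$ is a softmax-weighted convex combination of the $\nabla\phi(\w;\x)$, which immediately gives the tight bound $\Norm{\nabla h}_2\le C_\phi$, and for attention it expands $\nabla h$ and $\nabla^2 h$ term by term using the softmax weights $s(\w;\x)$ and the bounded derivatives of the sigmoid. You instead factor everything through the decomposition $h=f_2(f_1)$, prove a generic composition lemma, and verify boundedness, Lipschitzness, and gradient-Lipschitzness of $f_1$ and $f_2$ separately, with the key observation in both poolings being that the range of $f_1$ is compact and bounded away from the singularities of $f_2$ (the origin for $\tau\log s$, and $s_2=0$ for $\sigma(s_1/s_2)$) thanks to $|\phi|\le B_\phi$ and $|g|\le B_g$. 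Both routes are sound. Yours yields looser constants for smoothed-max (e.g.\ roughly $\exp(2B_\phi/\tau)C_\phi$ instead of $C_\phi$, since you forgo the cancellation of the normalizer), but it is cleaner for the attention case and, notably, the per-component bounds $B_{f_1},C_{f_1},B_{f_2},C_{f_2},L_{f_2}$ you establish are exactly what the paper's later convergence lemmas invoke without separate proof, so your organization does extra work that the paper implicitly relies on anyway.
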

\begin{proof}
{\bf Smoothed-max pooling:} Property of the LogSumExp (LSE) function implies that
\begin{align*}
|h(\w;\X)| \leq \tau \max_{\x \in \X} \frac{|\phi_\w(\x)|}{\tau} + (1-\tau)\log |\X|.      
\end{align*}
The norm of $\nabla h(\w;\X)$ can be bounded as
\begin{align*}
\Norm{\nabla h(\w;\X)}_2 = \Norm{\sum_{\x\in\X} \frac{\exp(\phi(\w;\x)/\tau)}{\sum_{\x\in\X}\exp(\phi(\w;\x)/\tau)}\nabla \phi(\w;\x)}_2\leq C_\phi. 
\end{align*}
The norm of $\nabla^2 h(\w;\x)$ can be bounded as
\begin{align*}
& \Norm{\nabla^2 h(\w;\x)}_2  \leq \Norm{\sum_{\x\in\X} \frac{\exp(\phi(\w;\x)/\tau)}{\sum_{\x\in\X}\exp(\phi(\w;\x)/\tau)}\left(\nabla \phi(\w;\x) [\nabla \phi(\w;\x)]^\top/\tau + \nabla^2 \phi(\w;\x)\right)}_2\\
& \quad\quad + \Norm{\left(\sum_{\x\in\X} \frac{\exp(\phi(\w;\x)/\tau)}{\sum_{x\in \X}\exp(\phi(\w;\x)/\tau)}\nabla \phi(\w;\x)\right)\left(\sum_{\x\in\X} \frac{\exp(\phi(\w;\x)/\tau)}{\sum_{x\in \X}\exp(\phi(\w;\x)/\tau)}[\nabla \phi(\w;\x)]^\top/\tau\right)}_2 \leq 2C_\phi^2/\tau + L_\phi.
\end{align*}
{\bf Attention-based pooling:} According to \eqref{eqn:attpool}, it is clear that $|h(\w;\X)| \leq 1$. The norm of $\nabla h(\w;\X)$ can be bounded as
\begin{align*}
\Norm{\nabla h(\w;\X)}_2 & \leq 0.25 \Norm{\sum_{x\in\X}\frac{\exp(g(\w;\x))\delta(\w;\x)}{\sum_{\x'\in \X} \exp(g(\w;\x'))}\nabla g(\w;\x)}_2 + 0.25\Norm{\sum_{\x\in\X}\frac{\exp(g(\w;\x))}{\sum_{\x'\in\X}\exp(g(\w;\x'))}\nabla \delta(\w;\x)}_2\\
& \quad\quad + 0.25\left|\sum_{\x\in\X} \frac{\exp(g(\w;\x))\delta(\w;\x)}{\sum_{\x'\in\X}\exp(g(\w;\x'))}\right|\Norm{\sum_{\x\in\X}\frac{\exp(g(\w;\x))}{\sum_{\x'\in \X}\exp(g(\w;\x'))}\nabla g(\w;\x)}_2 \leq 0.5 C_g B_\delta + 0.25 C_\delta.   
\end{align*}
For brevity, we denote the softmax function $s(\w;\x) \coloneqq \frac{\exp(g(\w;\x))}{\sum_{x'\in \X} \exp(g(\w;\x'))}$. The norm of $\nabla^2 h(\w;\x)$ can be bounded as
\begin{align*}
& \Norm{\nabla^2 h(\w;\x)}_2\\ &\leq 0.1 \Norm{\nabla h(\w;\X)[\nabla h(\w;\X)]^\top}_2 + 0.25 \Norm{\sum_{\x\in\X}s(\w;\x)(\delta(\w;\x)\nabla g(\w;\x) + \nabla \delta(\w;\x))[\nabla g(\w;\x)]^\top}_2 \\
& + 0.25 \Norm{\sum_{\x\in\X}s(\w;\x)\left(\nabla g(\w;\x)[\nabla \delta(\w;\x)]^\top + \delta(\w;\x)\nabla^2 g(\w;\x) + \nabla^2\delta(\w;\x)\right)}_2\\
&  + 0.25\Norm{\left(\sum_{\x\in\X}s(\w;\x) (\delta(\w;\x)\nabla g(\w;\x) + \nabla \delta(\w;\x))\right)\left(\sum_{\x\in\X}s(\w;\x) (\delta(\w;\x)\nabla g(\w;\x) + \nabla \delta(\w;\x))[\nabla g(\w;\x)]^\top\right)}_2\\
& + 0.25 \Norm{\left(\sum_{\x\in\X}s(\w;\x)\nabla g(\w;\x)\right)\left(\sum_{\x\in\X}s(\w;\x)(\delta(\w;\x)\nabla g(\w;\x) + \nabla \delta (\w;\x))\right)^\top}_2\\
& +0.25\Norm{\left(\sum_{\x\in\X}s(\w;\x)\nabla g(\w;\x)\right)\left(\sum_{\x\in\X} s(\w;\x) \delta(\w;\x)\right)\left(\sum_{\x\in\X} s(\w;\x) \nabla g(\w;\x)\right)^\top}_2\\
& +0.25\Norm{\left(\sum_{\x\in\X}s(\w;\x)\delta(\w;\x)\right)\left(\sum_{x\in\X} s(\w;\x)(\nabla g(\w;\x)[\nabla g(\w;\x)]^\top + \nabla^2 g(\w;\x)) \right)}_2\\
& + 0.25\Norm{\left(\sum_{\x\in\X}s(\w;\x)\nabla g(\w;\x)\right)\left(\sum_{\x\in\X}s(\w;\x) \nabla g(\w;\x)\right)^\top}_2\\
& \leq 0.1 C_h^2 + 0.5 (B_\delta C_g + C_\delta)C_g + 0.25 (C_g C_\delta + B_\delta L_g + L_\delta) \\
&\quad\quad+ 0.25 (B_\delta C_g + C_\delta)^2C_g + 0.25 C_g^2 (B_\delta + 1) + 0.25B_\delta (C_g^2 + L_g).
\end{align*}
\end{proof}

\begin{lemma}\label{lem:s_bounded}
Under Assumption~\ref{asm:major},  MIDAM with $\gamma\in (0,1)$, $s_i^0=0$, we have $|s_i^t|\leq B_s$ for all $t>0$.
\end{lemma}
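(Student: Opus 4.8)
The plan is to prove the claim by a straightforward induction on $t$, exploiting the fact that the update~\eqref{eqn:ui} is, for a sampled bag, a \emph{convex combination} of the previous estimator and a freshly sampled mini-batch value of $f_1$. The only substantive ingredient is a uniform bound on $f_1(\w;\B)$ over all weights $\w$ and all mini-batches $\B$, which I would read off directly from Assumption~\ref{asm:major}; once that bound is in hand, the convexity of the mixing step transports it to every $s_i^t$.

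First I would bound $f_1$ for each pooling operator. For smoothed-max pooling, $f_1(\w;\B)=\frac{1}{|\B|}\sum_{\x\in\B}\exp(\phi(\w;\x)/\tau)$, and the boundedness $|\phi(\w;\x)|\leq B_\phi$ gives $0\leq \exp(\phi(\w;\x)/\tau)\leq \exp(B_\phi/\tau)$ termwise, hence $0\leq f_1(\w;\B)\leq \exp(B_\phi/\tau)$ for the average over any subset. For attention-based pooling $f_1$ is the two-dimensional vector whose components average $\exp(g(\w;\x))\delta(\w;\x)$ and $\exp(g(\w;\x))$; using the bounds on $g$ and $\delta$ from Assumption~\ref{asm:major} yields $|f_{1,1}|\leq \exp(B_g)B_\delta$ and $0\leq f_{1,2}\leq \exp(B_g)$, so $\Norm{f_1(\w;\B)}_2\leq \exp(B_g)\sqrt{B_\delta^2+1}$. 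In either case I would define $B_s$ to be this finite upper bound on $\Norm{f_1(\w;\B)}$, which depends only on the constants of Assumption~\ref{asm:major} and not on $|\X_i|$ or $|\B_i^t|$.

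With $B_s$ so defined, the induction runs as follows. The base case is immediate since $s_i^0=0$ satisfies $\Norm{s_i^0}=0\leq B_s$. For the inductive step, assume $\Norm{s_i^{t-1}}\leq B_s$. If bag $i$ is not sampled at iteration $t$, then $s_i^t=s_i^{t-1}$ and the bound is preserved trivially. If bag $i$ is sampled, the update gives $s_i^t=(1-\gamma_0)s_i^{t-1}+\gamma_0 f_1(\w^t;\B_i^t)$; since the mixing weight $\gamma_0\in(0,1)$, the triangle inequality yields $\Norm{s_i^t}\leq (1-\gamma_0)\Norm{s_i^{t-1}}+\gamma_0\Norm{f_1(\w^t;\B_i^t)}\leq (1-\gamma_0)B_s+\gamma_0 B_s=B_s$, where the last step uses both the inductive hypothesis and the uniform bound on $f_1$. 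This closes the induction and establishes $\Norm{s_i^t}\leq B_s$ for all $t\geq 0$ and all $i$.

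Since the argument is purely a convex-combination bound, there is no real obstacle; the only care needed is in the attention-based case, where one must track the two components of the vector-valued $f_1$ separately before combining them into a Euclidean-norm bound. I would also note that the choice $B_s=\sup_{\w,\B}\Norm{f_1(\w;\B)}$ is precisely what makes the inductive step close exactly (the two occurrences of $B_s$ cancel the factors $1-\gamma_0$ and $\gamma_0$), so the bound holds uniformly over bags and iterations with no dependence on the bag or mini-batch sizes.
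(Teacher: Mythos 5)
Your proof is correct and follows essentially the same route as the paper's, which simply states that the result follows from Assumption~\ref{asm:major} and the fact that the update of $s_i$ is a convex combination; you have merely filled in the induction and the explicit uniform bounds on $f_1$ for the two pooling operators. No gaps.
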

\begin{proof}
This lemma follows from Assumption~\ref{asm:major} and the facts that $f_1$ is continuously differentiable on its domain and the update formula of $s_i$ is a convex combination.
\end{proof}

\begin{lemma}\label{lem:a_b_bounded}
If $\eta \in (0,0.5)$ and $a^0 = 0$, $b^0 = 0$,  there exist $B_a, B_b > 0$ $|a^t|\leq B_a$, $|b^t|<B_b$ for all $t>0$.  
\end{lemma}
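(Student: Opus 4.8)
The plan is to show that $a^t$ (and symmetrically $b^t$) is the state of a \emph{stable} scalar linear recursion driven by a uniformly bounded forcing term, so that boundedness follows from bounded-input-bounded-output stability. First I would make the update explicit. Since $f(\cdot,a)=(\cdot-a)^2$, the partial gradient is $\nabla_2 f(f_2(s_i^{t-1}),a^t)=2(a^t-f_2(s_i^{t-1}))$, so the estimator in Algorithm~\ref{alg:1} reads $G^t_{1,a}=2(a^t-\bar{z}^t)$ with $\bar{z}^t:=\hat\E_{i\in\S_+^t}f_2(s_i^{t-1})$. By Lemma~\ref{lem:s_bounded} we have $|s_i^t|\le B_s$, and since $f_2$ is continuous on its compact domain, $|f_2(s_i^{t-1})|\le M$ for some constant $M$; hence $|\bar{z}^t|\le M$ for all $t$. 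Thus the $a$-update is heavy-ball (momentum) gradient descent on the $1$-strongly-convex quadratic $a\mapsto\hat\E_{i\in\S_+^t}(f_2(s_i^{t-1})-a)^2$ whose minimizer $\bar{z}^t$ always lies in $[-M,M]$.

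The momentum-free intuition is immediate: dropping $\beta_1$ gives $a^{t+1}=(1-2\eta)a^t+2\eta\bar{z}^t$, and $\eta\in(0,0.5)$ forces $1-2\eta\in(0,1)$, so a one-line induction from $a^0=0$ yields $|a^t|\le M$. To handle the true momentum update I would eliminate $\v_2$ by substituting $\v_2^{t-1}=(a^{t-1}-a^t)/\eta$ into $a^{t+1}=a^t-\eta\v_2^t$ with $\v_2^t=\beta_1\v_2^{t-1}+(1-\beta_1)G^t_{1,a}$, which produces the scalar second-order recursion
\begin{equation*}
a^{t+1}=p\,a^t-\beta_1 a^{t-1}+q^t,\qquad p=1+\beta_1-2\eta(1-\beta_1),\qquad |q^t|\le 2\eta(1-\beta_1)M.
\end{equation*}

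Next I would verify that the homogeneous part is Schur-stable, i.e. both roots of $\lambda^2-p\lambda+\beta_1$ lie strictly inside the unit disk. By the Jury conditions this amounts to $\beta_1<1$, $1-p+\beta_1>0$, and $1+p+\beta_1>0$: the first holds since $\beta_1\in(0,1)$, the second reduces to $2\eta(1-\beta_1)>0$, and the third to $1+\beta_1>\eta(1-\beta_1)$, which holds because $\eta<0.5$ and $1-\beta_1<1$. Stability of the homogeneous recursion together with the uniform bound on $q^t$ yields a finite $B_a:=\sup_t|a^t|$, explicit from the initializations $a^0=0$, $\v_2^0=0$ and the geometrically summable coefficients of the recursion's transfer function. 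Replacing $\S_+^t,a,\v_2$ by $\S_-^t,b,\v_3$ gives $B_b$ by the identical argument.

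The main obstacle is precisely the momentum coupling: without it the update is a transparent contraction toward a bounded target, but with it the naive estimate $|a^{t+1}|\le|a^t|+\eta|\v_2^t|$ fails to close, since it discards the sign (restoring) structure of the gradient. Recovering that structure is what forces the reduction to the second-order recursion and the Schur/Jury check, and this is exactly where the hypothesis $\eta<0.5$ (together with $\beta_1<1$) is used. For the Adam-style variant the same scheme applies once one notes that the adaptive stepsize stays in a bounded positive interval, so the effective per-coordinate gain plays the role of $\eta$.
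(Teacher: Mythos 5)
Your proposal is correct, but it takes a genuinely different --- and in fact more faithful --- route than the paper. The paper's own proof simply rewrites the update as $a^{t+1} = a^t - \eta G_{1,a}^t = (1-2\eta)a^t + 2\eta\,\hat\E_{i\in\S_+^t} f_2(s_i^{t})$ and observes that for $\eta\in(0,0.5)$ this is a convex combination of $a^t$ and bounded quantities (bounded via Lemma~\ref{lem:s_bounded}), so induction from $a^0=0$ closes the argument in one line; this is exactly your ``momentum-free intuition.'' What the paper does \emph{not} do is account for the fact that Algorithm~\ref{alg:1} actually updates $a^{t+1}=a^t-\eta\v_2^t$ with the momentum average $\v_2^t=\beta_1\v_2^{t-1}+(1-\beta_1)G^t_{1,a}$, so its proof really covers only the $\beta_1=0$ case. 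Your reduction to the second-order recursion $a^{t+1}=p\,a^t-\beta_1 a^{t-1}+q^t$ with $p=1+\beta_1-2\eta(1-\beta_1)$ and the Jury/Schur check (which I verified: $P(1)=2\eta(1-\beta_1)>0$, $P(-1)=2(1+\beta_1-\eta(1-\beta_1))>0$, $\beta_1<1$) is correct and handles the algorithm as actually stated, at the cost of a less explicit constant $B_a$ (an $\ell_1$ bound on the impulse response rather than the clean bound $|a^t|\le M$). The only minor points to tidy up are that you should state the initialization $\v_2^0=0$ (the paper leaves $\v^0$ unspecified) so the transient term is controlled, and note the off-by-one in the paper between $s_i^t$ and $s_i^{t-1}$ inside $G^t_{1,a}$, which is immaterial for boundedness. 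In short: your argument is a strict strengthening of the paper's; the paper's is shorter but implicitly assumes the momentum away.
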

\begin{proof}
Note that $G_{1,a}^t = - \frac{2}{|\S_+^t|}\sum_{i\in \S_+^t} (f_2(s_i^t) - a^t)$ and $G_{2,b}^t = - \frac{2}{|\S_-^t|}\sum_{i\in \S_-^t} (f_2(s_i^t) - b^t)$. Thus, the update formulae of $a$ and $b$ can be re-rewritten as
\begin{align*}
& a^{t+1} = a^t - \eta G_{1,a}^t = (1-2\eta)a^t + 2\eta \frac{1}{|\S_+^t|}\sum_{i\in \S_+^t} f_2(s_i^t),\\
& b^{t+1} = b^t - \eta G_{2,b}^t = (1-2\eta)b^t + 2\eta \frac{1}{|\S_-^t|}\sum_{i\in \S_-^t} f_2(s_i^t).
\end{align*}
Due to Lemma~\ref{lem:s_bounded} and the fact that $f_2$ is continuously differentiable on its domain, $a^t$ and $b^t$ are bounded in all iterations as long as $\eta\in(0,0.5)$ such that the update formulae of $a^t$ and $b^t$ are convex combinations.
\end{proof}

\begin{lemma}\label{lem:smooth_F}
Under Assumption~\ref{asm:major}, there exists $L_F>0$ such that $\nabla F$ is $L_F$-Lipschitz continuous.   
\end{lemma}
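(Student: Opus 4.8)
The plan is to prove the slightly stronger statement that the full Hessian $\nabla^2 F(\w,a,b,\alpha)$ is uniformly bounded in operator norm on the relevant domain; since $F$ is twice continuously differentiable, a uniform bound $\Norm{\nabla^2 F}_2 \le L_F$ immediately yields that $\nabla F$ is $L_F$-Lipschitz. The ingredients are already in place: the first technical lemma of this section guarantees that, for \emph{every} $\w$, the pooled score $h(\w;\X_i)$ satisfies $|h|\le B_h$, $\Norm{\nabla h}_2 \le C_h$, and $\nabla^2 h \preceq L_h I$; and Lemma~\ref{lem:a_b_bounded}, together with the boundedness of the dual feasible set $\Omega$ (say $|\alpha|\le B_\alpha$), ensures the auxiliary variables $a,b,\alpha$ lie in a bounded set. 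These are exactly the facts needed to prevent the unbounded variable $\w\in\R^d$ from causing trouble.

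First I would record that $F = F_1 + F_2 + F_3$ is a finite average of compositions of the $C^2$ map $h$ with the outer functions $(h-a)^2$, $(h-b)^2$, and the bilinear/quadratic terms in $F_3$, so $F\in C^2$ and it suffices to bound each block of $\nabla^2 F$ separately. The blocks are straightforward to write: e.g. $\nabla^2_{\w\w}F_1 = \hat\EX_{i\in\D_+}\,2\bigl[\nabla h\,\nabla h^\top + (h-a)\nabla^2 h\bigr]$, $\nabla^2_{\w a}F_1 = -2\,\hat\EX_{i\in\D_+}\nabla h$, $\nabla^2_{aa}F_1 = 2$, with $F_2$ analogous, and for the coupling term $\nabla^2_{\w\w}F_3 = \alpha\bigl(\hat\EX_{i\in\D_-}\nabla^2 h - \hat\EX_{i\in\D_+}\nabla^2 h\bigr)$, $\nabla^2_{\w\alpha}F_3 = \hat\EX_{i\in\D_-}\nabla h - \hat\EX_{i\in\D_+}\nabla h$, $\nabla^2_{\alpha\alpha}F_3 = -1$.

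Then I would bound each block using the three uniform estimates on $h$ and the variable bounds: the $\w\w$ block of $F_1$ is controlled by $2\bigl(C_h^2 + (B_h+B_a)L_h\bigr)$, the mixed $\w a$ block by $2C_h$, the second-order-in-$\w$ block of $F_3$ by $2B_\alpha L_h$, and the remaining mixed and diagonal blocks by the constants $2C_h$ and $1$. Summing the block bounds (via $\Norm{\nabla^2 F}_2 \le \sum_{\text{blocks}}\Norm{\cdot}_2$, or a Gershgorin-type argument exploiting the block structure) produces an explicit $L_F$ depending only on $B_h, C_h, L_h, B_a, B_b, B_\alpha$.

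The main obstacle—really the only nontrivial point—is the coupling block $\nabla^2_{\w\w}F_3$, which carries the dual variable $\alpha$ as a multiplier: its boundedness hinges on $\alpha$ ranging over a bounded domain, so the argument genuinely relies on $\Omega$ being bounded (equivalently, on restricting to the bounded region where the relevant iterates and maximizers live). Likewise the $(h-a)\nabla^2 h$ contribution in the $\w\w$ blocks of $F_1$ and $F_2$ forces us to invoke the boundedness of $a,b$ from Lemma~\ref{lem:a_b_bounded} rather than treating them as free variables over all of $\R$. Once these boundedness facts are granted, every remaining estimate is a routine application of the product structure together with the three uniform bounds on $h$.
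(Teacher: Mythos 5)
Your proposal is correct and relies on exactly the same ingredients as the paper's proof --- the boundedness and smoothness of $h$ from the preceding technical lemma, the boundedness of $a^t,b^t$ from Lemma~\ref{lem:a_b_bounded}, and the boundedness of $\Omega$ --- applied to the same decomposition $F=F_1+F_2+F_3$. The only cosmetic difference is that you bound the Hessian blocks uniformly, whereas the paper directly estimates the gradient difference $\Norm{\nabla F(\w,a,b,\alpha)-\nabla F(\w',a',b',\alpha')}$ term by term; both routes produce an $L_F$ depending on the same constants $B_h, C_h, L_h, B_a, B_b, B_\Omega$.
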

\begin{proof}
Note that $h(\w;\X_i) = f_2(f_1(\w;\X_i))$, $\nabla h(\w;\X_i) = \nabla f_1(\w;\X_i) \nabla f_2(f_1(\w;\X_i))$. 
For distinct $(\w,a,b,\alpha)$ and  $(\w',a',b',\alpha')$, we have
\begin{align*}
& \Norm{\nabla_{(\w,a,b)} F(\w,a,b,\alpha) - \nabla_{(\w,a,b)}  F(\w',a',b',\alpha')}_2 +|\nabla_\alpha F(\w,a,b,\alpha) - \nabla_\alpha F(\w',a',b',\alpha')|\\
& \leq \Norm{\frac{2}{|\D_+|}\sum_{i\in \D_+} \left(
\nabla h(\w;\X_i)(h(\w;\X_i) - a) -  \nabla h(\w';\X_i) (h(\w';\X_i) - a')\right)}_2\\
& + \Norm{\frac{2}{|\D_-|}\sum_{i\in \D_-} \left(\nabla h(\w;\X_i) (h(\w;\X_i) - b) - \nabla h(\w';\X_i) (h(\w';\X_i) - b')\right)}_2\\
& + \Norm{\alpha \frac{1}{|\D_-|}\sum_{i\in \D_-}\nabla h(\w;\X_i) - \alpha' \frac{1}{|\D_-|}\sum_{i\in \D_-}\nabla h(\w';\X_i)}_2 + \Norm{\alpha \frac{1}{|\D_+|}\sum_{i\in \D_+}\nabla h(\w;\X_i) - \alpha' \frac{1}{|\D_+|}\sum_{i\in \D_+}\nabla h(\w';\X_i)}_2\\
& + \Norm{\frac{2}{|\D_+|}\sum_{i\in\D_+} (h(\w;\X_i) - a) - \frac{2}{|\D_+|}\sum_{i\in\D_+} (h(\w';\X_i) - a')}_2 \\
& + \Norm{\frac{2}{|\D_+|}\sum_{i\in\D_+} (h(\w;\X_i) - b) - \frac{2}{|\D_+|}\sum_{i\in\D_+} (h(\w';\X_i) - b')}_2 \\
& + |\alpha - \alpha'| + \Norm{\frac{1}{|\D_-|}\sum_{i\in \D_-}h(\w;\X_i) - \frac{1}{|\D_-|}\sum_{i\in \D_-}h(\w';\X_i)}_2 + \Norm{\frac{1}{|\D_+|}\sum_{i\in \D_+}h(\w;\X_i) - \frac{1}{|\D_+|}\sum_{i\in \D_+}h(\w';\X_i)}_2\\
&\leq 2(2L_h B_h + 2C_h^2 + (B_a + B_b) L_h + B_\Omega L_h + 3 C_h) \Norm{\w-\w'}_2\\
&\quad\quad+ 2(C_h + 1)|a-a'| + 2(C_h + 1)|b-b'| + 2(C_h + 1)|\alpha-\alpha'|. 
\end{align*}
\end{proof}

\begin{lemma}[Lemma 4.3 in \citet{lin2019gradient}]\label{lem:smooth_Phi}
For an $F$ defined in \eqref{eq:MI_DAM} that has Lipschitz continuous gradient and $\Phi(\w,a,b)\coloneqq \max_{\alpha\in\Omega}F(\w,a,b,\alpha)$ with a convex and bounded $\Omega$, we have that $\Phi(\w,a,b)$ is $L_\Phi$-smooth and $\nabla \Phi(\w,a,b) = \nabla_{(\w,a,b)} F(\w,a,b, \alpha^*(\w,a,b))$. Besides, $\alpha^*(\w,a,b)$ is $1$-Lipschitz continuous.    
\end{lemma}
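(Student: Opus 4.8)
The statement is a specialization of the classical Danskin/envelope result for strongly-concave inner maximization, so the plan is to prove, in order: (i) that the inner maximizer $\alpha^*(\w,a,b)$ is well defined and unique; (ii) that the map $\alpha^*$ is Lipschitz; (iii) the envelope formula $\nabla\Phi=\nabla_{(\w,a,b)}F(\cdot,\alpha^*(\cdot))$ via Danskin's theorem; and (iv) the smoothness of $\Phi$ by composing (ii) with the Lipschitz-gradient property of $F$. For (i) I would invoke the fact recorded just above the lemma that $F(\w,a,b,\cdot)$ is $1$-strongly concave (from the $-\alpha^2/2$ term) on the convex, bounded---hence compact---set $\Omega$; existence follows from compactness and continuity, and uniqueness from strong concavity, so both $\Phi$ and the single-valued map $\alpha^*$ are well defined.

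The crux, and the step I expect to be the main obstacle, is (ii). Abbreviate $z=(\w,a,b)$ and $z'=(\w',a',b')$. Because $\alpha^*(z)$ maximizes the concave function $F(z,\cdot)$ over the convex set $\Omega$, it obeys the variational inequality $\inner{\nabla_\alpha F(z,\alpha^*(z))}{\alpha-\alpha^*(z)}\le 0$ for every $\alpha\in\Omega$. I would instantiate this at $z$ with test point $\alpha^*(z')$ and at $z'$ with test point $\alpha^*(z)$ and add the two inequalities, obtaining $\inner{\nabla_\alpha F(z,\alpha^*(z))-\nabla_\alpha F(z',\alpha^*(z'))}{\alpha^*(z)-\alpha^*(z')}\ge 0$. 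Splitting the first slot across the intermediate point $\nabla_\alpha F(z,\alpha^*(z'))$ and applying $1$-strong concavity in the form $\inner{\nabla_\alpha F(z,\alpha^*(z))-\nabla_\alpha F(z,\alpha^*(z'))}{\alpha^*(z)-\alpha^*(z')}\le -\Norm{\alpha^*(z)-\alpha^*(z')}^2$ leaves
\[
\Norm{\alpha^*(z)-\alpha^*(z')}^2\le \inner{\nabla_\alpha F(z,\alpha^*(z'))-\nabla_\alpha F(z',\alpha^*(z'))}{\alpha^*(z)-\alpha^*(z')}.
\]
Cauchy--Schwarz followed by the Lipschitz continuity of $\nabla F$ (Lemma~\ref{lem:smooth_F}) in the $z$-argument then gives $\Norm{\alpha^*(z)-\alpha^*(z')}\le \Norm{\nabla_\alpha F(z,\alpha^*(z'))-\nabla_\alpha F(z',\alpha^*(z'))}\le \kappa\Norm{z-z'}$, where $\kappa$ is the ratio of the cross-Lipschitz modulus of $\nabla F$ to the strong-concavity modulus $\mu=1$; this is the asserted Lipschitz continuity of $\alpha^*$. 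Concretely, since $\nabla_\alpha F=c+\hat{\E}_- h-\hat{\E}_+ h-\alpha$ depends on $z$ only through the pooled scores, the modulus can be read off from the Lipschitz constant of $h$.

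For (iii), since $\alpha^*(z)$ is the unique maximizer and $F$ is continuously differentiable with $\Omega$ compact, Danskin's theorem applies and yields that $\Phi$ is differentiable with $\nabla\Phi(z)=\nabla_{(\w,a,b)}F(z,\alpha^*(z))$, the inner-variable gradient contribution vanishing by first-order optimality at $\alpha^*(z)$. Finally, for (iv) I would combine (ii) and (iii): for any $z,z'$ the envelope formula and $L_F$-Lipschitzness of $\nabla F$ give
\[
\Norm{\nabla\Phi(z)-\nabla\Phi(z')}\le L_F\big(\Norm{z-z'}+\Norm{\alpha^*(z)-\alpha^*(z')}\big)\le L_F(1+\kappa)\Norm{z-z'},
\]
so $\Phi$ is $L_\Phi$-smooth with $L_\Phi=L_F(1+\kappa)$. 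The only genuinely nontrivial estimate is the argmax-stability bound of the second paragraph; given that bound, differentiability and smoothness follow immediately from Danskin's theorem and the triangle inequality.
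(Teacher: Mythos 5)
Your proof is correct, but it is worth noting that the paper does not actually prove this lemma: it imports it verbatim as Lemma 4.3 of \citet{lin2019gradient}, so there is no in-paper argument to match against. What you have written is precisely the standard proof underlying that citation --- existence and uniqueness of $\alpha^*$ from $1$-strong concavity on the compact set $\Omega$, the two-point variational-inequality trick combined with strong monotonicity of $-\nabla_\alpha F(z,\cdot)$ to get argmax stability, Danskin's theorem for the envelope formula, and the triangle inequality for $L_\Phi = L_F(1+\kappa)$ --- so you have in effect reconstructed the reference rather than taken a different route. Your more careful derivation does surface one inexactness in the paper's restatement: the constant you obtain for $\alpha^*$ is $\kappa = L_{\mathrm{cross}}/\mu$ with $\mu=1$, where $L_{\mathrm{cross}}$ bounds $\Norm{\nabla_\alpha F(z,\alpha)-\nabla_\alpha F(z',\alpha)}_2$ over $\alpha\in\Omega$; since $\nabla_\alpha F = c + \hat\E_{i\in\D_-}h - \hat\E_{i\in\D_+}h - \alpha$ depends on $z$ only through the pooled scores, $L_{\mathrm{cross}}\leq 2C_h$, which need not be $\leq 1$, and indeed \citet{lin2019gradient} state $\kappa=\ell/\mu\geq 1$ rather than $1$. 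This means the paper's ``$1$-Lipschitz'' is a simplification whose repair only rescales constants downstream (e.g., the $\frac{5\eta^2}{\eta'}\Norm{\v^t}_2^2$ term in Lemma~\ref{lem:alpha_recursion}) without affecting the structure or rate in Theorem~\ref{thm:main}. One phrasing nit: in the constrained case the envelope formula does not follow from ``the inner-variable gradient contribution vanishing,'' since $\nabla_\alpha F(z,\alpha^*(z))$ can be nonzero when $\alpha^*(z)$ sits on the boundary of $\Omega$; Danskin's theorem covers this because $\Omega$ is fixed (independent of $z$), with the cross term annihilated in the limit by the variational inequality at the two points --- your actual argument is unaffected.
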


We define $\v \coloneqq \begin{bmatrix}
 \v_1\\
 \v_2\\
\v_3   
\end{bmatrix}$, $W \coloneqq \begin{bmatrix}
 \w\\
 a\\
 b
\end{bmatrix}$, and $G_W^t =\begin{bmatrix}
G_{1,\w}^t + G_{2,\w}^t + G_{3,\w}^t\\
G_{1,a}^t\\
G_{2,b}^t
\end{bmatrix}$, $\bar{G}_W^t =\begin{bmatrix}
\bar{G}_{1,\w}^t + \bar{G}_{2,\w}^t + \bar{G}_{3,\w}^t\\
\bar{G}_{1,a}^t\\
\bar{G}_{2,b}^t
\end{bmatrix}$, where 
\begin{align*}
&\bar{G}^t_{1,\w} = \hat\E_{i\in\S_+^t}\nabla f_1(\w^t; \B_i^t) \nabla f_2(f_1(\w^t; \X_i))\nabla_1 f( f_2(f_1(\w^t; \X_i)), a^t), \\
&\bar{G}^t_{2,\w} = \hat\E_{i\in\S_-^t}\nabla f_1(\w^t; \B_i^t) \nabla f_2(f_1(\w^t; \X_i))\nabla_1 f( f_2(f_1(\w^t; \X_i)), b^t),\\
&\bar{G}^t_{3,\w} = \alpha^t \cdot\left(\hat\E_{i\in\S_-^t}\nabla f_1(\w^t; \B_i^t) \nabla f_2(f_1(\w^t; \X_i))- \hat\E_{i\in\S_+^t}\nabla f_1(\w^t; \B_i^t) \nabla f_2(f_1(\w^t; \X_i))\right),\\
&\bar{G}^t_{1,a}  = \hat\E_{i\in\S_+^t} \nabla_2 f( f_2(f_1(\w^t; \X_i))  , a^t),\\
 &\bar{G}^t_{2, b} =\hat\E_{i\in\S_-^t} \nabla_2 f( f_2(f_1(\w^t; \X_i))  , b^t).
\end{align*}


\begin{lemma}[Lemma 11 in \citet{wang2021memory}]\label{lem:batch}
Suppose that $X=\frac{1}{n} \sum_{i=1}^n X_i$. If we sample a size-$B$ minibatch $\B$ from $\{1,\ldots,n\}$ uniformly at random, we have $\mathbb{E}\left[\frac{1}{B} \sum_{i \in \mathcal{B}} (X_i - X)\right]=0$ 
and 
\begin{align*}
\mathbb{E}\left[\left\|\frac{1}{B} \sum_{i \in \mathcal{B}} (X_i-X)\right\|^2\right] \leq \frac{n-B}{B(n-1)} \frac{1}{n} \sum_{i=1}^n\left\|X_i-X\right\|^2 \leq \frac{n-B}{B(n-1)} \frac{1}{n} \sum_{i=1}^n\left\|X_i\right\|^2.
\end{align*}
\end{lemma}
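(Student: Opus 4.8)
The plan is to reduce everything to a second-moment computation over the random inclusion indicators of the uniform size-$B$ subset $\mathcal{B}$. First I would introduce, for each index $i$, the indicator $\xi_i = \mathbb{I}[i\in\mathcal{B}]$, so that $\frac{1}{B}\sum_{i\in\mathcal{B}}(X_i - X) = \frac{1}{B}\sum_{i=1}^n \xi_i(X_i - X)$. By the symmetry of uniform sampling without replacement, every index is equally likely to appear, giving the first-order marginal $\mathbb{E}[\xi_i] = B/n$. Unbiasedness then follows immediately: $\mathbb{E}[\frac{1}{B}\sum_{i=1}^n \xi_i(X_i - X)] = \frac{1}{B}\sum_{i=1}^n \frac{B}{n}(X_i - X) = \frac{1}{n}\sum_{i=1}^n(X_i - X) = 0$, where the last step uses that $X$ is the population mean.

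For the variance bound I would set $Y_i \coloneqq X_i - X$ and record the key fact $\sum_{i=1}^n Y_i = 0$. Expanding the squared norm and taking expectation gives $\mathbb{E}\|\sum_i \xi_i Y_i\|^2 = \sum_{i,j}\mathbb{E}[\xi_i\xi_j]\langle Y_i, Y_j\rangle$, so the whole calculation hinges on the first and second inclusion probabilities. Using $\xi_i^2 = \xi_i$ we have $\mathbb{E}[\xi_i^2] = B/n$, and by symmetry the pairwise inclusion probability for $i\neq j$ is $\mathbb{E}[\xi_i\xi_j] = \frac{B(B-1)}{n(n-1)}$. Separating the diagonal from the off-diagonal terms then yields $\frac{B}{n}\sum_i \|Y_i\|^2 + \frac{B(B-1)}{n(n-1)}\sum_{i\neq j}\langle Y_i, Y_j\rangle$.

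The crucial simplification is to eliminate the cross terms using the zero-sum constraint. Since $\|\sum_i Y_i\|^2 = 0$, we obtain $\sum_{i\neq j}\langle Y_i, Y_j\rangle = -\sum_i \|Y_i\|^2$, so the two terms collapse into a single multiple of $\sum_i \|Y_i\|^2$ with coefficient $\frac{B}{n} - \frac{B(B-1)}{n(n-1)} = \frac{B(n-B)}{n(n-1)}$. Dividing by $B^2$ then produces exactly $\frac{n-B}{B(n-1)}\cdot\frac{1}{n}\sum_i \|X_i - X\|^2$, which matches the first stated bound (in fact as an equality). The remaining inequality $\frac{1}{n}\sum_i \|X_i - X\|^2 \leq \frac{1}{n}\sum_i \|X_i\|^2$ I would close with the bias-variance identity $\frac{1}{n}\sum_i \|X_i - X\|^2 = \frac{1}{n}\sum_i \|X_i\|^2 - \|X\|^2$, discarding the nonnegative term $\|X\|^2$.

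I expect the main obstacle to be pinning down the second-order inclusion probability $\mathbb{E}[\xi_i\xi_j]$ for without-replacement sampling and handling the off-diagonal sum cleanly; once the zero-sum trick is invoked to convert $\sum_{i\neq j}\langle Y_i, Y_j\rangle$ into $-\sum_i \|Y_i\|^2$, the remaining algebra on the coefficients is routine. An alternative route avoiding indicators is the classical finite-population sampling variance formula, but the indicator expansion is the most self-contained.
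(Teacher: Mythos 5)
Your proof is correct and complete. There is nothing in the paper to compare it against: the paper does not prove this lemma, but imports it verbatim as Lemma 11 from \citet{wang2021memory}. Your argument is the standard finite-population (sampling-without-replacement) variance computation: the inclusion probabilities $\mathbb{E}[\xi_i]=B/n$ and, for $i\neq j$, $\mathbb{E}[\xi_i\xi_j]=\frac{B(B-1)}{n(n-1)}$ are exactly right, the zero-sum identity $\sum_{i\neq j}\langle Y_i, Y_j\rangle = -\sum_{i}\|Y_i\|^2$ correctly collapses the off-diagonal terms, and the coefficient algebra $\frac{B}{n}-\frac{B(B-1)}{n(n-1)}=\frac{B(n-B)}{n(n-1)}$ yields the first bound --- as you note, actually an equality. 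The closing step via $\frac{1}{n}\sum_i\|X_i-X\|^2=\frac{1}{n}\sum_i\|X_i\|^2-\|X\|^2\leq \frac{1}{n}\sum_i\|X_i\|^2$ is likewise valid, so your write-up is a self-contained substitute for the external citation.
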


\begin{lemma}\label{lem:grad_recursion}
Under Assumption~\ref{asm:major}, there exists $C_G, C_\Upsilon>0$ for MIDAM  such that
\begin{align*}
\sum_{t=0}^{T-1} \E\left[\Delta^t\right] & \leq \frac{\Delta^0}{\beta_1}  + 2 T \beta_1 C_G + 5 L_F^2 \sum_{t=0}^{T-1} \E\left[\Psi^{t+1}\right] + \frac{3\eta^2 L_\Phi^2}{\beta_1^2} \sum_{t=0}^{T-1}\E\left[\Norm{\v^t}_2^2\right] + 5 C_\Upsilon \sum_{t=0}^{T-1} \E\left[\Upsilon_+^{t+1}\right] + 5 C_\Upsilon \sum_{t=0}^{T-1}\E\left[\Upsilon_-^{t+1}\right]\\
& \quad\quad + 5C_\Upsilon \sum_{t=0}^{T-1} \frac{1}{D_+}\E\left[\sum_{i\in \S_+^t} \Norm{s_i^{t+1} - s_i^t}_2^2\right] +  5C_\Upsilon \sum_{t=0}^{T-1}\frac{1}{D_-}\E\left[\sum_{i\in \S_-^t} \Norm{s_i^{t+1} - s_i^t}_2^2\right].
\end{align*}
where $\Delta^t \coloneqq \Norm{\v^t - \nabla \Phi(W^t)}_2^2$, $\Upsilon_+^t \coloneqq \frac{1}{D_+}\sum_{i\in \D_+} \Norm{s_i^t - f_1(\w^t;\X_i)}_2^2$, $\Upsilon_-^t \coloneqq \frac{1}{D_-}\sum_{i\in \D_+} \Norm{s_i^t - f_1(\w^t;\X_i)}_2^2$, $\Psi^t \coloneqq  \Norm{\alpha^t - \alpha^*(W^t)}_2^2$. 
\end{lemma}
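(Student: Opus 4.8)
The plan is to treat $\Delta^t = \Norm{\v^t - \nabla\Phi(W^t)}_2^2$ as a Lyapunov quantity, derive a one-step contraction recursion for $\E[\Delta^t]$, and telescope. Writing $e^t = \v^t - \nabla\Phi(W^t)$ and subtracting $\nabla\Phi(W^t)$ through the momentum update for $\v^t$ gives $e^t = (1-\beta_1)e^{t-1} + (1-\beta_1)\left(\nabla\Phi(W^{t-1}) - \nabla\Phi(W^t)\right) + \beta_1\left(G_W^t - \nabla\Phi(W^t)\right)$. The central device is the oracle gradient $\bar G_W^t$ defined just above the statement, which keeps exactly the same sampled bags $\S^t$ and instances $\B^t$ as $G_W^t$ but replaces every moving-average estimator $s_i^{t-1}$ by the exact inner value $f_1(\w^t;\X_i)$. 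This lets me split the fresh-gradient error into a \emph{pooling error} $G_W^t - \bar G_W^t$ (conditionally deterministic given the history), a \emph{sampling noise} $\bar G_W^t - \E_t[\bar G_W^t]$ (conditionally mean zero), and a \emph{dual-gap bias} $\E_t[\bar G_W^t] - \nabla\Phi(W^t)$.

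I would first establish the contraction. Expanding $\Norm{e^t}_2^2$ and applying the convexity bound $\Norm{(1-\beta_1)x + \beta_1 y}_2^2 \le (1-\beta_1)\Norm{x}_2^2 + \beta_1\Norm{y}_2^2$ after grouping $x = e^{t-1}$ (with the drift) and $y$ (with the fresh-gradient error), the drift term $\nabla\Phi(W^{t-1}) - \nabla\Phi(W^t)$ is controlled by the $L_\Phi$-smoothness of $\Phi$ (Lemma~\ref{lem:smooth_Phi}) as $\eta L_\Phi\Norm{\v^{t-1}}_2$, which after dividing the telescoped sum by $\beta_1$ produces the $\frac{\eta^2 L_\Phi^2}{\beta_1^2}\sum\Norm{\v^t}_2^2$ contribution. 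The sampling noise is mean zero conditioned on the sigma-field generated before drawing $\S^t$ and $\B^t$, so its cross term vanishes in expectation and only its second moment survives; this is bounded by a uniform constant $C_G$ using the boundedness of $s_i^t, a^t, b^t, \alpha^t$ and the bounded/Lipschitz/smooth properties of $h$ (Lemmas~\ref{lem:s_bounded},~\ref{lem:a_b_bounded}, and the boundedness lemma for $h$), giving the $2T\beta_1 C_G$ term.

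Next I bound the two bias pieces. For the dual gap I use the Danskin-type identity $\nabla\Phi(W^t) = \nabla_{(\w,a,b)} F(W^t, \alpha^*(W^t))$ from Lemma~\ref{lem:smooth_Phi}, together with the minibatch/bag unbiasedness of Lemma~\ref{lem:batch} which yields $\E_t[\bar G_W^t] = \nabla_{(\w,a,b)} F(W^t,\alpha^t)$; the $L_F$-Lipschitz continuity of $\nabla F$ (Lemma~\ref{lem:smooth_F}) then bounds this piece by $L_F\Norm{\alpha^t - \alpha^*(W^t)}_2$, whose square is $L_F^2\Psi^t$. For the pooling error I invoke the Lipschitz bound on $f_2$ established in the main text, combined with the chain-rule structure of the estimators and the boundedness of every factor, to show $\Norm{G_W^t - \bar G_W^t}_2^2$ is at most a constant times the averaged errors $\frac{1}{D_+}\sum_{i\in\D_+}\Norm{s_i^{t-1} - f_1(\w^t;\X_i)}_2^2$ and the analogous negative-bag sum.

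Finally I reconcile indices and telescope. The pooling error naturally involves $s_i^{t-1}$ and $\w^t$, whereas the statement is phrased in $\Upsilon_\pm^{t+1}$ and $\Psi^{t+1}$; I pass between them by the triangle inequality and the Lipschitz continuity of $f_1$ in $\w$, so $\Norm{s_i^{t-1} - f_1(\w^t;\X_i)}_2^2$ is absorbed into $\Upsilon_\pm^{t+1}$ up to the consecutive-difference residuals $\Norm{s_i^{t+1} - s_i^t}_2^2$ and an $\eta^2\Norm{\v^t}_2^2$ term that merges with the drift contribution. Summing from $t=0$ to $T-1$ and pushing the $(1-\beta_1)$ contraction through produces the leading $\frac{\Delta^0}{\beta_1}$ and the remaining terms with the stated constants. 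I expect the main obstacle to be the measurability and bias bookkeeping: since only the sampled subset of bags has its estimator $s_i$ refreshed each iteration, the pooling error, the conditionally mean-zero noise, and the dual bias must be separated under the correct conditioning, and the many-inner-function structure forces the averaged errors to be tracked through a companion recursion (in the spirit of the cited finite-sum compositional analysis) rather than a single scalar, while keeping $C_G$ and $C_\Upsilon$ uniform over all bags.
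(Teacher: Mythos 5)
Your proposal matches the paper's proof essentially step for step: the same five-way decomposition of $\v^{t+1}-\nabla\Phi(W^{t+1})$ into the contracted previous error, the $L_\Phi$-drift, the pooling error $G_W-\bar G_W$, the conditionally mean-zero sampling noise $\bar G_W-\nabla_W F(W,\alpha)$ (whose cross terms vanish and whose second moment is crudely bounded by the constant $C_G$), and the dual gap controlled by $L_F^2\Psi$; the same Young-inequality contraction producing the $\eta^2L_\Phi^2/\beta_1^2$ and $\Delta^0/\beta_1$ terms after telescoping; and the same index reconciliation of the pooling error into $\Upsilon_\pm^{t+1}$ plus the consecutive differences $\Norm{s_i^{t+1}-s_i^t}_2^2$. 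This is the paper's argument, and the plan is sound.
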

\begin{proof}
Based on the update rule of $\v^t$, we have
\begin{align*}
& \E_{t+1}\left[\Delta^{t+1}\right] = \E_t\left[\Norm{\v^{t+1} - \nabla \Phi(W^{t+1})}_2^2\right] = \E_t\left[\Norm{\v^{t+1} - \nabla_W F(W^{t+1},\alpha^*(W^{t+1}))}_2^2\right] \\
& = \E_{t+1}\left[\Norm{(1-\beta_1)\v^t + \beta_1 G_W^{t+1} - \nabla_W F(W^{t+1},\alpha^*(W^{t+1}))}_2^2\right]\\
& = \E_{t+1}\left[\left\|(1-\beta_1)\underbrace{(\v^t - \nabla \Phi(W^t))}_{\clubsuit} + (1-\beta_1)\underbrace{(\nabla \Phi(W^t) - \nabla \Phi(W^{t+1}))}_{\heartsuit} + \beta_1(G_W^{t+1} - \bar{G}_W^{t+1})\right.\right.\\
& \quad\quad\quad\quad \left.\left.+ \beta_1\underbrace{(\bar{G}_W^{t+1} - \nabla_W F(W^{t+1}, \alpha^{t+1}))}_{\diamondsuit} + \beta_1 \underbrace{(\nabla_W F(W^{t+1}, \alpha^{t+1}) - \nabla \Phi(W^{t+1}))}_{\spadesuit}\right\|_2^2\right].
\end{align*}
Note that $\E_{t+1}[\clubsuit \cdot \diamondsuit] =0$, $\E_{t+1}[\heartsuit\cdot \diamondsuit] =0$, $\E_{t+1}[\spadesuit\cdot \diamondsuit] =0$. Then,
\begin{align*}
& \E_{t+1}\left[\Delta^{t+1}\right] \\
& = (1-\beta_1)^2\Delta^t + (1-\beta_1)^2 \Norm{\nabla \Phi(W^t) - \nabla \Phi(W^{t+1})}_2^2 + \beta_1^2 \E_{t+1}[\Norm{G_W^{t+1} - \bar{G}_W^{t+1}}_2^2] \\
& \quad\quad + \beta_1^2 \E_{t+1}[\Norm{\bar{G}_W^{t+1} - \nabla_W F(W^{t+1}, \alpha^{t+1})}_2^2] + \beta_1^2 \Norm{\nabla_W F(W^{t+1}, \alpha^{t+1}) - \nabla \Phi(W^{t+1})}_2^2\\
& \quad\quad + 2 (1-\beta_1)^2 \inner{\v^t - \nabla \Phi(W^t)}{\nabla \Phi(W^t) - \nabla \Phi(W^{t+1})} + 2\beta_1(1-\beta_1)\E_{t+1}[\inner{\v^t - \nabla \Phi(W^t)}{G_W^{t+1} - \bar{G}_W^{t+1}}]\\
& \quad\quad + 2\beta_1(1-\beta_1)\E_{t+1}[\inner{\v^t - \nabla \Phi(W^t)}{\nabla_W F(W^{t+1}, \alpha^{t+1}) - \nabla \Phi(W^{t+1})}] \\
& \quad\quad + 2\beta_1(1-\beta_1)\E_{t+1}[\inner{\nabla \Phi(W^t) - \nabla \Phi(W^{t+1})}{G_W^{t+1} - \bar{G}_W^{t+1}}] \\
& \quad\quad + 2\beta_1(1-\beta_1) \E_{t+1}[\inner{\nabla \Phi(W^t) - \nabla \Phi(W^{t+1})}{\nabla_W F(W^{t+1}, \alpha^{t+1}) - \nabla \Phi(W^{t+1})}]\\
& \quad\quad + 2\beta_1^2 \E_{t+1}[\inner{G_W^{t+1} - \bar{G}_W^{t+1}}{\bar{G}_W^{t+1} - \nabla_W F(W^{t+1}, \alpha^{t+1})}] \\
& \quad\quad + 2\beta_1^2 \E_{t+1}[\inner{G_W^{t+1} - \bar{G}_W^{t+1}}{\nabla_W F(W^{t+1}, \alpha^{t+1}) - \nabla \Phi(W^{t+1})}]. 
\end{align*}
Use Young's inequality for products.
\begin{align*}
& \E_{t+1}\left[\Delta^{t+1}\right] \\
& \leq (1-\beta_1)^2(1+\beta_1) \Delta^t + \frac{3(1-\beta_1)^2(1+\beta_1)}{\beta_1}\Norm{\nabla \Phi(W^t) - \nabla \Phi(W^{t+1})}_2^2 + 2 \beta_1^2 \E_{t+1}[\Norm{\bar{G}_W^{t+1} - \nabla_W F(W^{t+1}, \alpha^{t+1})}_2^2]\\
& \quad\quad  + (3\beta_1 + 5\beta_1^2/3) \Norm{G_W^{t+1} - \bar{G}_W^{t+1}}_2^2 + (3\beta_1 + 5\beta_1^2/3) \E_{t+1}[\Norm{\nabla_W F(W^{t+1},\alpha^{t+1}) - \nabla \Phi(W^{t+1})}_2^2] \\
& \leq (1-\beta_1)\Delta^t + \frac{3L_\Phi^2 \eta^2}{\beta_1} \Norm{\v^t}_2^2 + 2\beta_1^2 \E_{t+1}[\Norm{\bar{G}_W^{t+1} - \nabla_W F(W^{t+1}, \alpha^{t+1})}_2^2] \\
& \quad\quad + 5\beta_1\underbrace{\E_{t+1}[\Norm{G_W^{t+1} - \bar{G}_W^{t+1}}_2^2]}_{	\triangle} + 5\beta_1 \Norm{\nabla_W F(W^{t+1},\alpha^{t+1}) - \nabla \Phi(W^{t+1})}_2^2.
\end{align*}
Note that $s_i^t$, $a^t$, $b^t$, $\alpha^t$ are bounded due to Lemma~\ref{lem:s_bounded}, Lemma~\ref{lem:a_b_bounded} and the projection step of updating $\alpha$. Besides, there exist $B_{f_1}, C_{f_1}, B_{f_2}, C_{f_2}, L_{f_2} > 0$ such that $\Norm{f_1}_2 \leq B_{f_1}, \Norm{\nabla f_1}_2\leq C_{f_1}$, $\Norm{f_2}_2 \leq B_{f_2},\Norm{\nabla f_2}_2\leq C_{f_2}, \Norm{\nabla^2 f_2}_2\leq L_{f_2}$ due to Assumption~\ref{asm:major}. Then, the definition of $\bar{G}_{1,\w}^{t+1}$, $\bar{G}_{2,\w}^{t+1}$, $\bar{G}_{3,\w}^{t+1}$, $\bar{G}_{1,a}^{t+1}$, $\bar{G}_{2,b}^{t+1}$ leads to
\begin{align*}
& \E_{t+1}[\Norm{\bar{G}_W^{t+1} - \nabla_W F(W^{t+1}, \alpha^{t+1})}_2^2] \\
& = \E_{t+1}[\Norm{\bar{G}_{1,\w}^{t+1}+ \bar{G}_{2,\w}^{t+1} + \bar{G}_{3,\w}^{t+1} -\nabla_\w F_1(\w^{t+1},a^{t+1})- \nabla_\w F_2(\w^{t+1},b^{t+1}) - \nabla_\w F_3(\w^{t+1},\alpha^{t+1})}_2^2]\\
& \quad\quad + \E_{t+1}[\Norm{\bar{G}_{1,a}^{t+1} - \nabla_a F_1(\w^{t+1},a^{t+1})}_2^2] + \E_{t+1}[\Norm{\bar{G}_{2,b}^{t+1} - \nabla_a F_2(\w^{t+1},b^{t+1})}_2^2]\\
& \leq 3 \E_{t+1}[\Norm{\bar{G}_{1,\w}^{t+1} - \nabla_\w F_1(\w^{t+1},a^{t+1})}_2^2] + 3 \E_{t+1}[\Norm{\bar{G}_{2,\w}^{t+1} - \nabla_\w F_2(\w^{t+1},b^{t+1})}_2^2]\\
& \quad\quad+ 3 \E_{t+1}[\Norm{\bar{G}_{3,\w}^{t+1} - \nabla_\w F_3(\w^{t+1},\alpha^{t+1})}_2^2]\\
& \quad\quad + \E_{t+1}[\Norm{\bar{G}_{1,a}^{t+1} - \nabla_a F_1(\w^{t+1},a^{t+1})}_2^2] + \E_{t+1}[\Norm{\bar{G}_{2,b}^{t+1} - \nabla_a F_2(\w^{t+1},b^{t+1})}_2^2]\\
& \leq 12 C_{f_1}^2C_{f_2}^2(2B_{f_2}^2 + B_a^2 + B_b^2) + 6 B_\Omega^2 C_{f_1}^2 C_{f_2}^2 + 4 (2B_{f_2}^2 + B_a^2 + B_b^2).
\end{align*}
We define $C_G \coloneqq 12 C_{f_1}^2C_{f_2}^2(2B_{f_2}^2 + B_a^2 + B_b^2) + 6 B_\Omega^2 C_{f_1}^2 C_{f_2}^2 + 4 (2B_{f_2}^2 + B_a^2 + B_b^2)$. Besides,
\begin{align*}
\Norm{\nabla_W F(W^{t+1},\alpha^{t+1}) - \nabla \Phi(W^{t+1})}_2^2 & = \Norm{\nabla_W F(W^{t+1},\alpha^{t+1}) - \nabla_W F(W^{t+1}, \alpha^*(W^{t+1}))}_2^2\\
& \leq L_F^2 \Norm{\alpha^{t+1} - \alpha^*(W^{t+1})}_2^2.
\end{align*}
Next, we turn to bound the $\triangle$ term. 
\begin{align*}
& \E_{t+1}[\Norm{G_W^{t+1} - \bar{G}_W^{t+1}}_2^2]\\
& \leq 3 \E_{t+1}\left[\Norm{\frac{2}{|\S_+^{t+1}|}\sum_{i\in \S_+^{t+1}}\nabla f_1(\w^{t+1};\B_i^{t+1})\left(\nabla f_2(s_i^t)f_2(s_i^t)  - \nabla f_2(f_1(\w^{t+1};\X_i)) f_2(f_1(\w^{t+1};\X_i))\right)}_2^2\right]\\
& \quad\quad + 3\E_{t+1}\left[\Norm{\frac{2}{|\S_-^{t+1}|}\sum_{i\in \S_-^{t+1}}\nabla f_1(\w^{t+1};\B_i^{t+1})\left(\nabla f_2(s_i^t)f_2(s_i^t)  - \nabla f_2(f_1(\w^{t+1};\X_i)) f_2(f_1(\w^{t+1};\X_i))\right)}_2^2\right]\\
& \quad\quad + 6 \E_{t+1}\left[\Norm{\alpha^{t+1}\left(\frac{1}{|\S_-^{t+1}|}\sum_{i\in \S_-^{t+1}} \nabla f_1(\w^t;\B_i^t)\left(\nabla f_2(s_i^t) - \nabla f_2(f_1(\w^{t+1};\X_i))\right)\right)}_2^2\right]\\
& \quad\quad + 6 \E_{t+1}\left[\Norm{\alpha^{t+1}\left(\frac{1}{|\S_+^{t+1}|}\sum_{i\in \S_+^{t+1}} \nabla f_1(\w^t;\B_i^t)\left(\nabla f_2(s_i^t) - \nabla f_2(f_1(\w^{t+1};\X_i))\right)\right)}_2^2\right]\\
& \quad\quad + \E_{t+1}\left[\Norm{\frac{2}{|\S_+^{t+1}|}\sum_{i\in \S_+^{t+1}}(f_2(s_i^t) - f_2(f_1(\w^{t+1};\X_i)))}_2^2\right] + \E_{t+1}\left[\Norm{\frac{2}{|\S_-^{t+1}|}\sum_{i\in \S_-^{t+1}}(f_2(s_i^t) - f_2(f_1(\w^{t+1};\X_i)))}_2^2\right].
\end{align*}
 Note that $s_i^t$ and $\w^{t+1}$ are independent of $\S_+^{t+1}$ and $\S_-^{t+1}$.
\begin{align*}
& \E_{t+1}[\Norm{G_W^{t+1} - \bar{G}_W^{t+1}}_2^2] \\
& \leq 12 C_{f_1}^2 \frac{1}{D_+}\sum_{i\in \D_+} \Norm{\nabla f_2(s_i^t) f_2(s_i^t) - \nabla f_2(f_1(\w^{t+1};\X_i)) f_2(f_1(\w^{t+1};\X_i))}_2^2 \\
& \quad\quad + 12 C_{f_1}^2 \frac{1}{D_-} \sum_{i\in \D_-} \Norm{\nabla f_2(s_i^t) f_2(s_i^t) - \nabla f_2(f_1(\w^{t+1};\X_i)) f_2(f_1(\w^{t+1};\X_i))}_2^2\\
& \quad\quad + 6 B_\Omega^2  C_{f_1}^2 L_{f_2}^2 \frac{1}{D_+}\sum_{i\in \D_+} \Norm{s_i^t - f_1(\w^{t+1};\X_i)}_2^2 + 6 B_\Omega^2  C_{f_1}^2 L_{f_2}^2 \frac{1}{D_-}\sum_{i\in \D_-} \Norm{s_i^t - f_1(\w^{t+1};\X_i)}_2^2\\
& \quad\quad + 4 C_{f_2}^2 \frac{1}{D_+}\sum_{i\in \D_+} \Norm{s_i^t - f_1(\w^{t+1};\X_i)}_2^2 + 4 C_{f_2}^2 \frac{1}{D_-}\sum_{i\in \D_-} \Norm{s_i^t - f_1(\w^{t+1};\X_i)}_2^2\\
& \leq C_\Upsilon\left(\frac{1}{D_+}\E_{t+1}\left[\sum_{i\in\D_+} \Norm{s_i^{t+1} - f_1(\w^{t+1};\X_i)}_2^2\right] + \frac{1}{D_-}\E_{t+1}\left[\sum_{i\in\D_-} \Norm{s_i^{t+1} - f_1(\w^{t+1};\X_i)}_2^2\right]\right)\\
&\quad\quad + C_\Upsilon\left( \frac{1}{D_+} \E_{t+1}\left[\sum_{i\in \D_+} \Norm{s_i^{t+1} - s_i^t}_2^2\right] + \frac{1}{D_-} \E_{t+1}\left[\sum_{i\in \D_-} \Norm{s_i^{t+1} - s_i^t}_2^2\right]\right),
\end{align*}
where we define $C_\Upsilon \coloneqq 48C_{f_1}^2(C_{f_2}^4 + B_{f_2}^2 L_{f_2}^2) + 12C_{f_1}^2 B_\Omega^2  L_{f_2}^2 + 8C_{f_2}^2$. Note that $s_i^{t+1} = s_i^t$ for those $i\not\in \S_+^t\cup \S_-^t$. Then,
\begin{align*}
\sum_{i\in \D_+} \Norm{s_i^{t+1} - s_i^t}_2^2 = \sum_{i\in \S_+^t} \Norm{s_i^{t+1} - s_i^t}_2^2,\quad \sum_{i\in \D_-} \Norm{s_i^{t+1} - s_i^t}_2^2 = \sum_{i\in \S_-^t} \Norm{s_i^{t+1} - s_i^t}_2^2
\end{align*}
We define 
\begin{align*}
    & \Upsilon_+^t \coloneqq \frac{1}{D_+}\sum_{i\in \D_+} \Norm{s_i^t - f_1(\w^t;\X_i)}_2^2, \quad \Upsilon_-^t \coloneqq \frac{1}{D_-}\sum_{i\in \D_+} \Norm{s_i^t - f_1(\w^t;\X_i)}_2^2,\quad  \Psi^t = \Norm{\alpha^t - \alpha^*(W^t)}_2^2,
 \end{align*}
 such that
 \begin{align*}
 \E_{t+1}[\Delta^{t+1}] &\leq (1-\beta_1)\Delta^t + \frac{3L_\Phi^2 \eta^2}{\beta_1} \Norm{\v^t}_2^2 + 2\beta_1^2 C_G + 5\beta_1 C_\Upsilon \left(\E_{t+1}[\Upsilon_+^{t+1}] + \E_{t+1}[\Upsilon_-^{t+1}]\right) + 5\beta_1 L_F^2 \Psi^{t+1} \\
 & \quad\quad + 5\beta_1 C_\Upsilon \left( \frac{1}{D_+} \E_{t+1}\left[\sum_{i\in \S_+^t} \Norm{s_i^{t+1} - s_i^t}_2^2\right] + \frac{1}{D_-} \E_{t+1}\left[\sum_{i\in \S_-^t} \Norm{s_i^{t+1} - s_i^t}_2^2\right]\right).
 \end{align*}
 Sum over $t=0,\dotsc, T-1$. 
 \begin{align*}
\sum_{t=0}^{T-1} \E\left[\Delta^t\right] & \leq \frac{\Delta^0}{\beta_1}  + 2 T \beta_1 C_G + 5 L_F^2 \sum_{t=0}^{T-1} \E\left[\Psi^{t+1}\right] + \frac{3\eta^2 L_\Phi^2}{\beta_1^2} \sum_{t=0}^{T-1}\E\left[\Norm{\v^t}_2^2\right] + 5 C_\Upsilon \sum_{t=0}^{T-1} \E\left[\Upsilon_+^{t+1}\right] + 5 C_\Upsilon \sum_{t=0}^{T-1}\E\left[\Upsilon_-^{t+1}\right]\\
& \quad\quad + 5C_\Upsilon \sum_{t=0}^{T-1} \frac{1}{D_+}\E\left[\sum_{i\in \S_+^t} \Norm{s_i^{t+1} - s_i^t}_2^2\right] +  5C_\Upsilon \sum_{t=0}^{T-1}\frac{1}{D_-}\E\left[\sum_{i\in \S_-^t} \Norm{s_i^{t+1} - s_i^t}_2^2\right].
 \end{align*}
\end{proof}

\begin{lemma}[Lemma 1 in \citet{wang2022finite}]\label{lem:func_val_recursion}
Suppose that $|\S_+^t|\equiv S_+$, $|\S_-^t|\equiv S_-$ and we define $D_+ = |\D_+|$, $D_- = |\D_-|$. Under Assumption~\ref{asm:major}, MIDAM satisfies that
\begin{align*}
\sum_{t=0}^{T-1} \E\left[\Upsilon_+^t\right] & \leq \frac{4 D_+\Upsilon_+^0}{\gamma_0  S_+} + \frac{8 T \gamma_0 B_{f_1}^2(N-B)}{B(N-1)} + \frac{20D_+^2 \eta^2 C_{f_1}^2}{\gamma_0^2 S_+^2}\sum_{t=0}^{T-1} \E\left[\Norm{\v^t}_2^2\right] - \frac{1}{\gamma_0 S_+}\sum_{t=0}^{T-1} \E\left[ \sum_{i\in \S_+^t} \Norm{s_i^{t+1} - s_i^t}_2^2\right] ,\\
\sum_{t=0}^{T-1} \E\left[\Upsilon_-^t\right] & \leq \frac{4 D_-\Upsilon_-^0}{\gamma_0 S_-} +  \frac{8 T \gamma_0 B_{f_1}^2(N-B)}{B(N-1)} + \frac{20D_-^2 \eta^2 C_{f_1}^2}{\gamma_0^2 S_-^2}\sum_{t=0}^{T-1} \E\left[\Norm{\v^t}_2^2\right] - \frac{1}{\gamma_0 S_-}\sum_{t=0}^{T-1}\E\left[ \sum_{i\in \S_-^t} \Norm{s_i^{t+1} - s_i^t}_2^2\right].
\end{align*}
\end{lemma}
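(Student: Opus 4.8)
The plan is to derive a one-step contraction for the aggregate tracking error $\Upsilon_+^t=\frac{1}{D_+}\sum_{i\in\D_+}\Norm{s_i^t-f_1(\w^t;\X_i)}_2^2$ and then unroll it. Throughout I abbreviate $u_i^t:=f_1(\w^t;\X_i)$ and $p_+:=S_+/D_+$, the probability that a fixed positive bag lies in the uniformly drawn bag-minibatch at an iteration. Recalling that a sampled bag obeys the moving-average update $s_i^{t+1}=(1-\gamma_0)s_i^t+\gamma_0 f_1(\w^{t+1};\B_i^t)$ (whose stochastic term is unbiased for the target $u_i^{t+1}$), I would start from the three-point identity
\begin{align*}
\Norm{s_i^{t+1}-u_i^{t+1}}_2^2=\Norm{s_i^t-u_i^{t+1}}_2^2+2\inner{s_i^{t+1}-u_i^{t+1}}{s_i^{t+1}-s_i^t}-\Norm{s_i^{t+1}-s_i^t}_2^2 .
\end{align*}
The last term is exactly the negative quantity appearing in the claimed bound; keeping it intact rather than discarding it is deliberate, since it is engineered to cancel the matching positive term in Lemma~\ref{lem:grad_recursion}.

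First I would analyze a single bag $i$ sampled at iteration $t$. Writing $s_i^{t+1}-s_i^t=\gamma_0(f_1(\w^{t+1};\B_i^t)-s_i^t)$ and taking expectation over the instance minibatch $\B_i^t$, the inner product collapses because $\E_{\B}[f_1(\w^{t+1};\B_i^t)]=u_i^{t+1}$, giving $2\,\E_{\B}\inner{\cdot}{\cdot}=-2\gamma_0(1-\gamma_0)\Norm{s_i^t-u_i^{t+1}}_2^2+2\gamma_0^2\,\E_{\B}\Norm{f_1(\w^{t+1};\B_i^t)-u_i^{t+1}}_2^2$, and Lemma~\ref{lem:batch} bounds the residual variance by $V:=\frac{N-B}{B(N-1)}B_{f_1}^2$. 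Substituting into the identity yields, for a sampled bag, $\E_{\B}\Norm{s_i^{t+1}-u_i^{t+1}}_2^2\le(1-2\gamma_0(1-\gamma_0))\Norm{s_i^t-u_i^{t+1}}_2^2+2\gamma_0^2V-\E_{\B}\Norm{s_i^{t+1}-s_i^t}_2^2$, whereas a frozen (unsampled) bag contributes simply $\Norm{s_i^t-u_i^{t+1}}_2^2$.

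Next I would average over the random bag-minibatch, weighting the sampled bound by $p_+$ and the frozen bound by $1-p_+$; this gives an effective factor $1-2p_+\gamma_0(1-\gamma_0)$ on $\Norm{s_i^t-u_i^{t+1}}_2^2$, a variance inflation $2p_+\gamma_0^2V$, and the negative term which aggregates across bags into $-\frac{1}{D_+}\E[\sum_{i\in\S_+^t}\Norm{s_i^{t+1}-s_i^t}_2^2]$. To recenter $u_i^{t+1}$ at $u_i^t$ I would apply Young's inequality $\Norm{s_i^t-u_i^{t+1}}_2^2\le(1+\lambda)\Norm{s_i^t-u_i^t}_2^2+(1+\lambda^{-1})\Norm{u_i^t-u_i^{t+1}}_2^2$ and control the drift by the $\w$-Lipschitzness of $f_1$, namely $\Norm{u_i^t-u_i^{t+1}}_2^2\le C_{f_1}^2\eta^2\Norm{\v^t}_2^2$ since $\w^{t+1}-\w^t=-\eta\v^t$. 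Taking $\lambda=\Theta(p_+\gamma_0)$ preserves a net contraction $\rho=\Theta(\gamma_0 S_+/D_+)$ while inflating the drift coefficient to $\Theta(C_{f_1}^2\eta^2/(p_+\gamma_0))=\Theta(C_{f_1}^2\eta^2 D_+/(\gamma_0 S_+))$, and averaging over all positive bags yields the one-step recursion
\begin{align*}
\E[\Upsilon_+^{t+1}]\le(1-\rho)\E[\Upsilon_+^t]+2p_+\gamma_0^2V+\Theta\!\Big(\tfrac{C_{f_1}^2\eta^2 D_+}{\gamma_0 S_+}\Big)\E\Norm{\v^t}_2^2-\tfrac{1}{D_+}\E\Big[\sum\nolimits_{i\in\S_+^t}\Norm{s_i^{t+1}-s_i^t}_2^2\Big].
\end{align*}

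Finally I would unroll this: from $\E[\Upsilon_+^{t+1}]\le(1-\rho)\E[\Upsilon_+^t]+X^t$ one gets $\sum_{t=0}^{T-1}\E[\Upsilon_+^t]\le\rho^{-1}(\Upsilon_+^0+\sum_{t=0}^{T-1}X^t)$, and inserting $\rho=\Theta(\gamma_0 S_+/D_+)$ turns the initial error into $\Theta(D_+\Upsilon_+^0/(\gamma_0 S_+))$, the accumulated variance $\sum_t 2p_+\gamma_0^2V$ into $\Theta(T\gamma_0V)$, the drift into $\Theta(D_+^2\eta^2 C_{f_1}^2/(\gamma_0^2 S_+^2))\sum_t\E\Norm{\v^t}_2^2$, and the frozen negative term into $-\Theta(1/(\gamma_0 S_+))\sum_t\E[\sum_{i\in\S_+^t}\Norm{s_i^{t+1}-s_i^t}_2^2]$, reproducing all four terms of the stated bound; the negative-bag inequality is identical with $+\leftrightarrow-$ throughout. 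I expect the chief obstacle to be the partial-sampling bookkeeping: because only $S_+$ of the $D_+$ inner functions are refreshed per iteration, each bag's contraction is diluted by $p_+$, so one must choose $\lambda$ and carry the $\Norm{s_i^{t+1}-s_i^t}_2^2$ term precisely enough that $1-\rho$ stays a genuine contraction and the surviving constants produce exactly the $D_+/S_+$ and $D_+^2/S_+^2$ scalings together with the $1/(\gamma_0 S_+)$ coefficient needed for cancellation in Lemma~\ref{lem:grad_recursion}.
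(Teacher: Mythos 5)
Your proposal is correct: the paper does not reprove this lemma but imports it verbatim as Lemma~1 of the cited reference \citet{wang2022finite}, and your reconstruction --- the three-point identity that retains the $-\Norm{s_i^{t+1}-s_i^t}_2^2$ term, unbiasedness of the instance minibatch killing the cross term with variance controlled by Lemma~\ref{lem:batch}, dilution of the contraction by the bag-sampling probability $S_+/D_+$, Young recentering with $\lambda=\Theta(\gamma_0 S_+/D_+)$, and unrolling --- is exactly the argument behind that lemma. All four terms, including the $D_+/(\gamma_0 S_+)$, $T\gamma_0$, and $D_+^2/(\gamma_0^2 S_+^2)$ scalings and the retained negative term needed for the cancellation against Lemma~\ref{lem:grad_recursion}, come out as you describe.
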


\begin{lemma}\label{lem:alpha_recursion}
Under Assumption~\ref{asm:major}, MIDAM satisfies that 
\begin{align*}
\sum_{t=0}^{T-1} \E\left[\Psi^t\right] &\leq \frac{4\Psi^0}{\eta'} + 64 \eta' T (B_{f_2}^2 + B_h^2) + 32 \sum_{t=0}^{T-1} \E\left[\Upsilon_+^t\right] + 32 \sum_{t=0}^{T-1} \E\left[\Upsilon_-^t\right] + \frac{20\eta^2}{(\eta')^2} \sum_{t=0}^{T-1} \E\left[\Norm{\v^t}_2^2\right]\\
&\quad\quad + \frac{32}{ D_+}\sum_{t=0}^{T-1} \E\left[\sum_{i\in\S_+^{t-1}}\Norm{s_i^t - s_i^{t-1}}_2^2\right] + \frac{32}{D_-} \E\left[\sum_{i\in\S_-^{t-1}}\Norm{s_i^t - s_i^{t-1}}_2^2\right],
\end{align*}  
 where $\Psi^t \coloneqq \Norm{\alpha^t - \alpha^*(\v^t)}_2^2$.
\end{lemma}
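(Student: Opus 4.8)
The plan is to treat the $\alpha$-update as projected stochastic gradient ascent on $F(W^t,\cdot)$, which is $1$-strongly concave in $\alpha$, while tracking the moving target $\alpha^*(W^t)$ as $W^t$ drifts. First I would decompose $\Psi^{t+1}$ with a Young's inequality whose parameter $\lambda$ is tied to $\eta'$:
\[
\Psi^{t+1} \leq (1+\lambda)\Norm{\alpha^{t+1} - \alpha^*(W^t)}_2^2 + (1+\tfrac{1}{\lambda})\Norm{\alpha^*(W^t) - \alpha^*(W^{t+1})}_2^2.
\]
The second term is controlled immediately by the $1$-Lipschitz continuity of $\alpha^*(\cdot)$ from Lemma~\ref{lem:smooth_Phi} together with $W^{t+1}-W^t=-\eta\v^t$, giving $\Norm{\alpha^*(W^t)-\alpha^*(W^{t+1})}_2^2 \le \eta^2\Norm{\v^t}_2^2$; after summation this is the origin of the $\frac{\eta^2}{(\eta')^2}$ term.

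For the first term I would exploit the affine structure of $\nabla_\alpha F$. Since $\nabla_\alpha F(W,\alpha) = c + \hat\E_{i\in\D_-}h(\w;\X_i) - \hat\E_{i\in\D_+}h(\w;\X_i) - \alpha$, the direction $G^t_{3,\alpha}-\alpha^t$ estimates $\nabla_\alpha F(W^t,\alpha^t)$, and $\alpha^*(W^t)$ obeys the fixed-point identity $\alpha^*(W^t)=\Pi_\Omega[\alpha^*(W^t)+\eta'\nabla_\alpha F(W^t,\alpha^*(W^t))]$. Writing $\xi^t := G^t_{3,\alpha} - \tilde\alpha^*(W^t)$ with $\tilde\alpha^*(W^t) := c + \hat\E_{i\in\D_-}h(\w^t;\X_i) - \hat\E_{i\in\D_+}h(\w^t;\X_i)$, the nonexpansiveness of $\Pi_\Omega$ and the \emph{exact} cancellation $\nabla_\alpha F(W^t,\alpha^t)-\nabla_\alpha F(W^t,\alpha^*(W^t)) = -(\alpha^t-\alpha^*(W^t))$ (because $F$ is quadratic in $\alpha$) give
\[
\Norm{\alpha^{t+1} - \alpha^*(W^t)}_2^2 \leq \Norm{(1-\eta')(\alpha^t - \alpha^*(W^t)) + \eta'\xi^t}_2^2.
\]
A second Young split yields a contraction factor $(1-\eta')$ on $\Psi^t$ plus a term of order $\eta'\Norm{\xi^t}_2^2$; choosing $\lambda=\Theta(\eta')$ then produces $\Psi^{t+1} \le (1-\tfrac{\eta'}{2})\Psi^t + O(\eta')\Norm{\xi^t}_2^2 + O(\tfrac{\eta^2}{\eta'})\Norm{\v^t}_2^2$, and telescoping this recursion delivers the $\frac{4\Psi^0}{\eta'}$ leading term.

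It remains to bound $\E\Norm{\xi^t}_2^2$. I would split $\xi^t$ into a bias part $G^t_{3,\alpha}-\bar{G}^t_{3,\alpha}$ (replacing the tracked $s_i$ by the true $f_1(\w^t;\X_i)$) and a bag-sampling part $\bar{G}^t_{3,\alpha}-\tilde\alpha^*(W^t)$, where $\bar{G}^t_{3,\alpha}$ uses $f_2(f_1(\w^t;\X_i))=h(\w^t;\X_i)$. The bias part is handled by the Lipschitz estimate $\Norm{f_2(s_i)-f_2(f_1(\w^t;\X_i))}^2 \le c^2\Norm{s_i-f_1(\w^t;\X_i)}^2$, which after taking expectation over the uniformly sampled bags turns into $\Upsilon_+^t$ and $\Upsilon_-^t$. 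The sampling part is mean-zero conditionally on $W^t$, and its variance is bounded through Lemma~\ref{lem:batch} using $|h|\le B_h$ and $\Norm{f_2}\le B_{f_2}$, producing the $64\eta'T(B_{f_2}^2+B_h^2)$ term. One subtlety is an index mismatch: $G^t_{3,\alpha}$ uses $s_i^{t-1}$ while $\Upsilon^t$ is defined with $s_i^t$, so I would insert $\pm s_i^t$ and absorb the difference via $\Norm{s_i^{t-1}-f_1}^2 \le 2\Norm{s_i^{t-1}-s_i^t}^2 + 2\Norm{s_i^t-f_1}^2$; this is exactly where the $\frac{1}{D_\pm}\sum_{i\in\S_\pm^{t-1}}\Norm{s_i^t - s_i^{t-1}}_2^2$ terms appear.

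The hard part will be the bookkeeping of the two interacting Young splits so that the contraction stays bounded away from $1$ by $\Theta(\eta')$ while the coefficients on the noise terms remain $O(\eta')$ and $O(\eta^2/\eta')$, and simultaneously keeping the index shift between $s_i^{t-1}$ and $s_i^t$ consistent with the statement. I would not grind the exact constants ($32$, $64$, $20$), but the $1$-strong concavity is what makes the contraction exact rather than approximate, so I would emphasize the quadratic-in-$\alpha$ structure of $F$ to avoid any Lipschitz-gradient slack.
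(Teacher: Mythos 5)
Your proposal is correct and follows essentially the same route as the paper's proof: projection non-expansiveness against the fixed-point identity for $\alpha^*(W^t)$, the exact $(1-\eta')$ contraction from the quadratic-in-$\alpha$ structure, Young splits with $\Theta(\eta')$ parameters, the $1$-Lipschitzness of $\alpha^*(\cdot)$ for the drift, Lipschitzness of $f_2$ to convert the tracking error into $\Upsilon_\pm^t$, and the $\pm s_i^t$ insertion for the index shift. The only cosmetic difference is that the paper bounds the squared stochastic-gradient error crudely by $8(B_{f_2}^2+B_h^2)$ via boundedness rather than invoking the minibatch variance lemma, which yields the same $64\eta'T(B_{f_2}^2+B_h^2)$ term.
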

\begin{proof}
We define $G_\alpha^t \coloneqq G_{3,\alpha}^t - \alpha^t$ and $\bar{G}_\alpha^t\coloneqq c + \frac{1}{D_-}\sum_{i\in \D_-} f(s_i^{t-1}) - \frac{1}{D_+}\sum_{i\in \D_+} f(s_i^{t-1})- \alpha^t $. The update formula of $\alpha$ and the 1-strong convexity of $F(W,\cdot)$ implies that
\begin{align*}
& \E_t\left[\Norm{\alpha^{t+1} - \alpha^*(W^t)}_2^2\right]\\
& =\E_t\left[\Norm{\Pi_{\Omega}[\alpha^t + \eta' G_\alpha^t] - \Pi_{\Omega}[\alpha^*(W^t) + \eta' \nabla_\alpha 
 F(W^t,\alpha^t)]}_2^2\right]\\
& \leq \E_t\left[\Norm{\alpha^t + \eta' G_\alpha^t - \alpha^*(W^t) - \eta' \nabla_\alpha 
 F(W^t,\alpha^t)}_2^2\right]\\
 & = \Norm{\alpha^t + \eta'\nabla_\alpha F(W^t,\alpha^t)- \alpha^*(W^t) -\eta'\nabla_\alpha F(W^t,\alpha^*(W^t))}_2^2 + (\eta')^2 \E_t\left[\Norm{G_\alpha^t - \nabla_\alpha F(W^t,\alpha^t)}^2 \right]\\
 & \quad\quad + 2\eta' \E_t\left[\inner{\alpha^t + \eta'\nabla_\alpha F(W^t,\alpha^t)- \alpha^*(W^t) -\eta'\nabla_\alpha F(W^t,\alpha^*(W^t))}{G_\alpha^t - \nabla_\alpha F(W^t,\alpha^t)}\right]\\
 & = \Norm{\alpha^t + \eta'\nabla_\alpha F(W^t,\alpha^t)- \alpha^*(W^t) -\eta'\nabla_\alpha F(W^t,\alpha^*(W^t))}_2^2 + (\eta')^2 \E_t\left[\Norm{G_\alpha^t - \nabla_\alpha F(W^t,\alpha^t)}^2 \right]\\
 & \quad\quad + 2\eta' \E_t\left[\inner{\alpha^t + \eta' \nabla_\alpha F(W^t,\alpha^t)- \alpha^*(W^t) -\eta'\nabla_\alpha F(W^t,\alpha^*(\v^t))}{\bar{G}_\alpha^t - \nabla_\alpha F(W^t,\alpha^t)}\right]\\
 & \leq (1+\eta'/2)\Norm{\alpha^t + \eta_\alpha \nabla_\alpha F(W^t,\alpha^t)- \alpha^*(W^t) -\eta'\nabla_\alpha F(W^t,\alpha^*(W^t))}_2^2 + (\eta')^2 \E_t\left[\Norm{G_\alpha^t - \nabla_\alpha F(W^t,\alpha^t)}^2 \right] \\
 & \quad\quad + \eta' \Norm{\bar{G}_\alpha^t - \nabla_\alpha F(W^t,\alpha^t)}^2.  
\end{align*}  
We have
\begin{align*}
& \Norm{\bar{G}_\alpha^t - \nabla_\alpha F(W^t,\alpha^t)}_2^2 \\
& = \Norm{\frac{1}{D_-}\sum_{i\in \D_-}f_2(s_i^{t-1}) -\frac{1}{D_+}\sum_{i\in \D_+}f_2(s_{+,i}^t)  - \frac{1}{D_-}\sum_{i\in \D_-}f_2(f_1(\w^t;\X_i)) +\frac{1}{D_+}\sum_{i\in \D_+}f_2(f_1(\w^t;\X_i))}_2^2 \\
& \leq \frac{2}{D_+}\sum_{i\in\D_+}\Norm{s_i^{t-1} - f_1(\w^t;\X_i)}_2^2 + \frac{2}{D_-}\sum_{i\in\D_-} \Norm{s_i^{t-1} - f_1(\w^t;\X_i)}_2^2\\
& \leq \frac{4}{D_+}\sum_{i\in\D_+}\Norm{s_i^t - f_1(\w^t;\X_i)}_2^2 + \frac{4}{D_-}\sum_{i\in\D_-} \Norm{s_i^t - f_1(\w^t;\X_i)}_2^2 + \frac{4}{D_+}\sum_{i\in\D_+} \Norm{s_i^t - s_i^{t-1}}_2^2 + \frac{4}{D_-}\sum_{i\in\D_-} \Norm{s_i^t - s_i^{t-1}}_2^2\\
& = \frac{4}{D_+}\sum_{i\in\D_+}\Norm{s_i^t - f_1(\w^t;\X_i)}_2^2 + \frac{4}{D_-}\sum_{i\in\D_-} \Norm{s_i^t - f_1(\w^t;\X_i)}_2^2 \\
& \quad\quad + \frac{4}{D_+}\sum_{i\in\S_+^{t-1}} \Norm{s_i^t - s_i^{t-1}}_2^2 + \frac{4}{D_-}\sum_{i\in\S_-^{t-1}} \Norm{s_i^t - s_i^{t-1}}_2^2,
\end{align*} 
where the last step is due to $s_i^t = s_i^{t-1}$ for those $i\not\in \S_+^{t-1}\cup \S_-^{t-1}$. Besides, we have
\begin{align*}
& \Norm{G_\alpha^t - \nabla_\alpha F(W^t,\alpha^t)}_2^2 \\
& = \Norm{\frac{1}{S_-^t} \sum_{i\in \S_-^t}f_2(s_i^{t-1}) - \frac{1}{S_+}\sum_{i\in \S_+^t}f_2(s_i^{t-1}) - \frac{1}{D_-}\sum_{i\in\D_-}h(\w^t;\X_i) - \frac{1}{D_+}\sum_{i\in\D_+} h(\w;\X_i)}_2^2 \leq 8 (B_{f_2}^2 + B_h^2).
\end{align*}
Due to the 1-strong convexity of $F(W,\cdot)$, we have
\begin{align*}
& \E\left[\Norm{\alpha^t + \eta'\nabla_\alpha F(W^t,\alpha^t)- \alpha^*(W^t) -\eta'\nabla_\alpha F(W^t,\alpha^*(W^t))}_2^2\right] \leq (1-\eta') \E\left[\Norm{\alpha^t - \alpha^*(W^t)}_2^2\right].
\end{align*}
Note that $\alpha^*(\cdot)$ is 1-Lipschitz (Lemma~\ref{lem:smooth_Phi}) such that 
\begin{align*}
& \E\left[\Norm{\alpha^{t+1} - \alpha^*(W^{t+1})}_2^2\right]\\
& \leq (1+\eta'/4)\E\left[\Norm{\alpha^{t+1} - \alpha^*(W^t)}_2^2\right] + (1+4/\eta')\E\left[\Norm{\alpha^*(W^t) - \alpha^*(W^{t+1})}_2^2\right]\\
& \leq (1-\eta'/4) \E\left[\Norm{\alpha^t - \alpha^*(W^t)}_2^2\right] + 16(\eta')^2(B_{f_2}^2 + B_h^2) + 8\eta' (\E[\Upsilon_+^t] + \E[\Upsilon_-^t])  + \frac{5\eta^2}{\eta'}\E\left[\Norm{\v^t}_2^2\right]\\
& \quad\quad + \frac{8\eta'}{D_+} \E\left[\sum_{i\in\S_+^{t-1}}\Norm{s_i^t - s_i^{t-1}}_2^2\right] + \frac{8\eta'}{D_-} \E\left[\sum_{i\in\S_-^{t-1}}\Norm{s_i^t - s_i^{t-1}}_2^2\right].
\end{align*}
Define $\Psi^t \coloneqq \Norm{\alpha^t - \alpha^*(\v^t)}_2^2$. Then, we have
\begin{align*}
\sum_{t=0}^{T-1} \E\left[\Psi^t\right] &\leq \frac{4\Psi^0}{\eta'} + 64 \eta' T (B_{f_2}^2 + B_h^2) + 32 \sum_{t=0}^{T-1} \E\left[\Upsilon_+^t\right] + 32 \sum_{t=0}^{T-1} \E\left[\Upsilon_-^t\right] + \frac{20\eta^2}{(\eta')^2} \sum_{t=0}^{T-1} \E\left[\Norm{\v^t}_2^2\right]\\
&\quad\quad + \frac{32}{ D_+}\sum_{t=0}^{T-1} \E\left[\sum_{i\in\S_+^{t-1}}\Norm{s_i^t - s_i^{t-1}}_2^2\right] + \frac{32}{D_-} \E\left[\sum_{i\in\S_-^{t-1}}\Norm{s_i^t - s_i^{t-1}}_2^2\right].
\end{align*}
\end{proof}

\section{Proof of Theorem~\ref{thm:main}}\label{sec:thm_proof}

According to Lemma~\ref{lem:smooth_Phi}, we have
\begin{align*}
\Phi(W^{t+1}) - \Phi(W^t) 
& = \inner{\nabla \Phi(W^t)}{W^{t+1} - W^t} + \frac{L_{\Phi}}{2}\Norm{W^{t+1} - W^t}_2^2  = -\inner{\nabla \Phi(W^t)}{\eta \v^t} + \frac{L_{\Phi}}{2}\Norm{\eta \v^t}_2^2\\
& \leq \frac{\eta}{2}\Norm{\v^t - \nabla \Phi(W^t)}_2^2 - \frac{\eta}{2}\Norm{\nabla \Phi(W^t)}_2^2 - \frac{\eta(1 - \eta L_{\Phi})}{2}\Norm{\v^t}_2^2.
\end{align*}
If $\eta_v \leq \frac{1}{2L_\Phi}$, Lemma~\ref{lem:grad_recursion} implies that
\begin{align*}
& \sum_{t=0}^{T-1} \E\left[\Norm{\nabla \Phi(W^t)}_2^2\right] \leq \frac{2(\Phi(W^0) - \inf \Phi)}{\eta} + \sum_{t=0}^{T-1} \E\left[\Delta^t\right] - \frac{1}{2} \sum_{t=0}^{T-1} \E\left[\Norm{\v^t}_2^2\right]\\
&\leq \frac{2(\Phi(W^0) - \inf \Phi)}{\eta} + \frac{\Delta^0}{\beta_1}  + 2 T \beta_1 C_G + 5 L_F^2 \sum_{t=0}^{T-1} \E\left[\Psi^{t+1}\right] + 5 C_\Upsilon \sum_{t=0}^{T-1} \E\left[\Upsilon_+^{t+1}\right] + 5 C_\Upsilon \sum_{t=0}^{T-1}\E\left[\Upsilon_-^{t+1}\right]\\
& \quad\quad - \left(\frac{1}{2}-\frac{3\eta^2 L_\Phi^2}{\beta_1^2}\right) \sum_{t=0}^{T-1}\E\left[\Norm{\v^t}_2^2\right] + 5C_\Upsilon \sum_{t=0}^{T-1} \frac{1}{D_+}\E\left[\sum_{i\in \S_+^t} \Norm{s_i^{t+1} - s_i^t}_2^2\right] +  5C_\Upsilon \sum_{t=0}^{T-1}\frac{1}{D_-}\E\left[\sum_{i\in \S_-^t} \Norm{s_i^{t+1} - s_i^t}_2^2\right].
\end{align*}
Apply Lemma~\ref{lem:func_val_recursion} and Lemma~\ref{lem:alpha_recursion}.
\begin{align*}
& \sum_{t=0}^{T-1} \E\left[\Norm{\nabla \Phi(W^t)}_2^2\right] \\
& \leq \frac{2(\Phi(W^0) - \inf \Phi)}{\eta} + \frac{\Delta^0}{\beta_1} + \frac{20 L_F^2 \E\left[\Psi^1\right]}{\eta'} + \frac{20 (C_\Upsilon + 32 L_F^2)D_+\E\left[\Upsilon_+^1\right]}{\gamma_0  S_+} + \frac{20 (C_\Upsilon + 32 L_F^2)D_+\E\left[\Upsilon_-^1\right]}{\gamma_0  S_-} \\
& \quad\quad + 2 T \beta_1 C_G  + 320 L_F^2 \eta' T (B_{f_2}^2 + B_h^2) + \frac{80 T \gamma_0 (C_\Upsilon + 32 L_F^2) B_{f_1}^2(N-B)}{B(N-B)} \\
&\quad\quad + \frac{5(C_\Upsilon + 32 L_F^2)}{ D_+} \E\left[\sum_{i\in\S_+^t}\Norm{s_i^1 - s_i^0}_2^2\right] + \frac{5(C_\Upsilon + 32 L_F^2)\left(1-\frac{D_+}{\gamma_0 S_+}\right)}{ D_+}\sum_{t=1}^{T-1} \E\left[\sum_{i\in\S_+^t}\Norm{s_i^{t+1} - s_i^t}_2^2\right] \\
&\quad\quad + \frac{5(C_\Upsilon + 32 L_F^2)}{D_-} \E\left[\sum_{i\in\S_-^t}\Norm{s_i^1 - s_i^0}_2^2\right] + \frac{5(C_\Upsilon + 32 L_F^2)\left(1-\frac{D_-}{\gamma_0 S_-}\right)}{D_-} \sum_{t=1}^{T-1}\E\left[\sum_{i\in\S_-^t}\Norm{s_i^{t+1} - s_i^t}_2^2\right]\\
& \quad\quad -\left(\frac{1}{2}-\frac{3\eta^2 L_\Phi^2}{\beta_1^2} - \frac{100 L_F^2 \eta^2}{(\eta')^2} -  \frac{100\eta^2(C_\Upsilon + 32 L_F^2)C_{f_1}^2 D_+^2}{\gamma_0^2 S_+^2} -  \frac{100\eta^2(C_\Upsilon + 32 L_F^2)C_{f_1}^2 D_-^2}{\gamma_0^2 S_-^2}\right)  \sum_{t=0}^{T-1} \E\left[\Norm{\v^t}_2^2\right]\\
& \quad\quad + \left(\frac{100 L_F^2 \eta^2}{(\eta')^2} +  \frac{100\eta^2(C_\Upsilon + 32 L_F^2)C_{f_1}^2 D_+^2}{\gamma_0^2 S_+^2} +  \frac{100\eta^2(C_\Upsilon + 32 L_F^2)C_{f_1}^2 D_-^2}{\gamma_0^2 S_-^2}\right)\E\left[\Norm{\v^T}_2^2\right].
\end{align*} 
Due to the update formula of $\v^t$, we have $\Norm{\v^t}_2 \leq C_\v$ for all $t\geq 0$, $C_\v \coloneqq 2C_{f_1} C_{f_2} (2B_{f_2} + B_a + B_b) + 2 B_\Omega C_{f_1} C_{f_2} + 2(2B_{f_2} + B_a + B_b) $. We choose $s_i^0 = 0$ for all $i\in \D_+\cup \D_-$ and the step sizes as follows
\begin{align*}
& \beta_1 \leq \frac{\epsilon^2}{22 C_G}, \quad \eta' \leq \frac{\epsilon^2}{3520 L_F^2(B_{f_2}^2 + B_h^2)},\quad \gamma_0 \leq \frac{\epsilon^2B(N-1)}{880(C_\Upsilon + 32L_F^2)B_{f_1}^2(N-B)},\\
& \eta \leq \min\left\{\frac{\beta_1}{4\sqrt{3} L_\Phi}, \frac{\eta'}{40 L_F}, \frac{\gamma_0 S_+}{40 \sqrt{C_\Upsilon + 32L_F^2}C_{f_1}D_+},  \frac{\gamma_0 S_-}{40 \sqrt{C_\Upsilon + 32L_F^2}C_{f_1}D_-}\right\}.
\end{align*}
After $T= \max\left\{\frac{22(\Phi(W^0) - \inf\Phi)}{\eta \epsilon^2}, \frac{11\Delta^0}{\beta_1 \epsilon^2}, \frac{220 L_F^2 \E\left[\Upsilon^1\right]}{\eta'\epsilon^2}, \frac{220(C_\Upsilon + 32L_F^2) D_+ \E[\Upsilon_+^1]}{\gamma_0 S_+ \epsilon^2}, \frac{220(C_\Upsilon + 32L_F^2) D_- \E[\Upsilon_-^1]}{\gamma_0 S_- \epsilon^2}\right\}$ iterations, we have
\begin{align*}
\frac{1}{T}\sum_{t=0}^{T-1}\E\left[\Norm{\Phi(W^t)}_2^2\right] + \frac{1}{4T}   \sum_{t=0}^{T-1}\E\left[\Norm{\v^t}_2^2\right] \leq \epsilon^2.
\end{align*}
According to Lemma~\ref{lem:func_val_recursion}, we have $\frac{1}{T}\sum_{t=0}^{T-1}\E\left[\Upsilon_+^t\right] = \O(\epsilon^2)$, $\frac{1}{T}\sum_{t=0}^{T-1}\E\left[\Upsilon_-^t\right] = \O(\epsilon^2)$ with $\gamma_0 = \O(\epsilon^2)$, $T = \O(\max\{\frac{D_+}{S_+}, \frac{D_-}{S_-}\}\frac{\epsilon^{-4}}{B})$.

\section{More Figures}\label{sec:mf}
\begin{figure}
    \centering
        \scalebox{.4}{\includegraphics{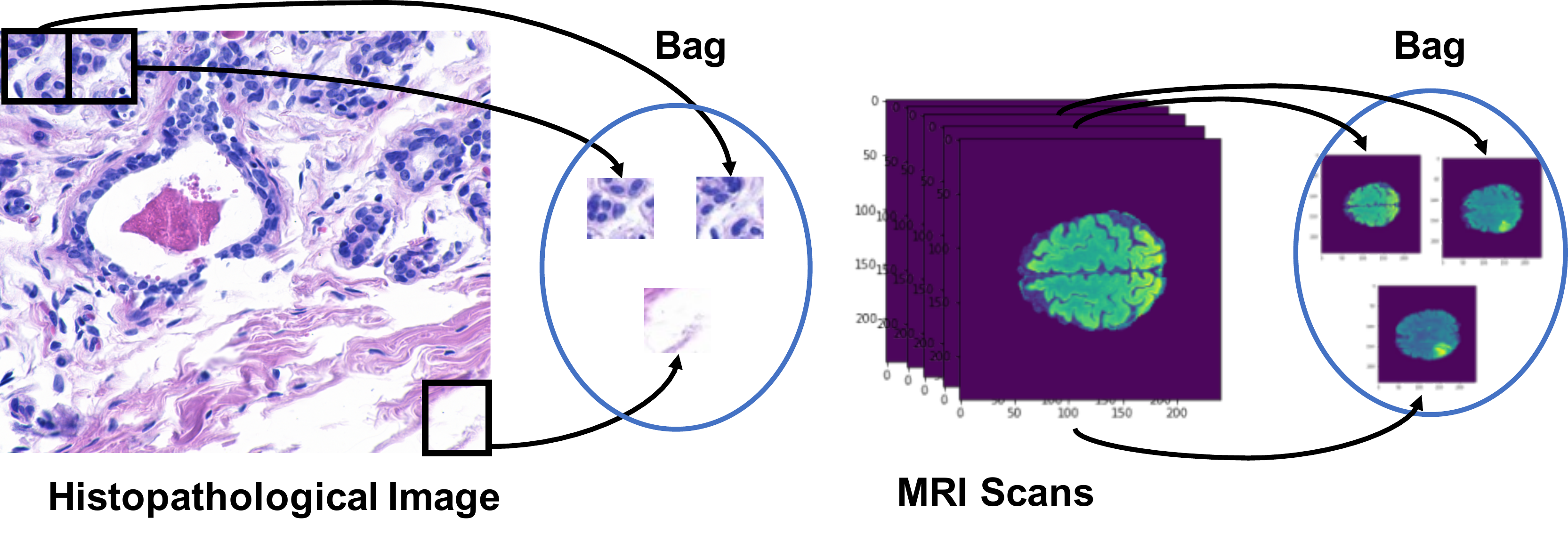}}

    \caption{Illustration of MIL for medical data (Breast Cancer on the left and PDGM on the right).}
    \label{fig:bc-examples}
\end{figure}


\begin{figure*}[t]
    \centering

    \subfigure[smx, training, MUSK1]{\includegraphics[scale=0.16]{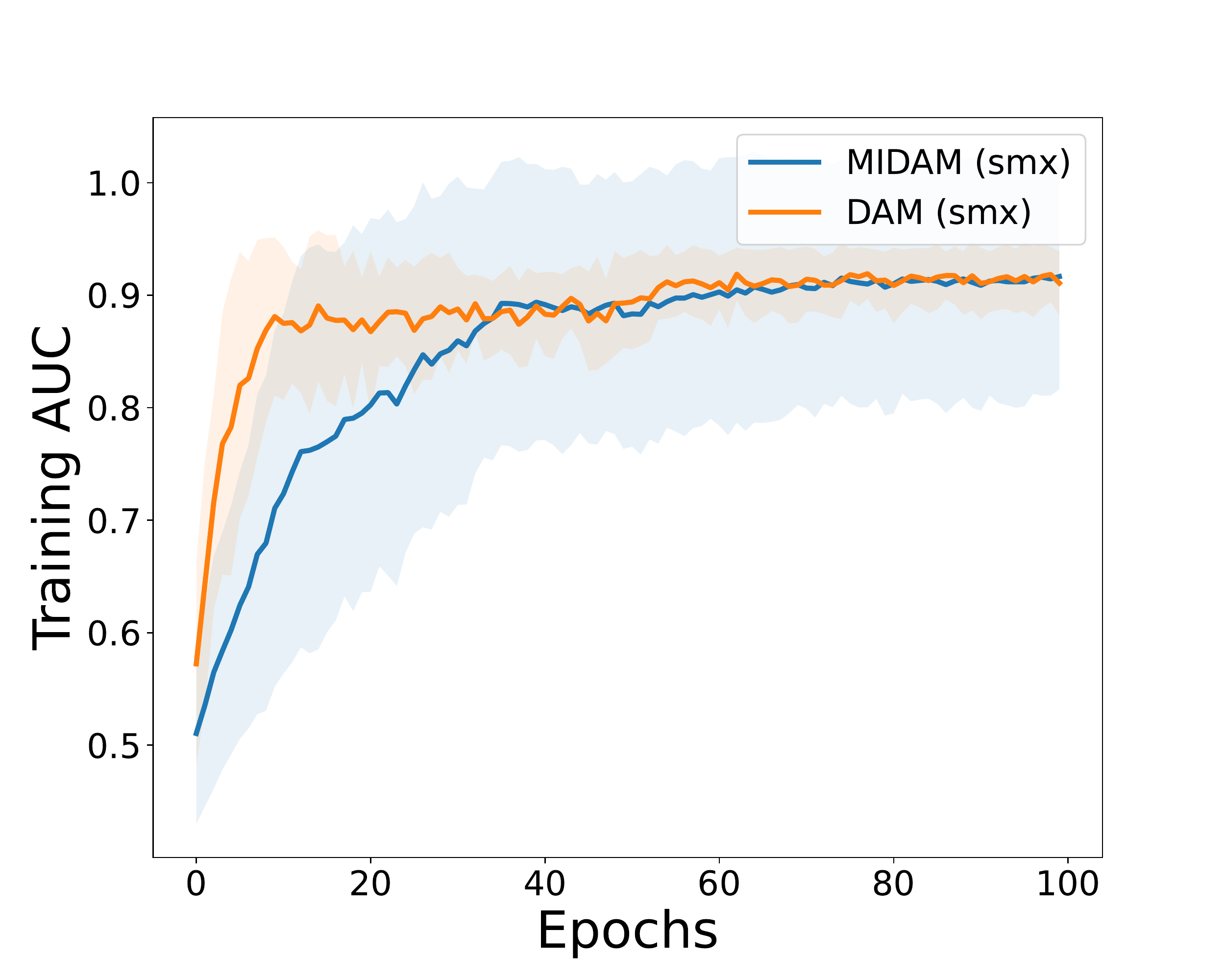}}
    \subfigure[att, training, MUSK1]{\includegraphics[scale=0.16]{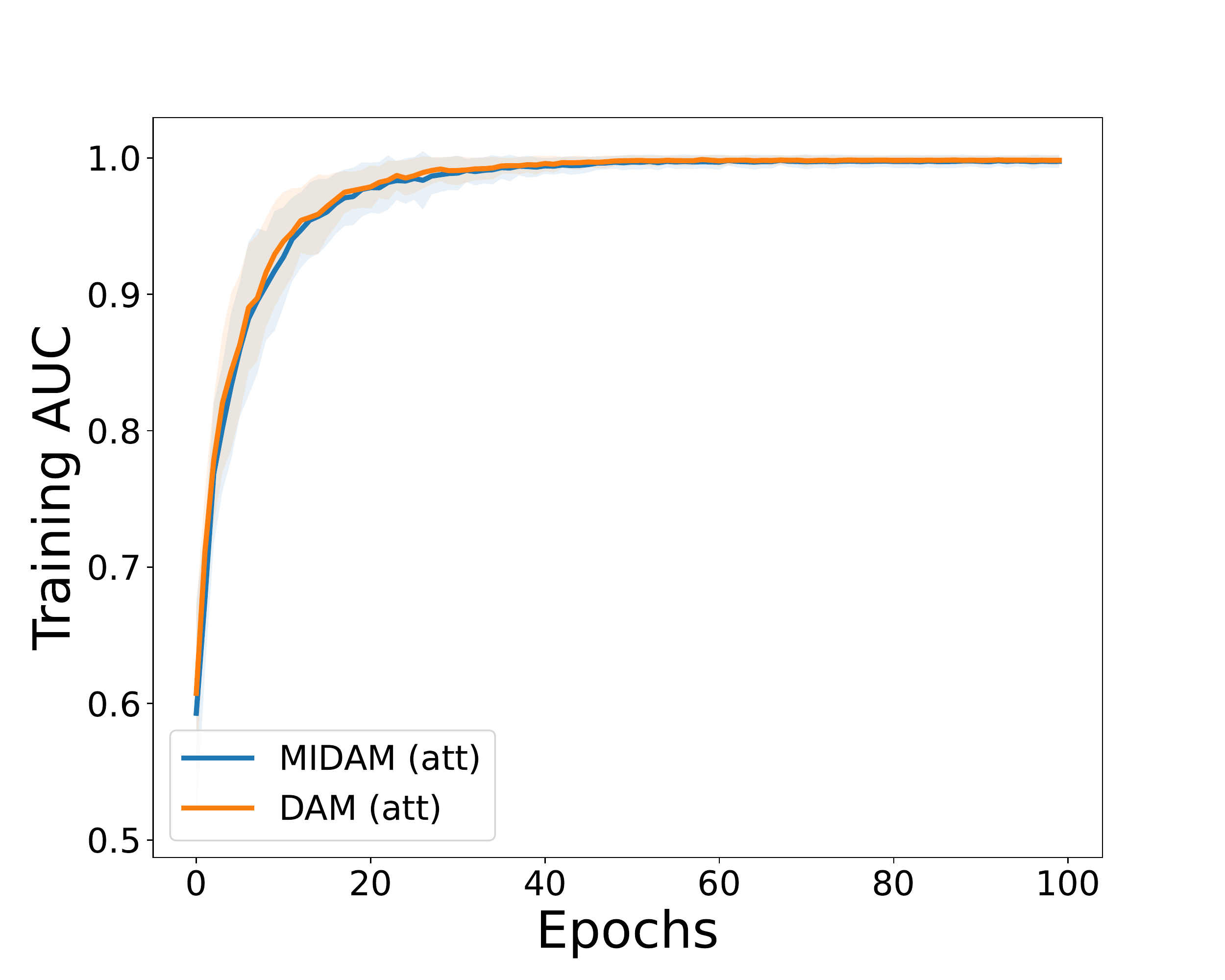}}
    \subfigure[smx, training, MUSK2]{\includegraphics[scale=0.16]{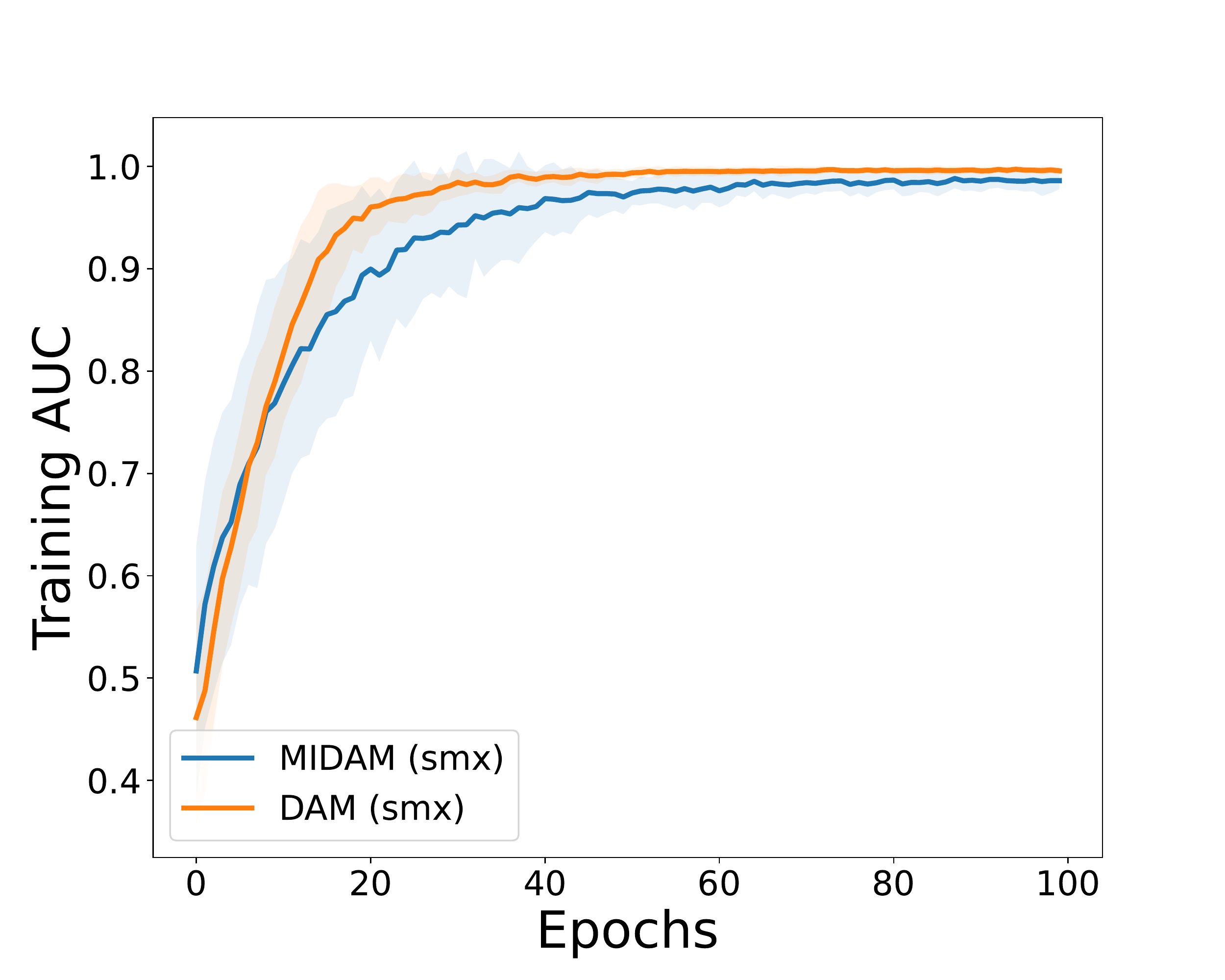}}
    \subfigure[att, training, MUSK2]{\includegraphics[scale=0.16]{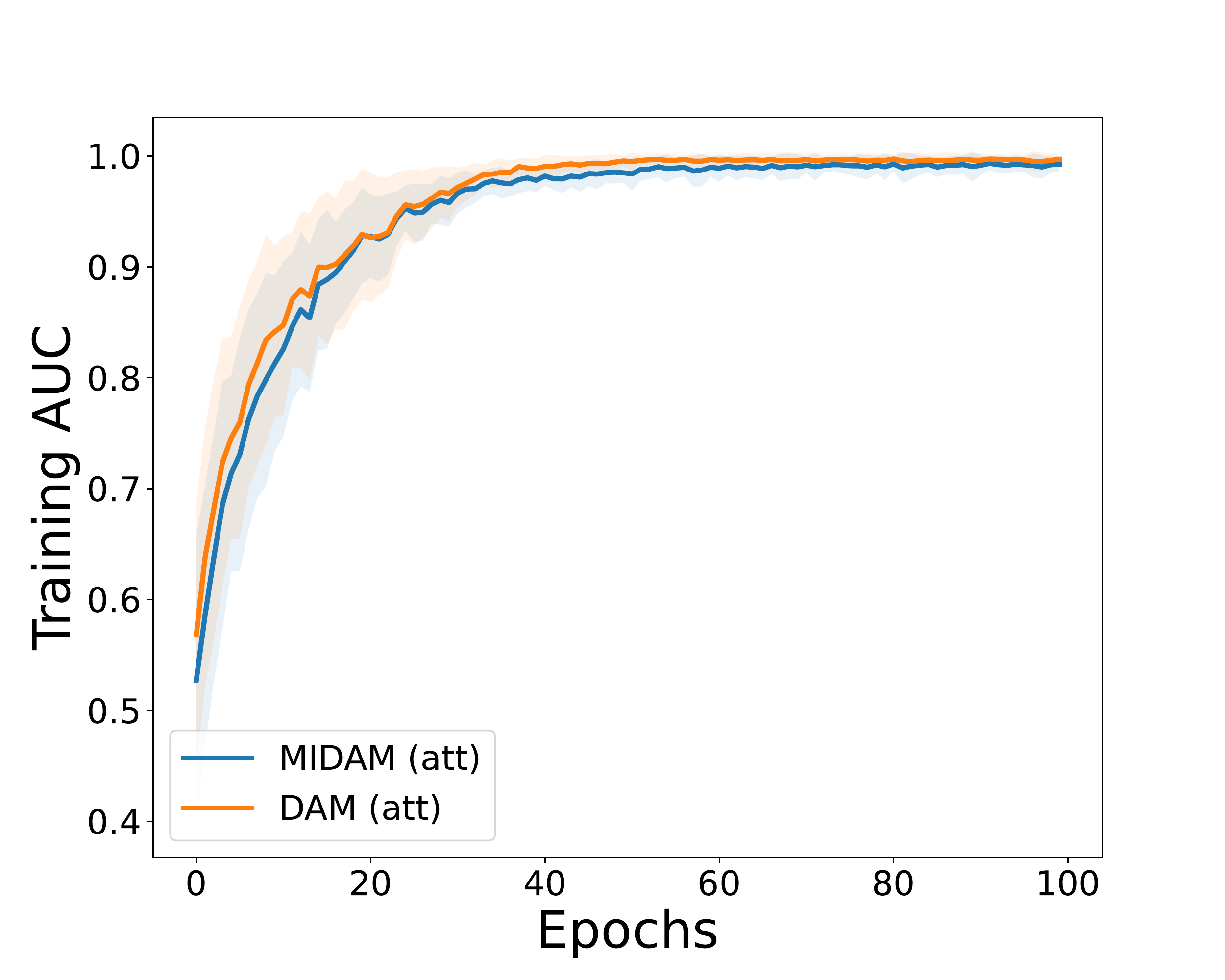}}

     \subfigure[smx, testing, MUSK1]{\includegraphics[scale=0.16]{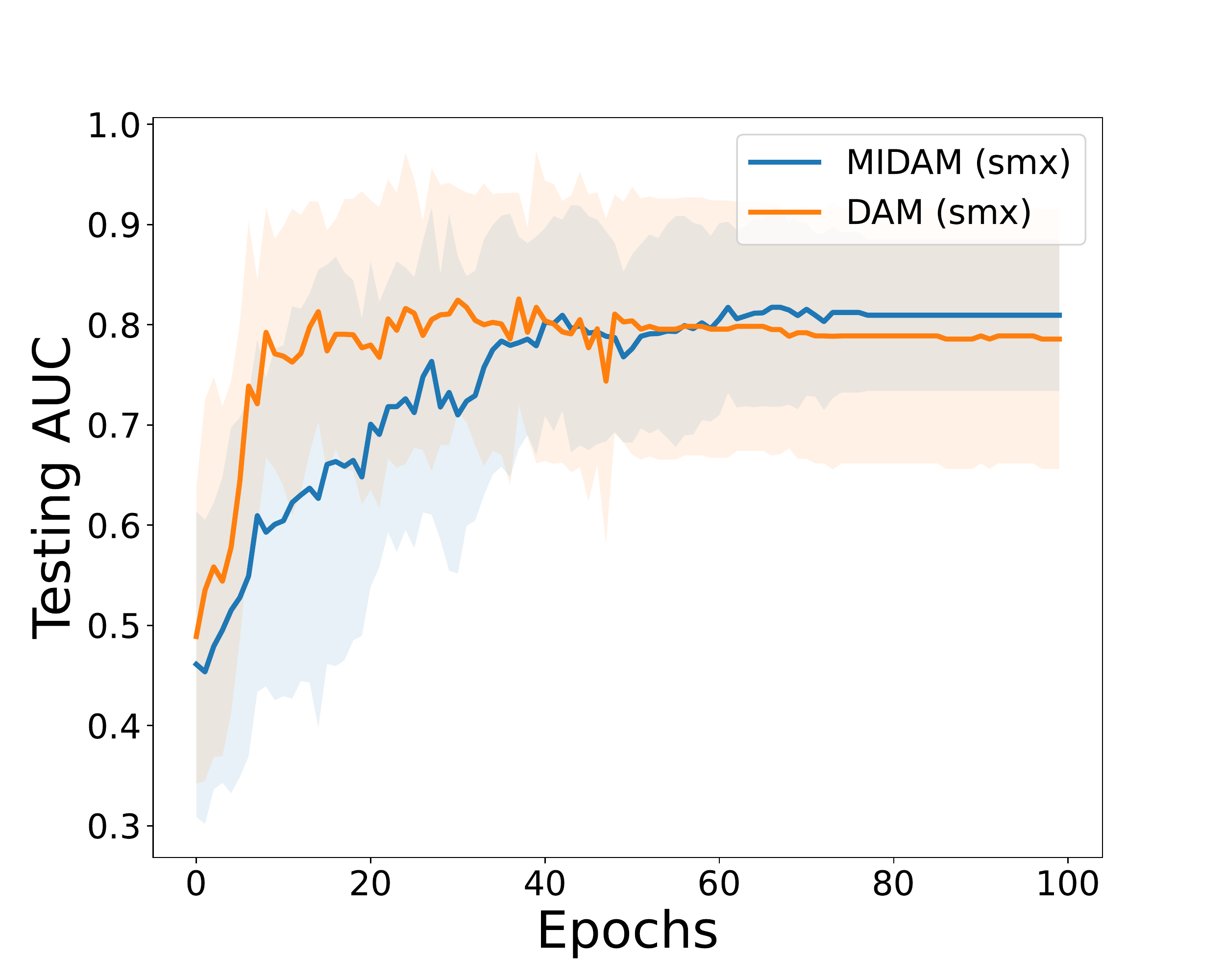}}
    \subfigure[att, testing, MUSK1]{\includegraphics[scale=0.16]{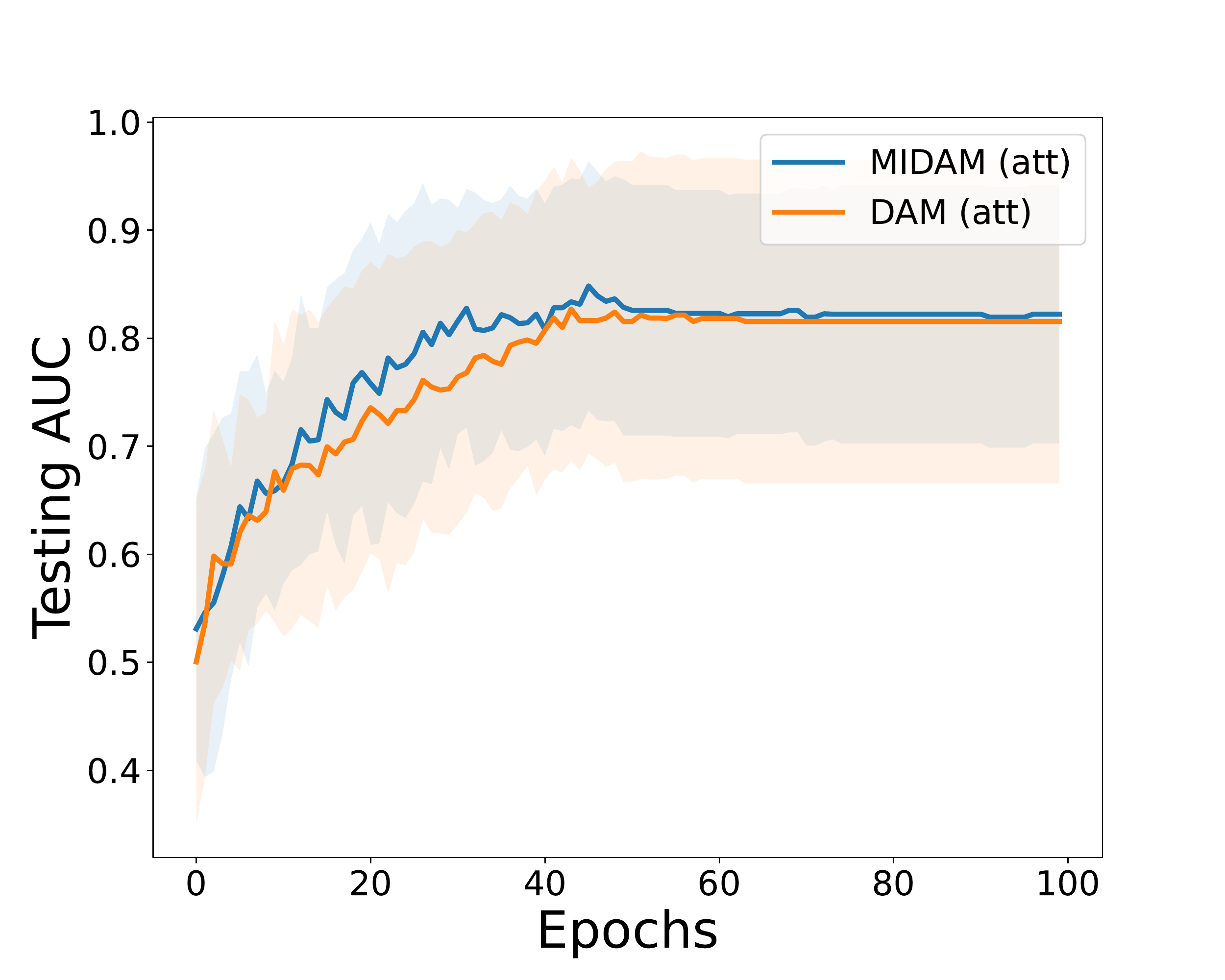}}
    \subfigure[smx, testing, MUSK2]{\includegraphics[scale=0.16]{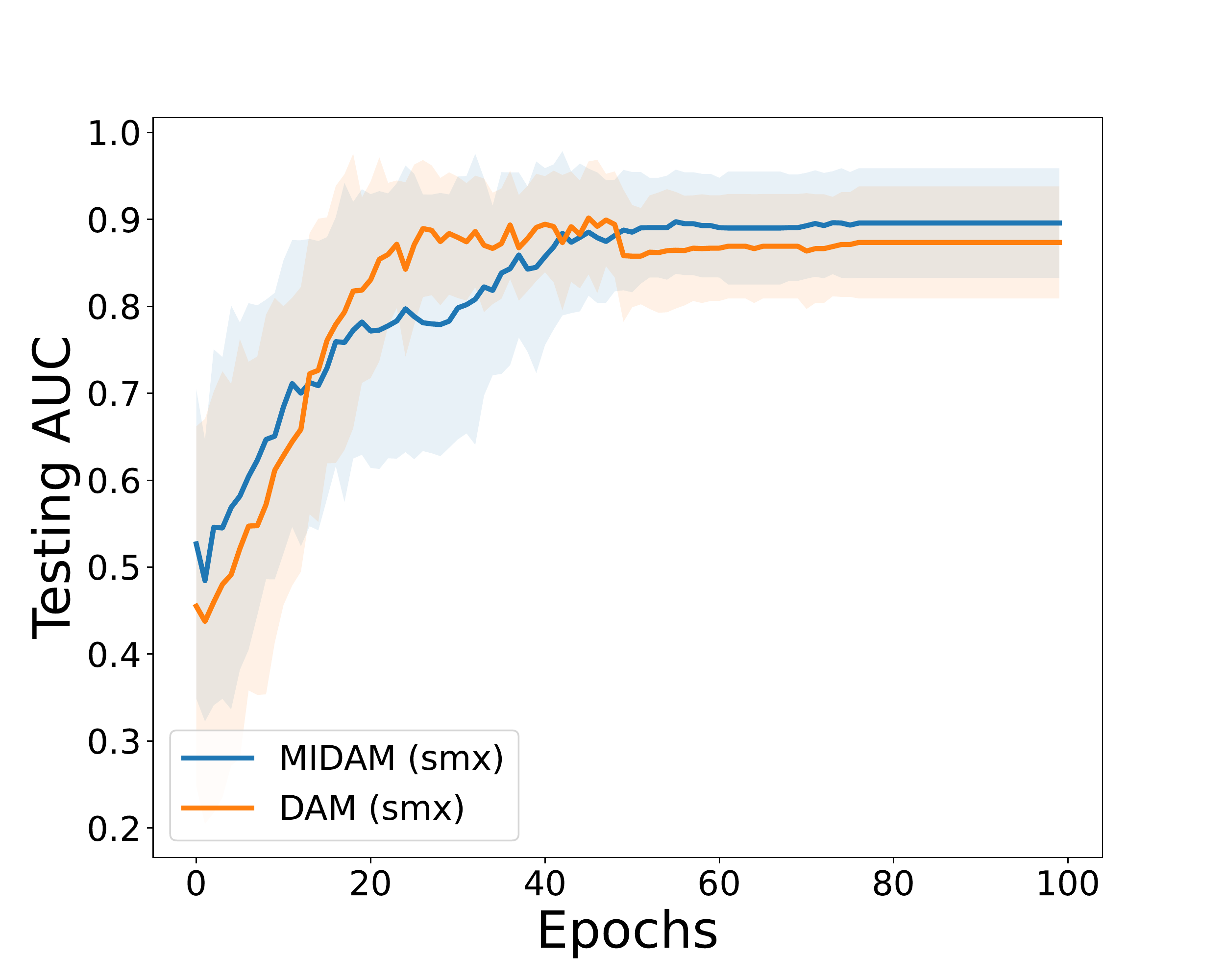}}
    \subfigure[att, testing, MUSK2]{\includegraphics[scale=0.16]{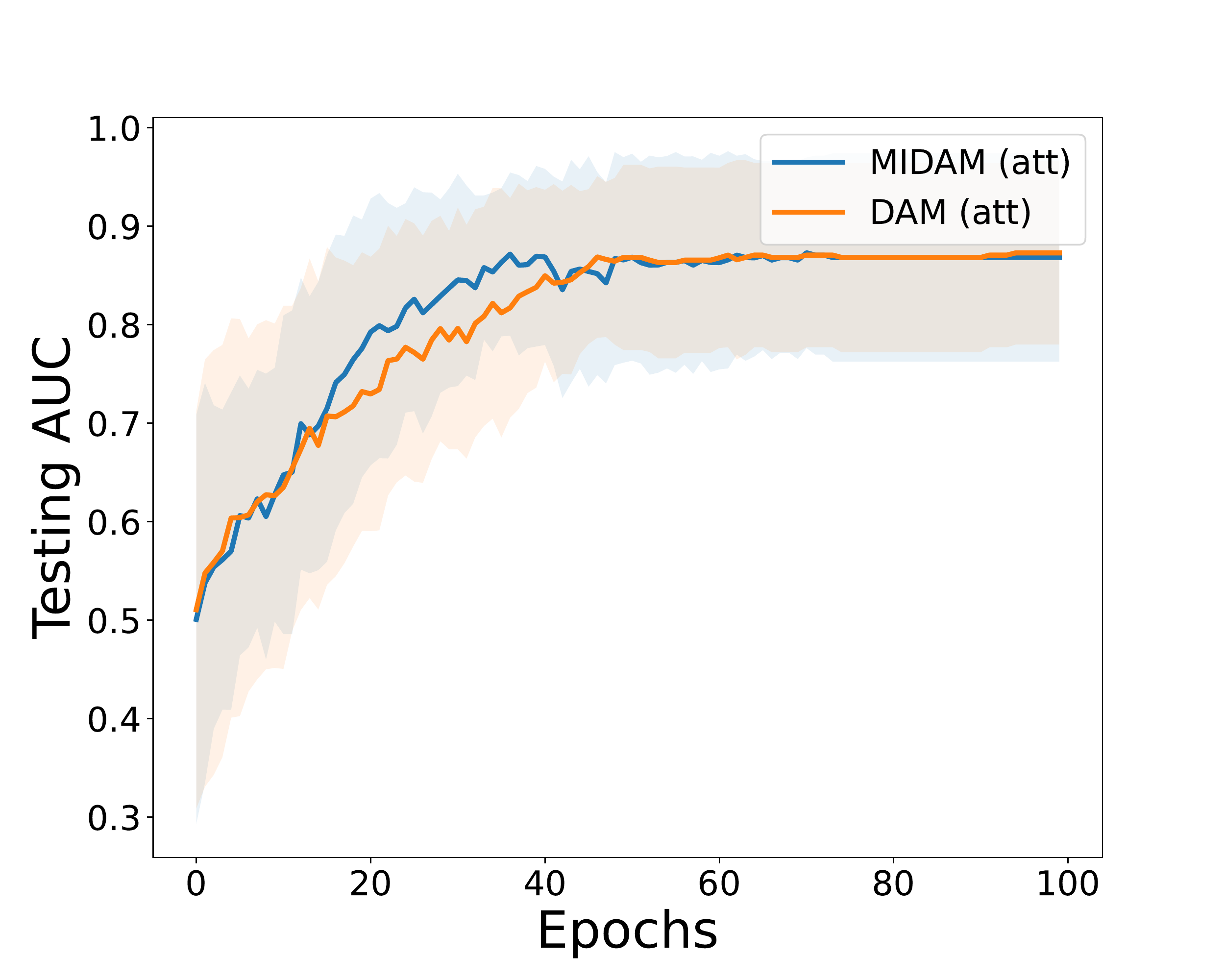}}

        \caption{Training and testing convergence of MIDAM (XX) vs DAM (XX). The top is for training AUC, and the bottom is for testing AUC.}\label{fig:generalization}
\end{figure*}

\begin{figure*}[t]
\vspace{-0.1 in}
    \centering
    \subfigure[smx, Breast Cancer]{\includegraphics[scale=0.16]{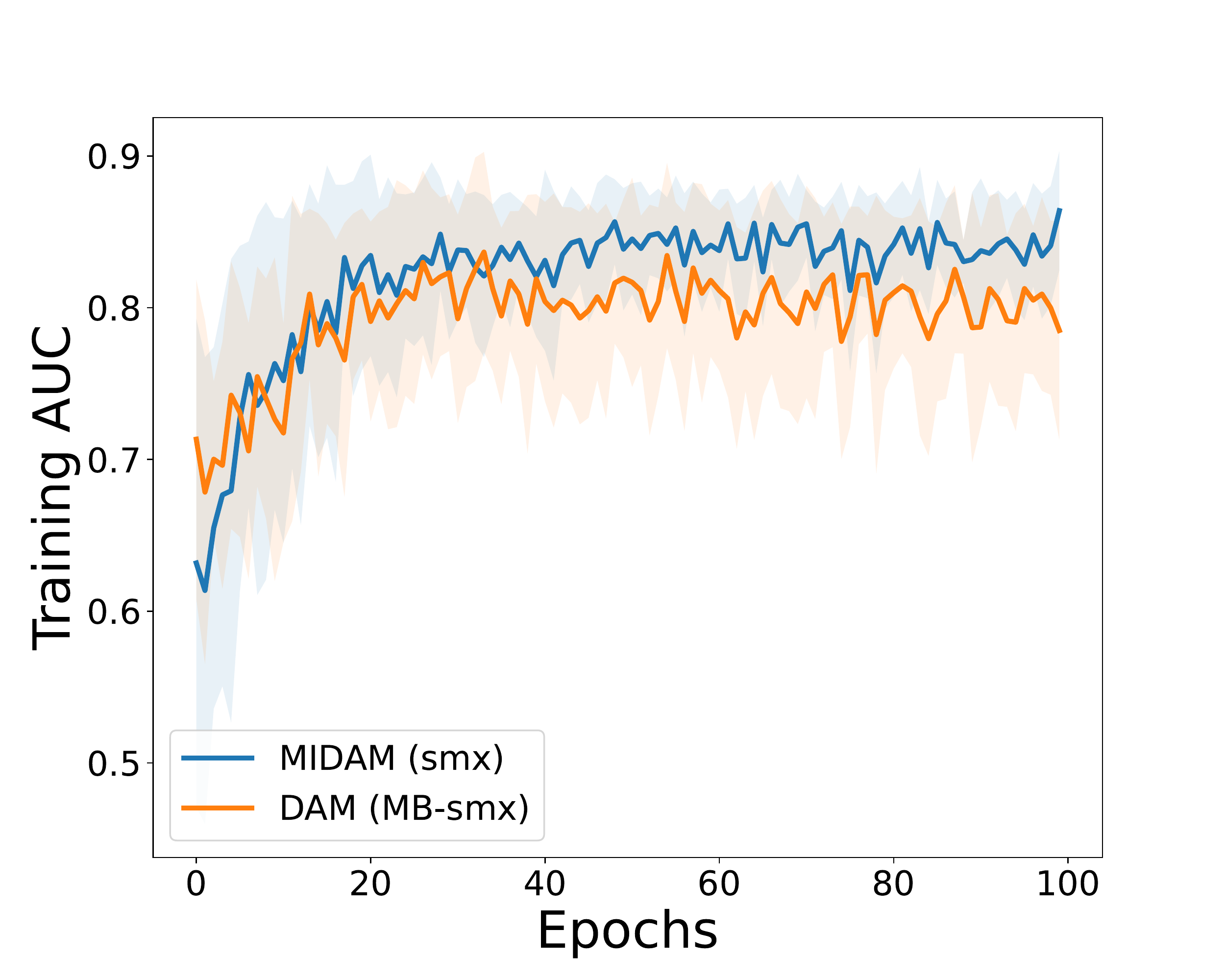}}
    \subfigure[att, Breast Cancer]{\includegraphics[scale=0.16]{Figures/BreastCancer-AUCMatt-s-Tr-AUC-Margin=01-cmp.pdf}}
    \subfigure[smx, Colon Ade.]{\includegraphics[scale=0.16]{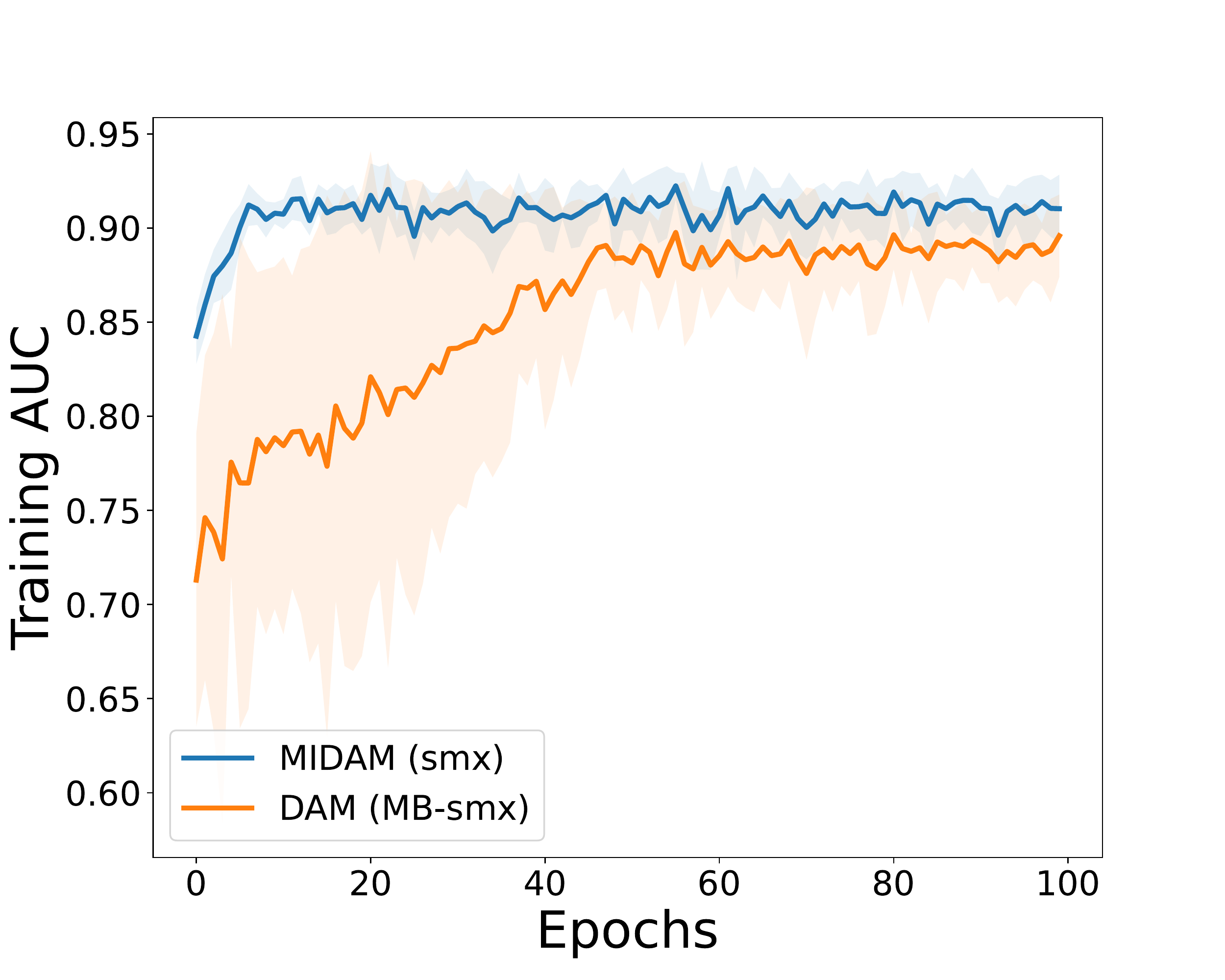}}
    \subfigure[att, Colon Ade.]{\includegraphics[scale=0.16]{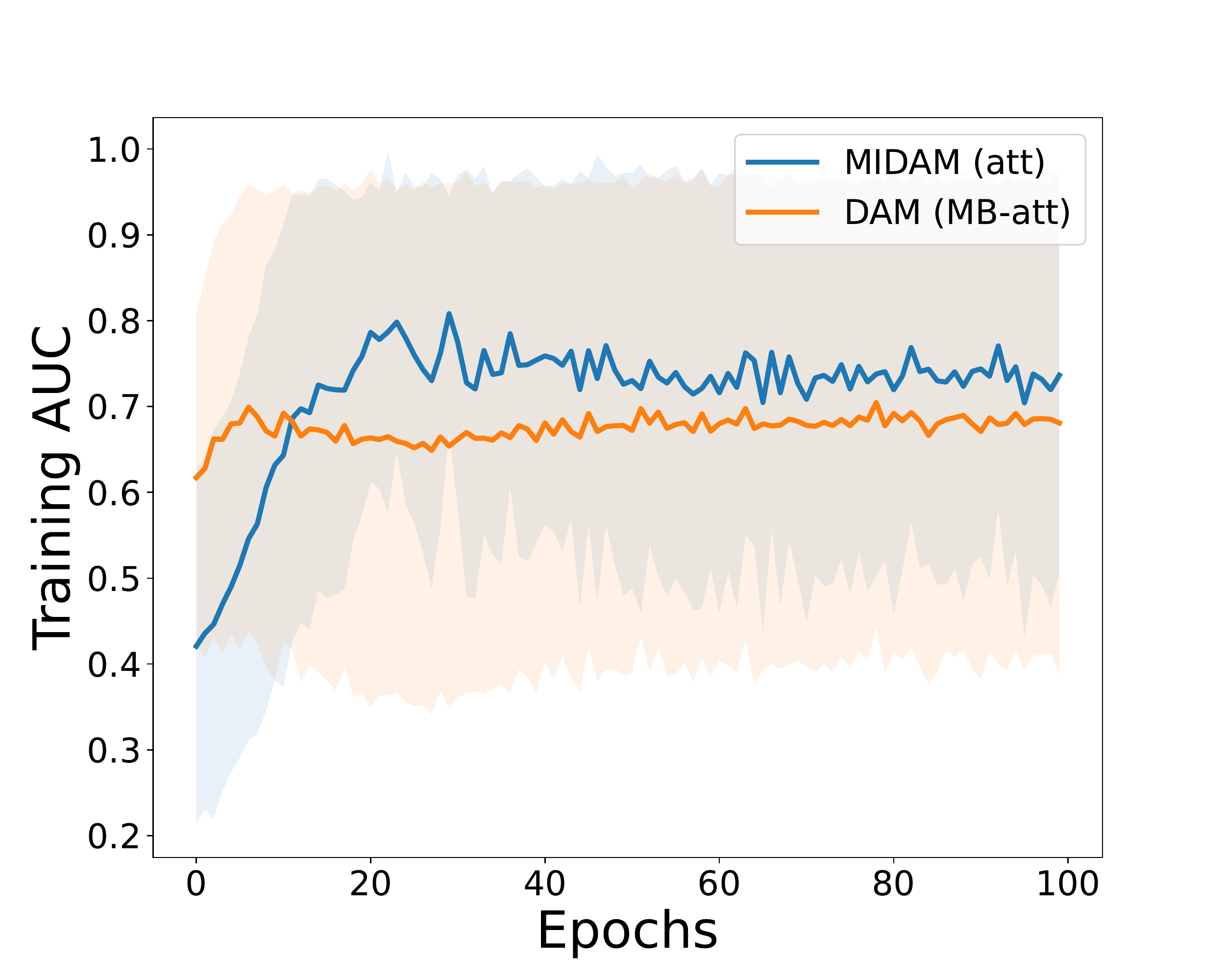}}

 \subfigure[smx, testing, Breast Cancer]{\includegraphics[scale=0.16]{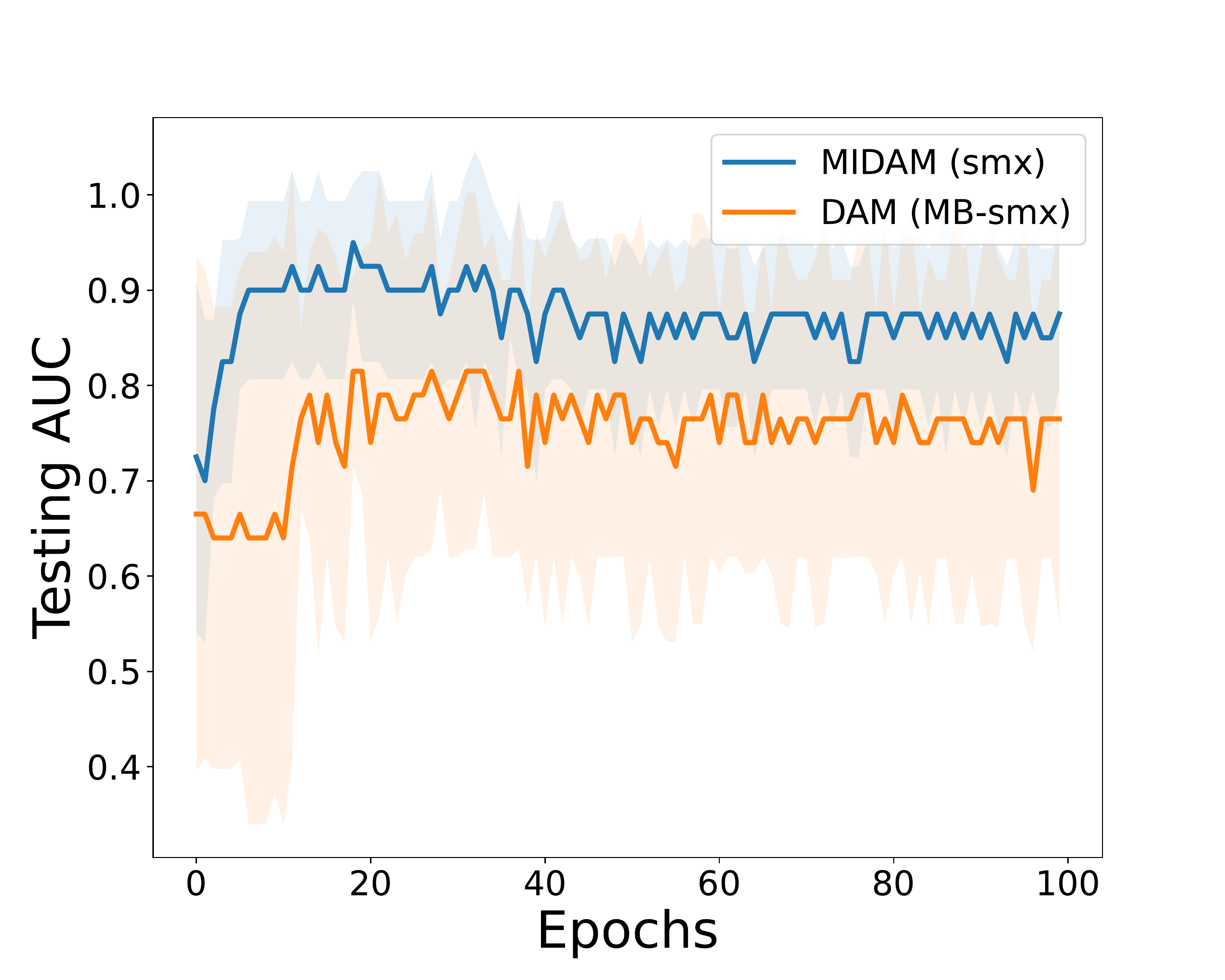}}
    \subfigure[att, testing, Breast Cancer]{\includegraphics[scale=0.16]{Figures/BreastCancer-AUCMatt-s-Test-AUC-Margin=01-cmp.pdf}}
    \subfigure[smx, testing, Colon Ade.]{\includegraphics[scale=0.16]{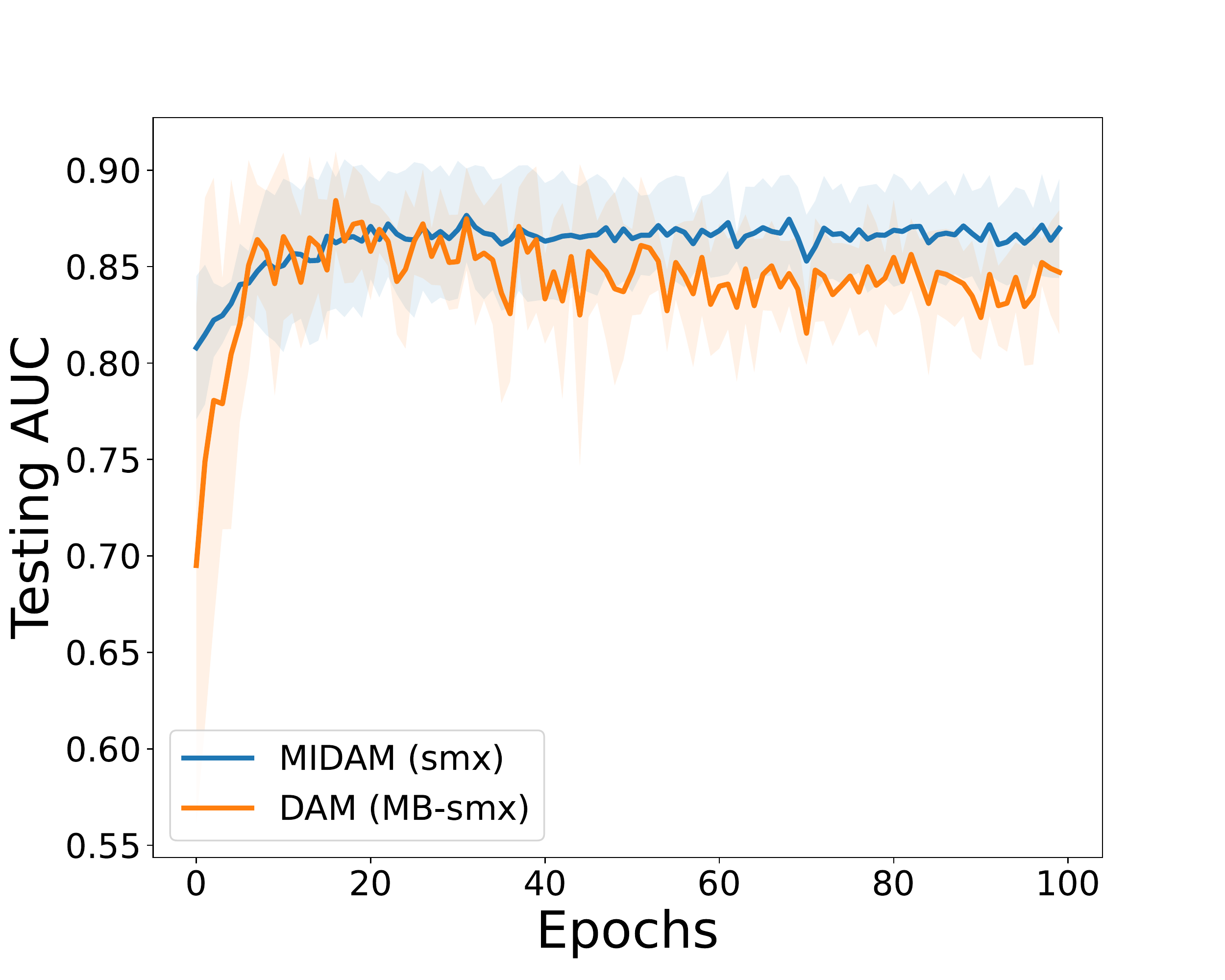}}
    \subfigure[att, testing, Colon Ade.]{\includegraphics[scale=0.16]{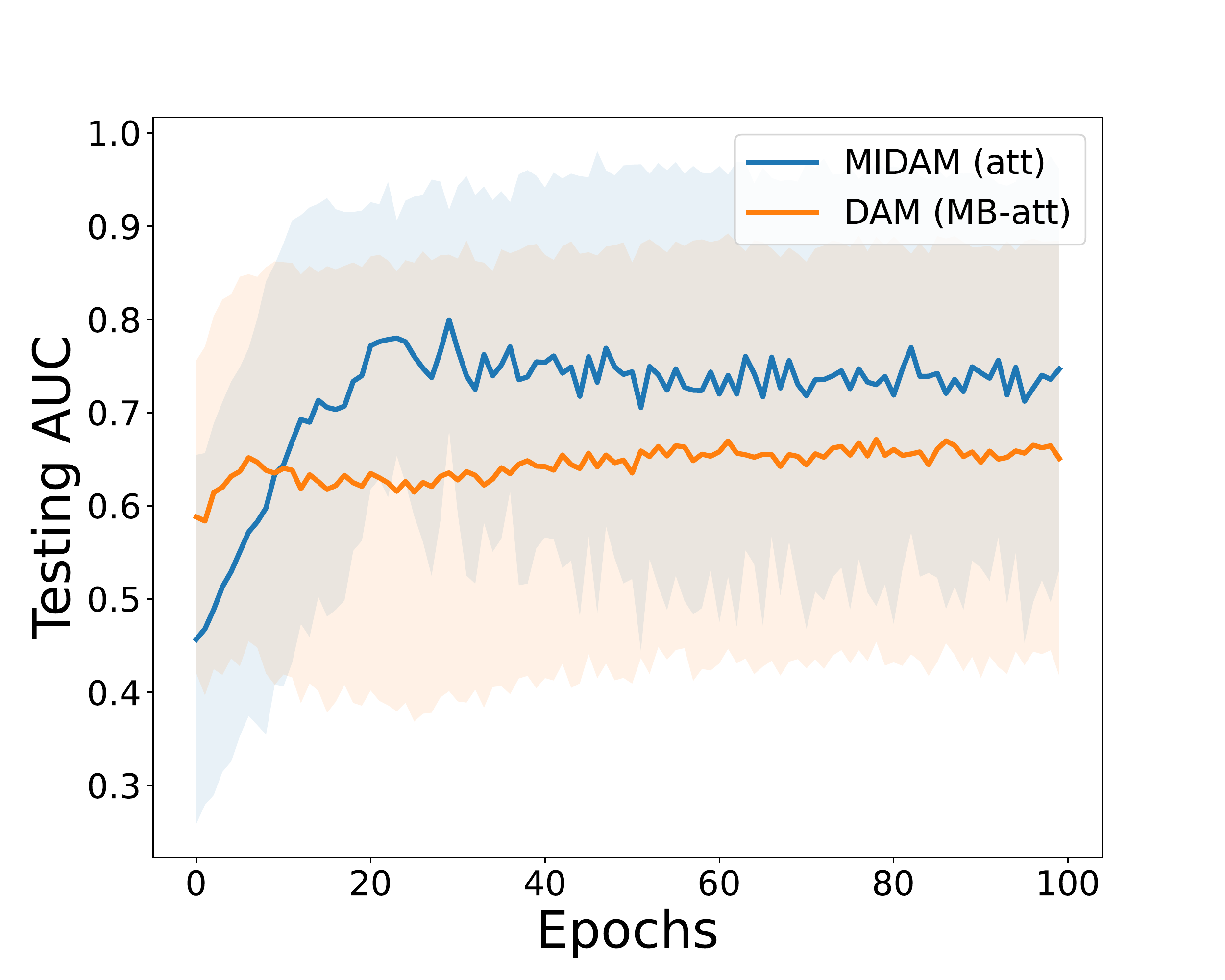}}
        \caption{Training and testing convergence of MIDAM (XX) vs DAM (MB-XX). The margin $c=0.1$ and learning rate is tuned in \{1e-1,1e-2,1e-3\}. The top is for training AUC, and the bottom is for testing AUC. }\label{fig:faster-cvg}
\end{figure*}

\begin{figure*}[t]
\vspace{-0.1 in}
    \centering
    \subfigure[MIDAM-smx, MUSK2]{\includegraphics[scale=0.16]{Figures/MUSK2-AUCMmax-s-Tr-AUC.pdf}}
    \subfigure[MIDAM-smx, Fox]{\includegraphics[scale=0.16]{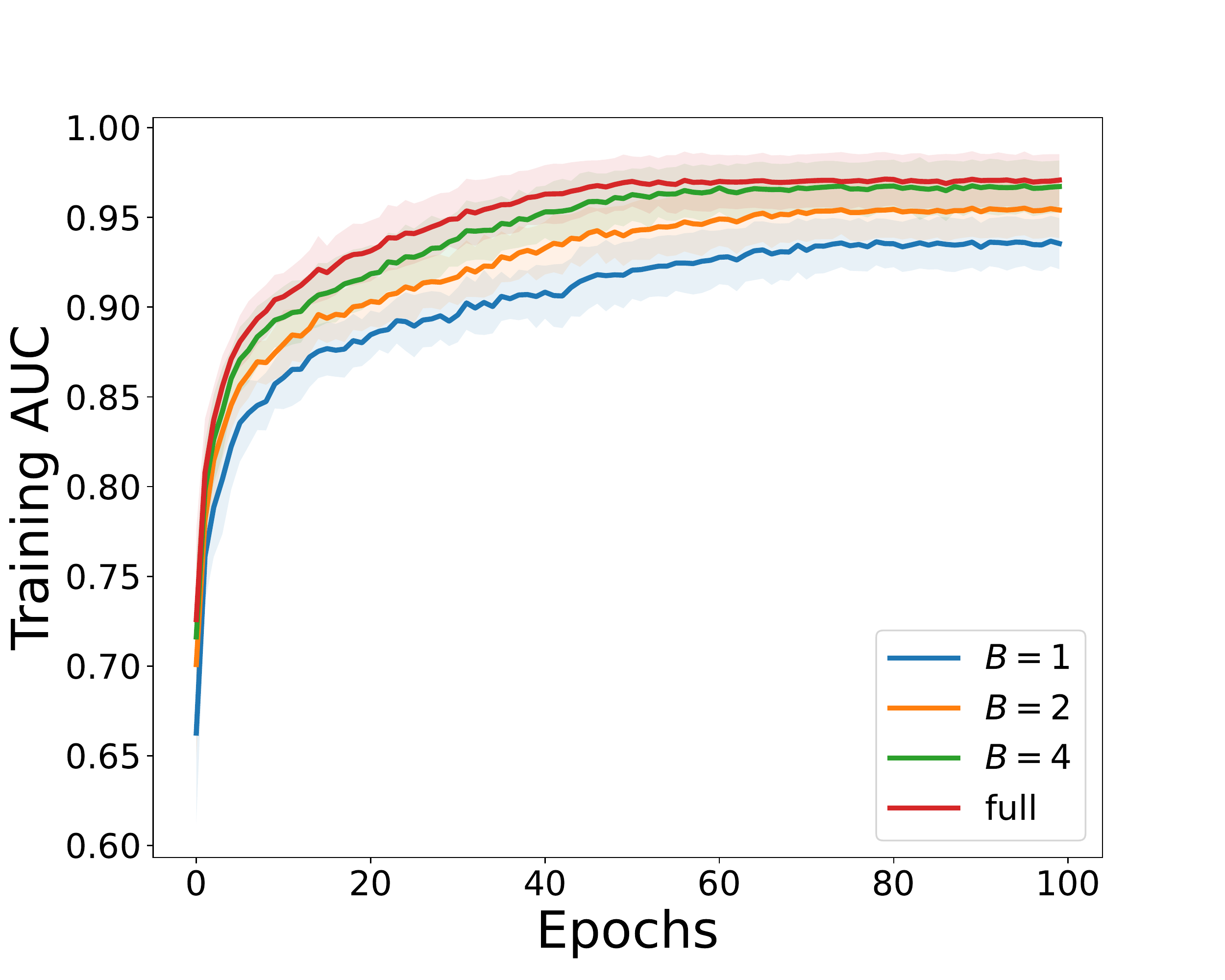}}
    \subfigure[MIDAM-smx, Tiger]{\includegraphics[scale=0.16]{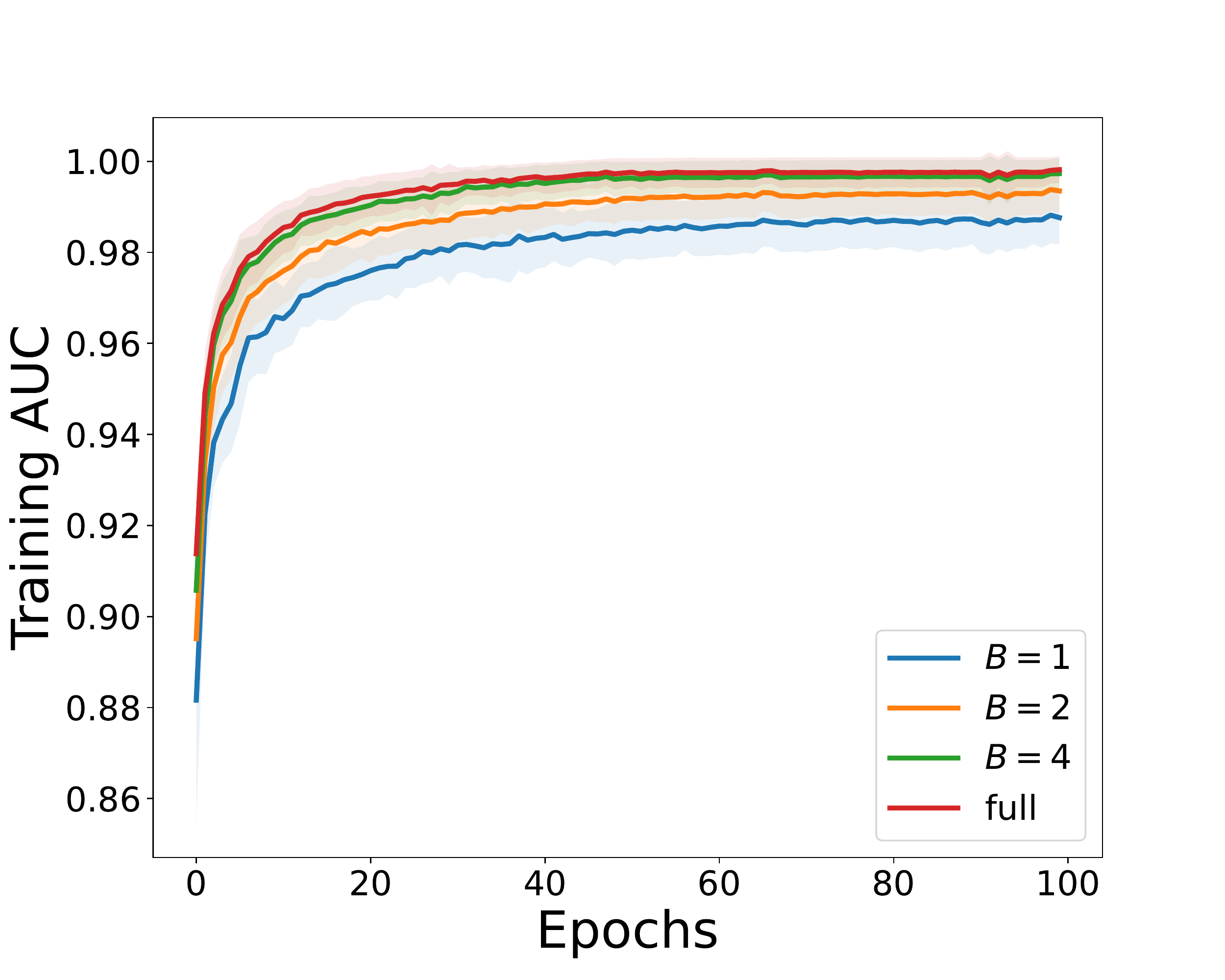}}
    \subfigure[MIDAM-smx, Elephant]{\includegraphics[scale=0.16]{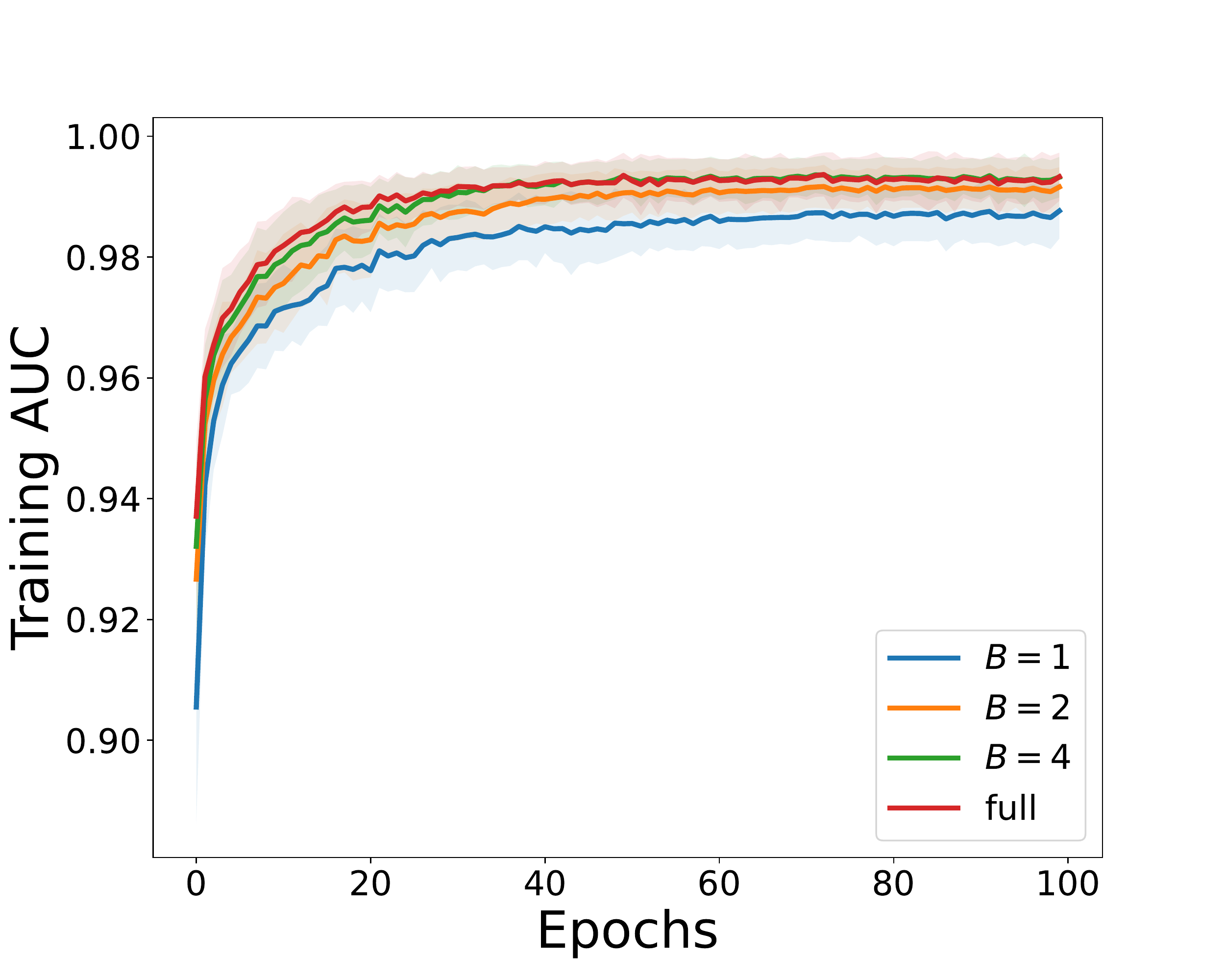}}

      \subfigure[MIDAM-att, MUSK2]{\includegraphics[scale=0.16]{Figures/MUSK2-AUCMatt-s-Tr-AUC.pdf}}
    \subfigure[MIDAM-att, Fox]{\includegraphics[scale=0.16]{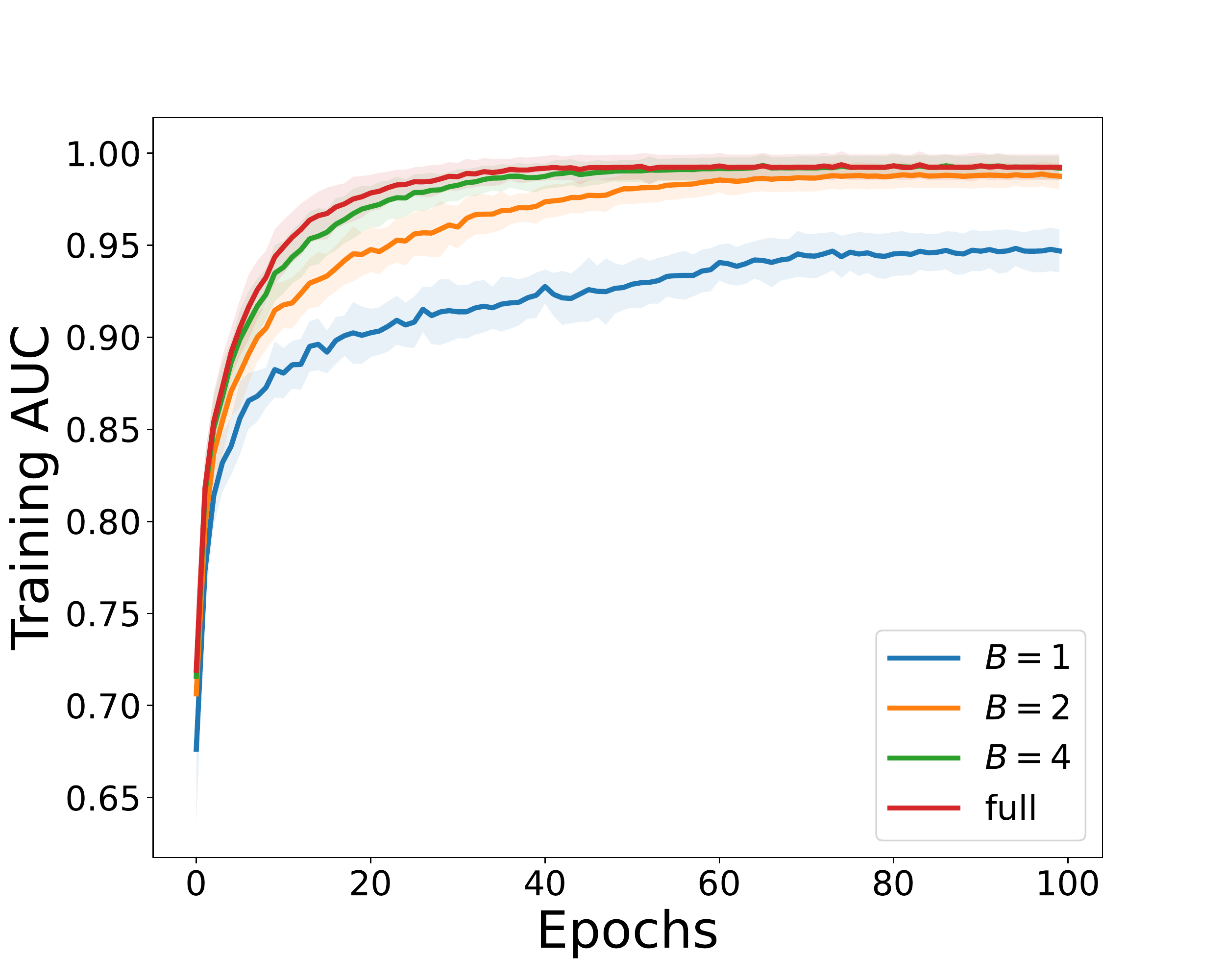}}
    \subfigure[MIDAM-att, Tiger]{\includegraphics[scale=0.16]{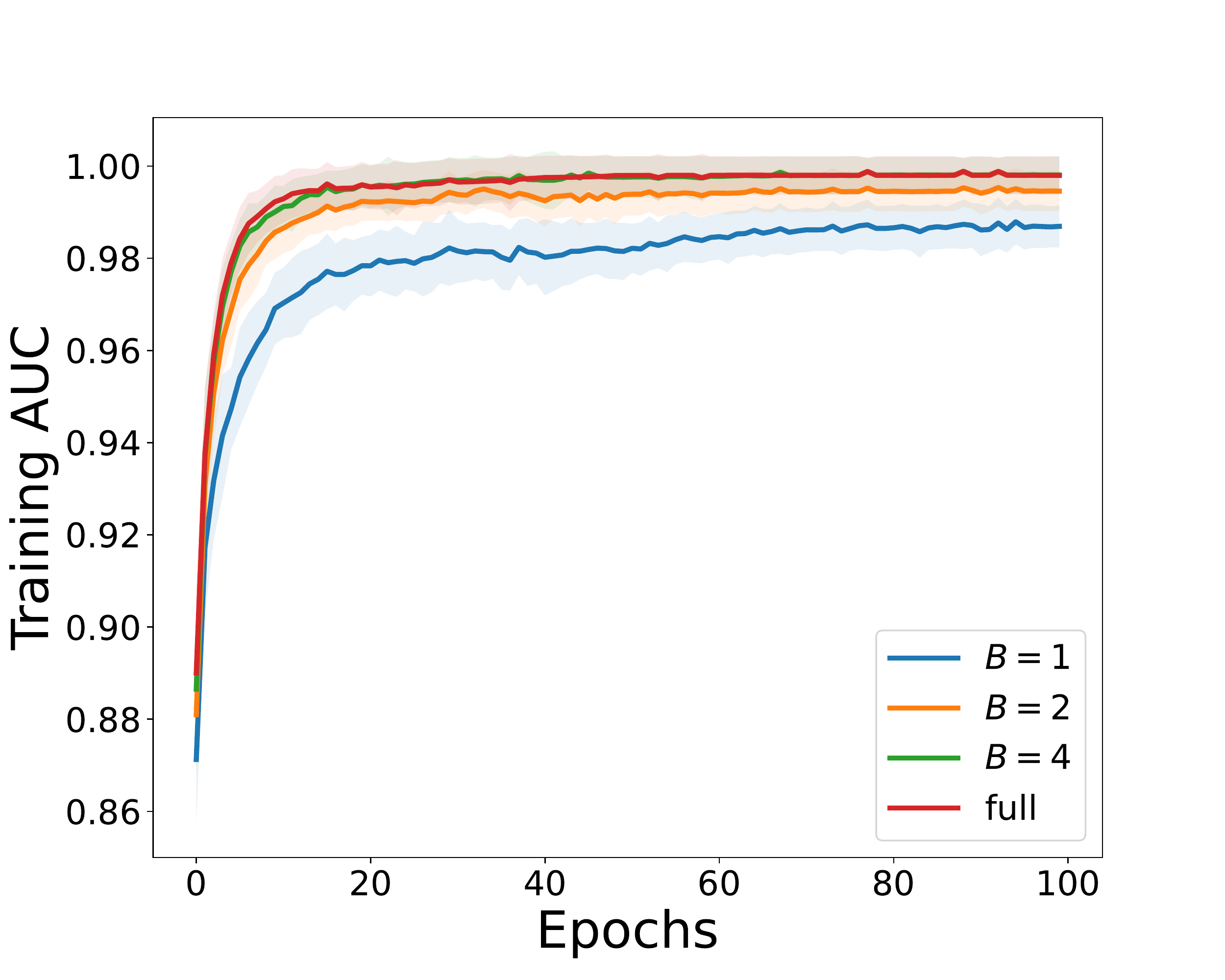}}
    \subfigure[MIDAM-att, Elephant]{\includegraphics[scale=0.16]{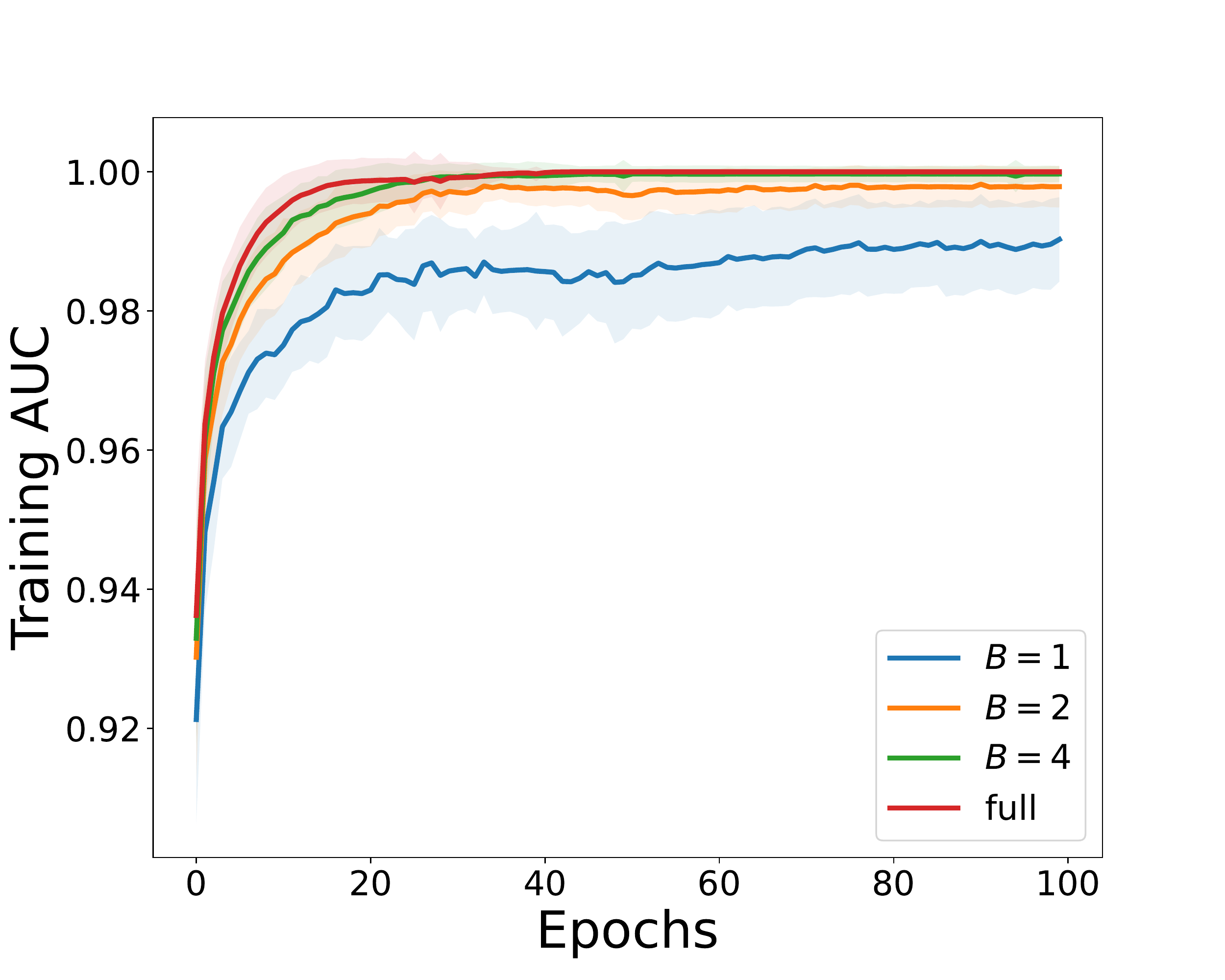}}
        \caption{Training with different instance-batch sizes} \label{fig:favor-larger-instance-batch-size}
\end{figure*}

\begin{figure*}[t]
    \centering
    \subfigure[smx, training, Breast Cancer]{\includegraphics[scale=0.16]{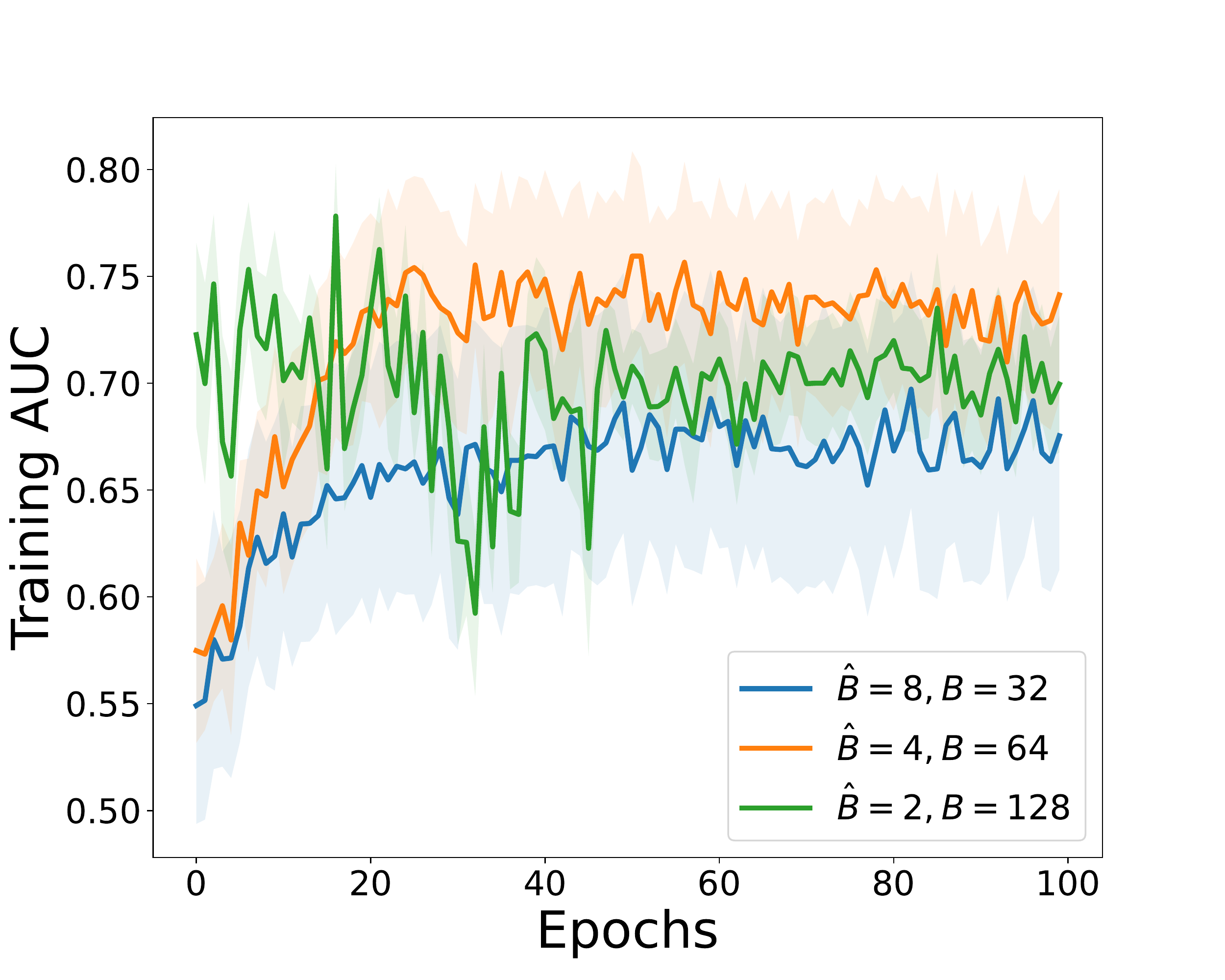}}
    \subfigure[att, training, Breast Cancer]{\includegraphics[scale=0.16]{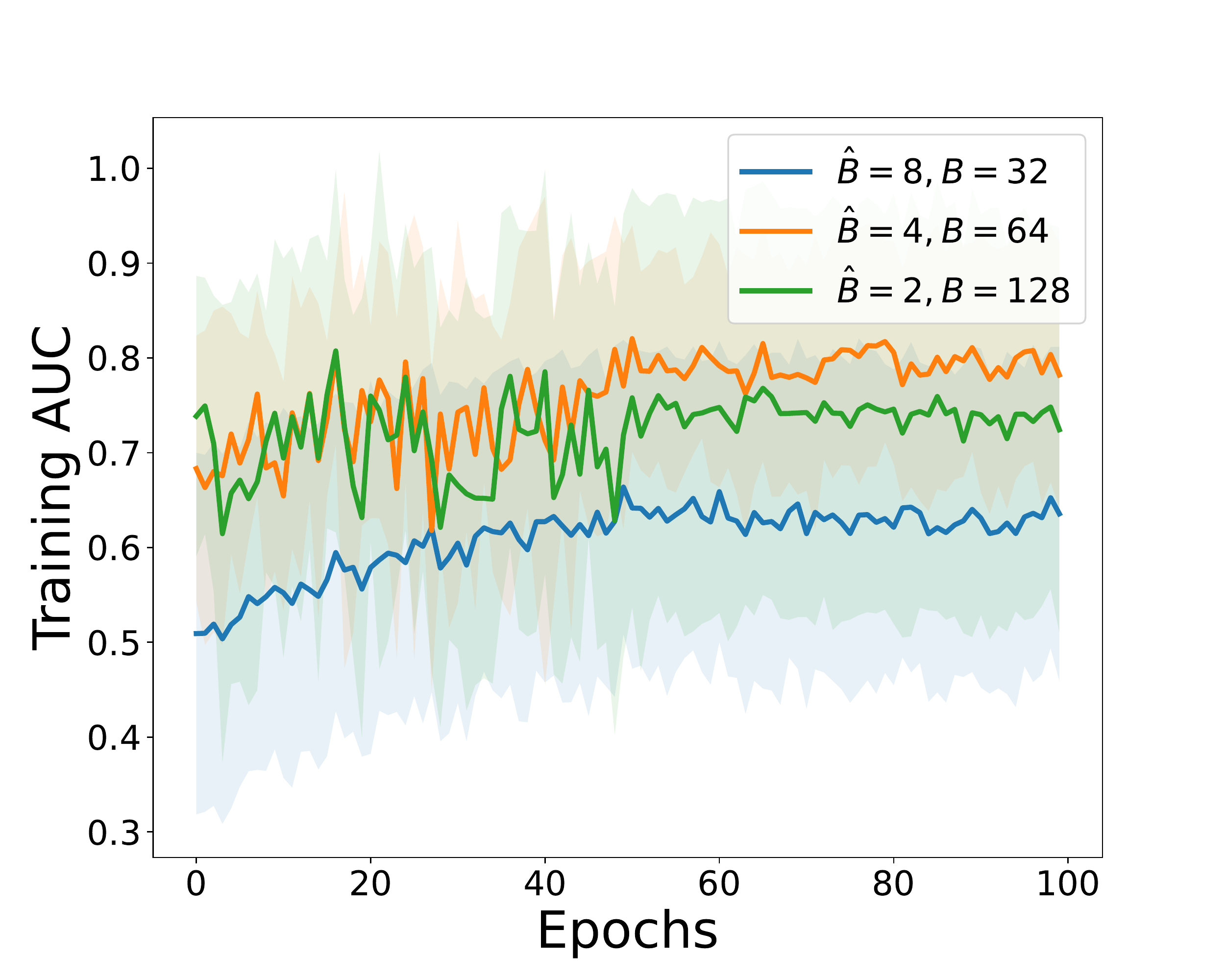}}
    \subfigure[smx, training, Colon Ade.]{\includegraphics[scale=0.16]{Figures/Colon-AUCMmax-s-Tr-AUC.pdf}}
    \subfigure[att, training, Colon Ade.]{\includegraphics[scale=0.16]{Figures/Colon-AUCMatt-s-Tr-AUC.pdf}}
 \subfigure[smx, testing, Breast Cancer]{\includegraphics[scale=0.16]{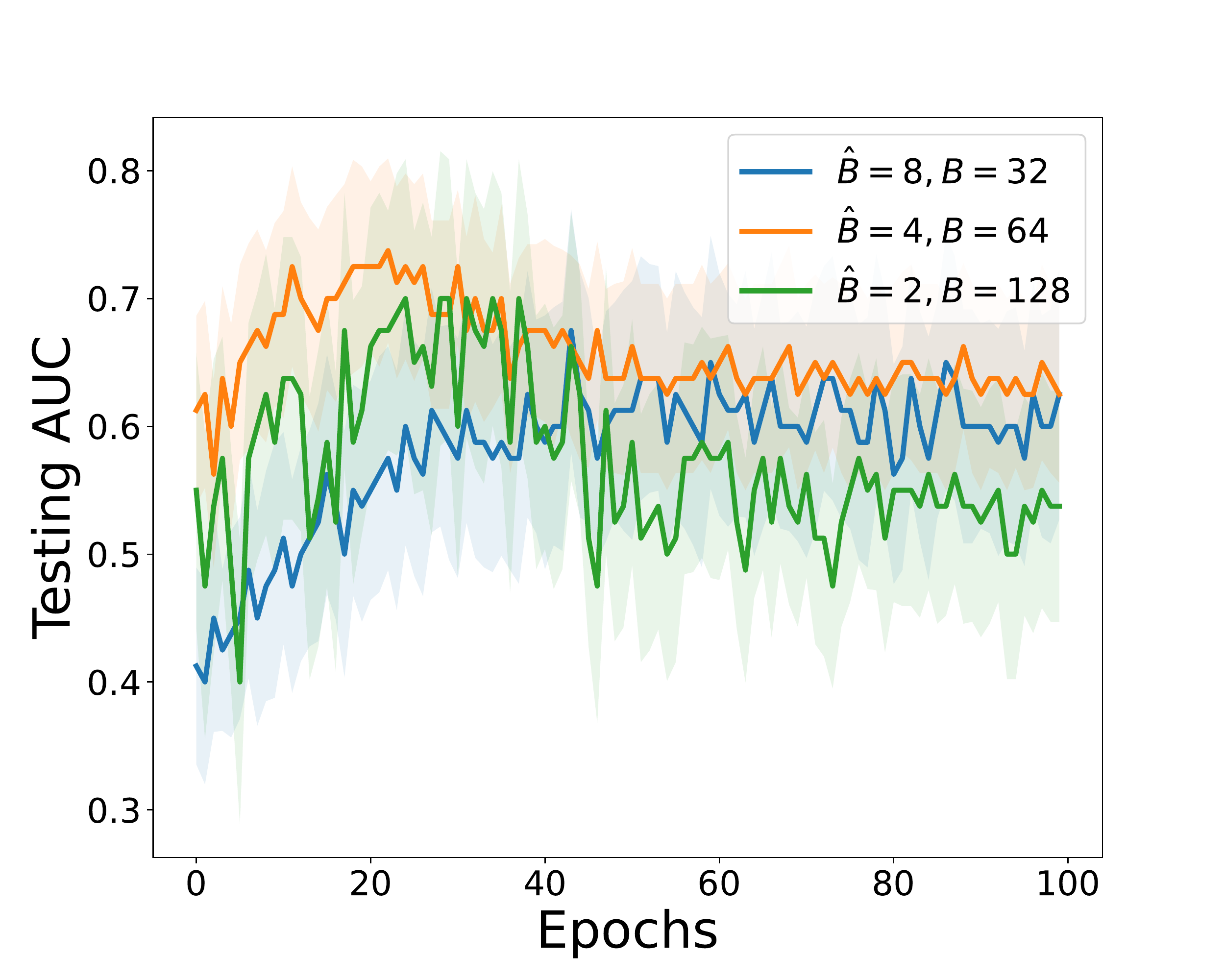}}
    \subfigure[att, testing, Breast Cancer]{\includegraphics[scale=0.16]{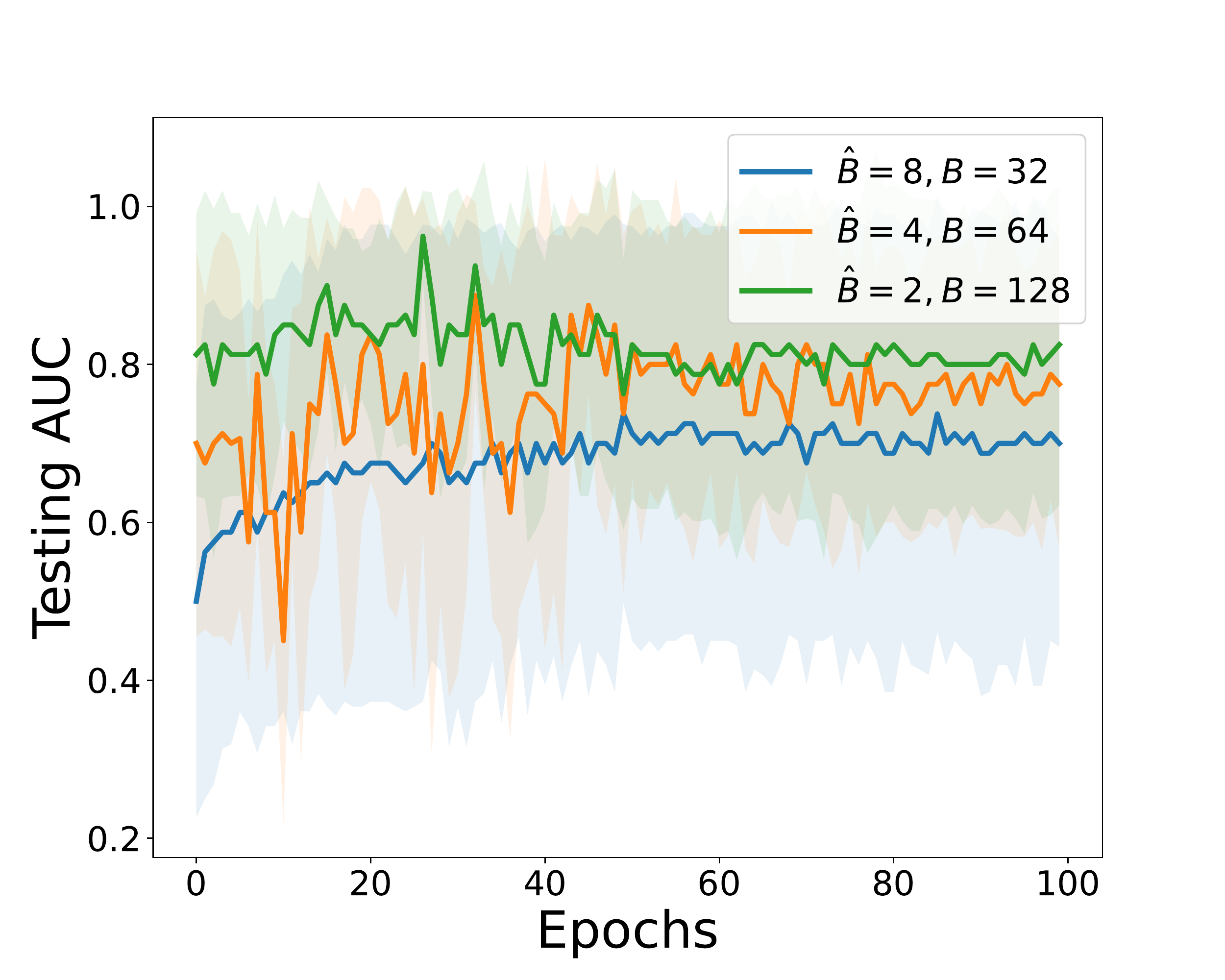}}
    \subfigure[smx, testing, Colon Ade.]{\includegraphics[scale=0.16]{Figures/Colon-AUCMmax-s-Test-AUC.pdf}}
    \subfigure[att, testing, Colon Ade.]{\includegraphics[scale=0.16]{Figures/Colon-AUCMatt-s-Test-AUC.pdf}}
        \caption{Ablation study for fixing the total budget per-iteration by varying bag-batch size $S_+=S_-=\hat B$ and instance-batch size $B$ for the proposed MIDAM approaches}\label{fig:ablation-budget}
\end{figure*}

\begin{figure*}[h]
    \centering
    \subfigure[Positive image]{\includegraphics[scale=0.35]{Figures/postive-img.pdf}}
    \subfigure[Prediction scores]{\includegraphics[scale=0.35]{Figures/postive-scores.pdf}}
    \subfigure[Attention weights]{\includegraphics[scale=0.35]{Figures/postive-wgts.pdf}}

\vskip -0.15in
    \subfigure[Negative image]{\includegraphics[scale=0.35]{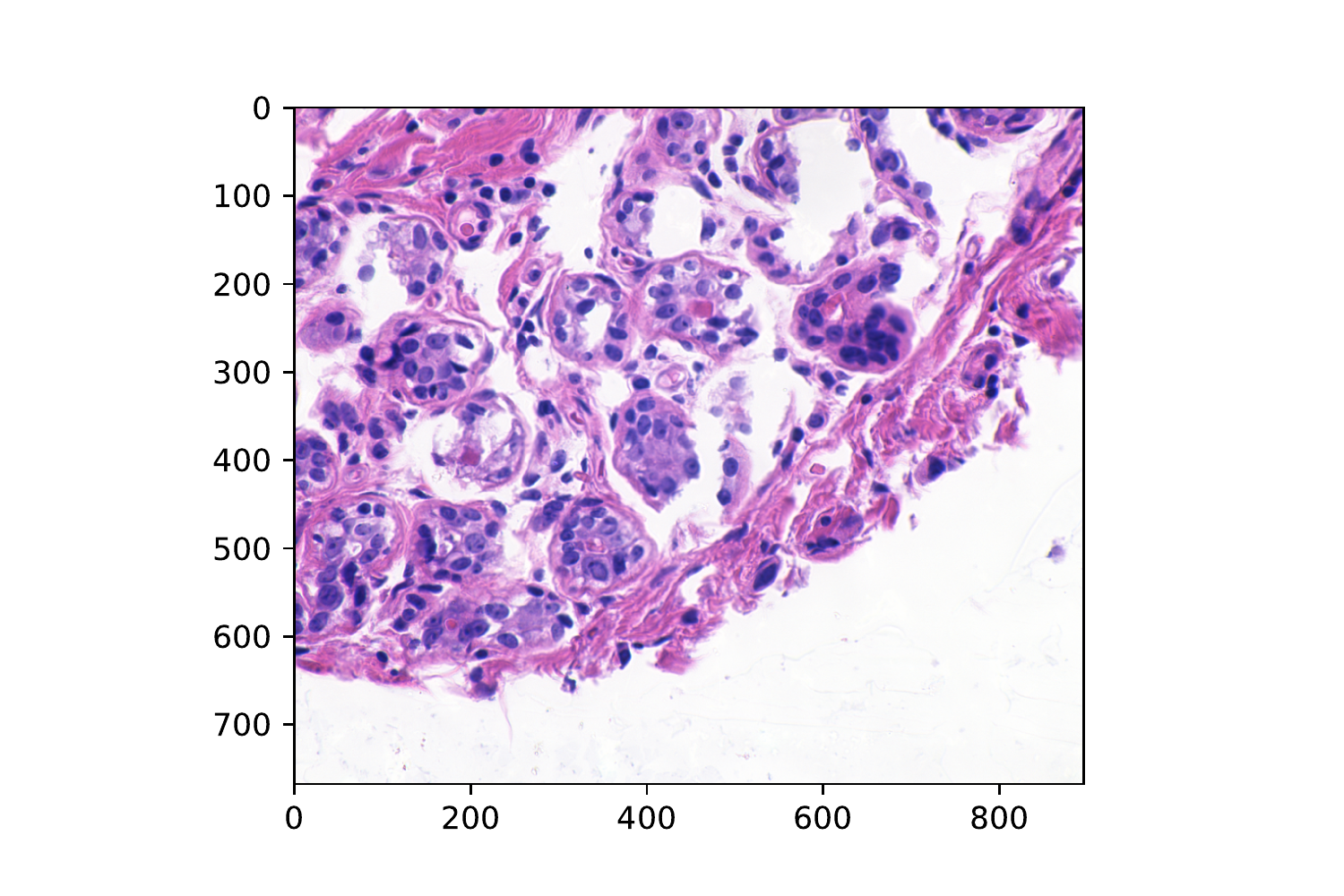}}
    \subfigure[Prediction scores]{\includegraphics[scale=0.35]{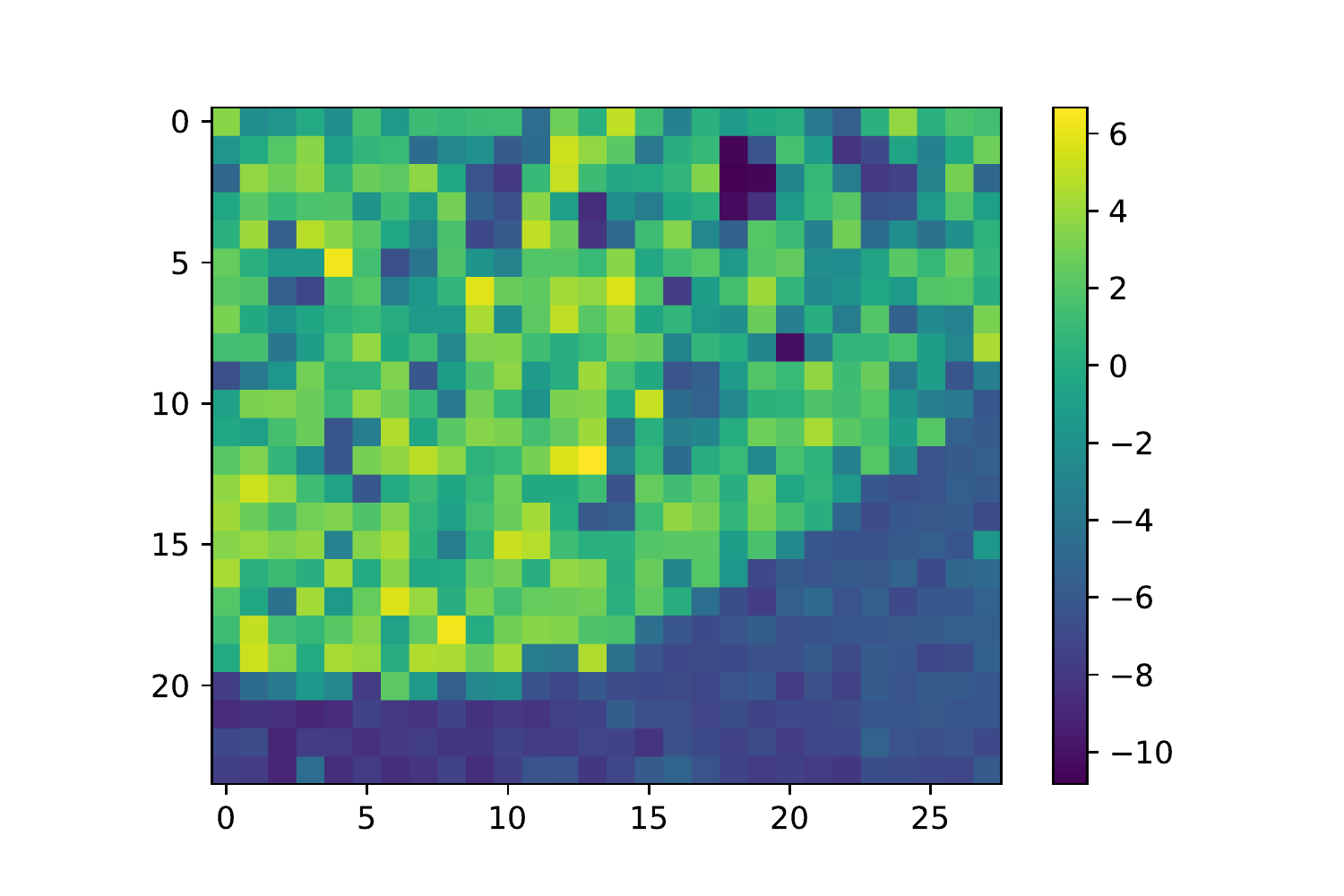}}
    \subfigure[Attention weights]{\includegraphics[scale=0.35]{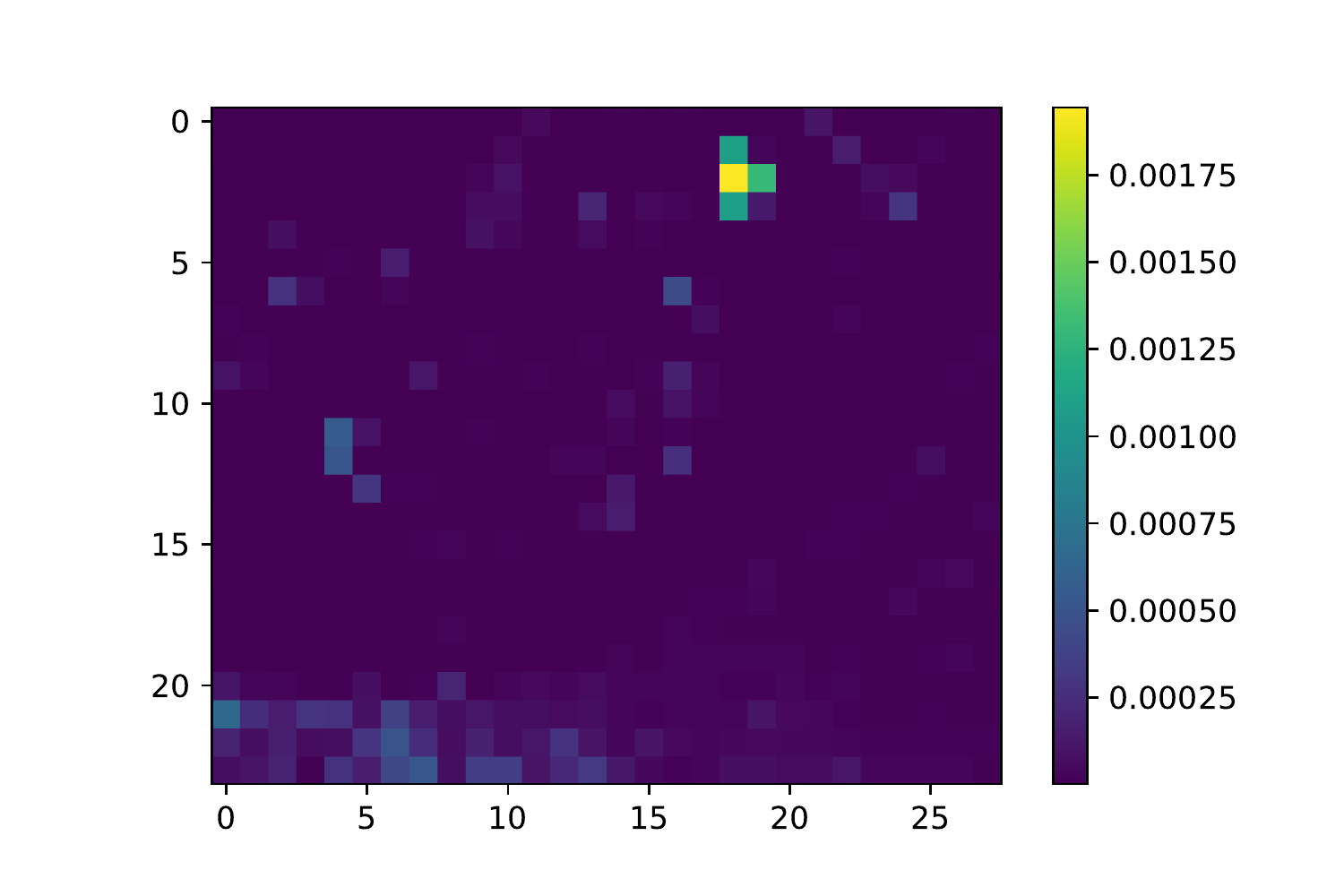}}
        \caption{Demonstrations for positive and negative examples for Breast Cancer dataset. Left: original image. Middle: prediction scores for each patch. Right: attention weights for each patch.} \label{fig:attention-elaborate}
\vspace{-0.2in}
\end{figure*}

\end{document}